\DeclarePairedDelimiter\floor{\lfloor}{\rfloor}
\newtheorem{claim}{Claim}
\newtheorem{lemma}{Lemma}
\newtheorem{Observation}{Observation}
\newtheorem{definition}{Definition}
\newtheorem{corollary}{Corollary}
\newtheorem{theorem}{Theorem}
\newcommand{\RandBasic}{\mathsf{BasicRandTimer}}
\newcommand{\RandImprovedTimer}{\mathsf{ImprovedRandTimer}}
\newcommand{\DetTimer}{\mathsf{DetTimer}}
\newcommand{\DetCounter}{\mathsf{DetCounter}}
\newcommand{\ApproxCounter}{\mathsf{ApproxCounter}}
\newcommand{\Last}{\mathsf{Last}}
\newcommand{\cN}{{\mathcal N}}
\newcommand{\poly}{\mathsf{poly}}
\newcommand{\sync}{\mathsf{sync}}
\newcommand{\async}{\mathsf{async}}
\newcommand{\inn}{\mathsf{in}}
\newcommand{\outt}{\mathsf{out}}
\newcommand{\delaynode}{\mathsf{delay}}
\newcommand{\dec}{\mathsf{dec}}
\newcommand{\local}{${\mathsf{LOCAL}}$}
\newcommand{\congest}{${\mathsf{CONGEST}}$}
\renewcommand{\paragraph}[1]{\vspace{0.15cm}\noindent {\bf #1}}
\let\Pr\relax
\DeclareMathOperator*{\Pr}{\mathrm{Pr}}
\DeclareMathOperator{\Bias}{b}
\DeclareMathOperator{\pot}{pot}
\begin{document}

\newcommand{\yael}[1]{\textcolor{magenta}{Yael: #1}}
\newcommand{\merav}[1]{\textcolor{blue}{Merav: #1}}

\title{Counting to Ten with Two Fingers:\\  Compressed Counting with Spiking Neurons}
\author{Yael Hitron \and Merav Parter\footnote{Department of Computer 
		Science and Applied Mathematics, Weizmann Institute of Science, Rehovot 76100,
		Israel. Emails:
		\texttt{\{yael.hitron,merav.parter\}@weizmann.ac.il}.}
	\thanks{Supported
		in part by the BSF-NSF grants.}}
\date{}

\maketitle

\begin{abstract}
We consider the task of measuring time with probabilistic threshold gates implemented by bio-inspired spiking neurons. 
In the model of \emph{spiking neural networks}, network evolves in discrete rounds, where in each round,  neurons fire in pulses in response to a sufficiently high membrane potential. This potential is induced by spikes from neighboring neurons that fired in the previous round, which can have either an excitatory or inhibitory effect.

Discovering the underlying mechanisms by which the brain perceives the duration of time is one of the largest open enigma in computational neuro-science. To gain a better algorithmic understanding onto these processes, we introduce the \emph{neural timer} problem. In this problem, one is given a time parameter $t$, an input neuron $x$, and an output neuron $y$. It is then required to design a minimum sized neural network (measured by the number of auxiliary neurons) in which every spike from $x$ in a given round $i$, makes the output $y$ fire for the subsequent $t$ consecutive rounds.

We first consider a deterministic implementation of a neural timer and show that $\Theta(\log t)$ (deterministic) threshold gates are both sufficient and necessary. This raised the question of whether randomness can be leveraged to reduce the number of neurons. We answer this question in the affirmative by considering neural timers with spiking neurons where the neuron $y$ is required to fire for $t$ consecutive rounds with probability at least $1-\delta$, and should stop firing after at most $2t$ rounds with probability $1-\delta$ for some input parameter $\delta \in (0,1)$. Our key result is a construction of a neural timer with $O(\log\log 1/\delta)$ spiking neurons. Interestingly, this construction uses only \emph{one} spiking neuron, while the remaining neurons can be deterministic threshold gates. We complement this construction with a matching lower bound of $\Omega(\min\{\log\log 1/\delta, \log t\})$ neurons. This provides the first separation between deterministic and randomized constructions in the setting of spiking neural networks.

Finally, we demonstrate the usefulness of compressed counting networks for \emph{synchronizing} neural networks. In the spirit of distributed synchronizers [Awerbuch-Peleg, FOCS'90], we provide a general transformation (or simulation) that can take any synchronized network solution and simulate it in an asynchronous setting (where edges have arbitrary response latencies) while incurring a small overhead w.r.t the number of neurons and computation time. 
\end{abstract}


\newpage
\vspace{-10pt}\section{Introduction}
Understanding the mechanisms by which brain experiences time is one of the major research objectives in neuroscience \cite{merchant2013neural,allman2014properties,finnerty2015time}. Humans measure time using a global clock based on standardized units of minutes, days and years. In contrast, the brain perceives time using specialized neural clocks that define their own time units. Living organisms have various other implementations of biological clocks, a notable example is the circadian clock that gets synchronized with the rhythms of a day.

In this paper we consider the algorithmic aspects of measuring \emph{time} in a simple yet biologically plausible model of \emph{stochastic spiking neural networks} (SNN) \cite{maass1996computational,maass1997networks}, in which neurons fire in discrete pulses, in response to a sufficiently high membrane potential.  This model is believed to capture the spiking behavior observed in real neural networks, and has recently received quite a lot of attention in the algorithmic community \cite{LynchMP16,LynchMP-bda17,LynchMP17,lynch2018basic,Legenstein0PV18,PapadimitriouV19,ChouCL19}. In contrast to the common approach in computational neuroscience and machine learning, the focus here is not on general computation ability or broad learning tasks, but rather on specific algorithmic implementation and analysis. 


The SNN network is represented by a directed weighted graph $G=(V,A,W)$, with a special set of neurons $X \subset V$ called \emph{inputs} that have no incoming edges, and a subset of \emph{output} neurons\footnote{In contrast to the definition of \emph{circuits}, we do allow output neurons to have outgoing edges and self loops. The requirement will be that the value of the output neurons converges over time to the desired solution.} $Y \subset V$.  The neurons in the network can be either deterministic threshold gates or probabilistic threshold gates. As observed in biological networks, and departing from many artificial network models, neurons are either strictly inhibitory (all outgoing edge weights are negative) or excitatory (all outgoing edge weights are positive). The network evolves in discrete, synchronous \emph{rounds} as a Markov chain, where the firing probability of every neuron in round $\tau$ depends on the firing status of its neighbors in the preceding round $\tau-1$. For probabilistic threshold gates this firing is modeled using a standard sigmoid function. Observe that an SNN network is in fact, a \emph{distributed network}, every neuron responds to the firing spikes of its \emph{neighbors}, while having no global information on the entire network. 

\textbf{Remark.} In the setting of SNN, unlike classical distributed algorithms (e.g., \local\ or \congest), the algorithm is fully specified by the \emph{structure} of the network. That is, for a given network, its dynamic is fully determined by the  model. Hence, the key complexity measure here is the size of the network measured by the number of auxiliary neurons\footnote{I.e., neurons that are not the input or the output neurons.}. For certain problems, we also care for the tradeoff between the size and the computation time.
%
%

\vspace{-10pt}
\subsection{Measuring Time with Spiking Neural Networks} \vspace{-5pt}
We consider the algorithmic challenges of measuring time using networks of threshold gates and probabilistic threshold gates. We introduce the \emph{neural timer} problem defined as follows:  \vspace{-3pt}
\begin{mdframed}[hidealllines=false]
Given an input neuron $x$, an output neuron $y$, and a time parameter $t$, it is required to design a small neural network such that any firing of $x$ in a given round invokes the firing of $y$ for exactly the next $t$ rounds. 
\end{mdframed}\vspace{-5pt}
In other words, it is required to design a succinct timer, activated by the firing of its input neuron, that alerts when exactly $t$ rounds have passed.
%
%

A trivial solution with $t$ auxiliary neurons can be obtained by taking a directed chain of length $t$ (Fig. \ref{fig:det-counters}): the head of the chain has an incoming edge from the input $x$, the output $y$ has incoming edges from the input $x$, and all the other $t$ neurons on the chain. All these neurons are simple $OR$-gates, they fire in round $\tau$ if at least one of their incoming neighbors fired in round $\tau-1$.  Starting with the firing of $x$ in round $0$, in each round $i$, exactly one neuron, namely the $i^{th}$ neuron on the chain fires, which makes $y$ keep on firing for exactly $t$ rounds until the chain fades out. In this basic solution, the network spends one neuron that counts $+1$ and dies. It is noteworthy that the neurons in our model are very simple, they do not have any memory, and thus cannot keep track of the firing history. They can only base their firing decisions on the firing of their neighbors in the \emph{previous} round. 

With such a minimal model of computation, it is therefore intriguing to ask how to beat this linear dependency (of network size) in the time parameter $t$. Can we count to ten using only two (memory-less) neurons?
We answer this question in the affirmative, and show that even with just simple deterministic threshold gates, we can measure time up to $t$ rounds using only $O(\log t)$ neurons. It is easy to see that this bound is tight when using deterministic neurons (even when allowing some approximation). The reason is that $o(\log t)$ neurons encode strictly less than $t$ distinct configurations, thus in a sequence of $t$ rounds, there must be a configuration that re-occurs, hence locking the system into a state in which $y$ fires forever.
\vspace{-5pt}
\begin{theorem}[Deterministic Timers]\label{lem:upper-bound-det}
For every input time parameter $t\in \mathbb{N}_{>0}$, (1) there exists a deterministic neural timer network $\mathcal{N}$ with $O(\log t)$ deterministic threshold gates, (2) any deterministic neural timer requires $\Omega(\log t)$ neurons.
\end{theorem}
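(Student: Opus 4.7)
The plan is to prove the two parts of the theorem separately: the upper bound via an explicit binary-counter construction, and the lower bound via a pigeonhole argument on the system's state space.

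For the upper bound, I would build a $k$-bit binary counter with $k = \lceil \log_2(t+1) \rceil$ bit-neurons $b_0, \dots, b_{k-1}$ plus a constant number of auxiliary gates per bit. The firing of $x$ at round $0$ serves as a reset that loads a chosen initial value (say, $t$ in binary) into the $b_i$'s; this is achieved by wiring $x$ with sufficiently large weights into each $b_i$ so the reset deterministically overrides any other incoming signal. From round $1$ on, the counter decrements by $1$ each round: bit $b_i$ flips exactly when all lower-order bits are currently $0$, which I would detect with a single AND-style threshold gate, and combine with $b_i$ via a constant-size gadget of threshold gates implementing XOR. The excitatory/inhibitory sign restriction is handled the standard way by introducing a dedicated inhibitory ``complement'' neuron $\bar b_i$ for each bit, still keeping the total neuron count at $O(k) = O(\log t)$. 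The output $y$ is realized as a simple OR of the $b_i$'s (a threshold gate with threshold $1$), so $y$ fires precisely while the counter value is nonzero, i.e., for exactly $t$ consecutive rounds after the initial spike of $x$.

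For the lower bound, let $\mathcal{N}$ be any deterministic timer with $s$ non-input neurons (auxiliary plus $y$). Since $x$ has no incoming edges, it fires only when externally triggered; in the timer scenario we consider a single triggering spike at round $0$, after which the evolution of $\mathcal{N}$ is fully determined by the firing pattern of its $s$ non-input neurons in the preceding round. There are at most $2^s$ such patterns. Assume for contradiction that $t > 2^s + 1$. Then by pigeonhole, among the configurations at rounds $1, 2, \dots, t+1$ two rounds $\tau_1 < \tau_2$ share the same configuration. By determinism, the trajectory of $\mathcal{N}$ from round $\tau_1$ onward coincides with that from round $\tau_2$ onward, so the system is periodic with period $p := \tau_2 - \tau_1 \leq t$ on the interval $[\tau_1, \infty)$. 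Pick any round $\tau^* \in [\tau_1, t]$; since $\tau^* \leq t$, the specification requires $y$ to fire at round $\tau^*$, and by periodicity $y$ then also fires at round $\tau^* + p > t$, contradicting the requirement that $y$ stops firing after round $t$. Hence $s \geq \log_2 t - O(1) = \Omega(\log t)$.

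I expect the main obstacle to lie on the upper-bound side, in reconciling the binary-counter update logic with the model's restriction that each neuron's outgoing edges are all of the same sign. Concretely, computing ``$b_i$ XOR carry'' without access to negations forces one to maintain and keep in sync an inhibitory complement of each bit, and to make sure the per-bit gadget implementing the update executes within a single round without introducing spurious transient configurations. Once this is done cleanly, the neuron budget remains $O(\log t)$ and correctness follows directly from the counter invariant. The lower bound itself is a clean state-space argument; the only care needed is to justify formally that $x$ does not spontaneously re-fire after round $0$, so that the system truly becomes autonomous and the pigeonhole argument applies.
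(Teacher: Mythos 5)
Your lower bound is essentially the paper's own pigeonhole-and-periodicity argument, and it is correct.

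The upper bound has a genuine gap. You propose a binary counter where the $i$-th bit updates each round to $b_i \oplus \mathrm{borrow}_i$, realized by ``a constant-size gadget of threshold gates implementing XOR'' that ``executes within a single round.'' No such gadget exists: XOR of two bits is not a threshold function, and this remains true even if both complements $\bar b_i$ and $\overline{\mathrm{borrow}}_i$ are supplied as additional inputs; the four admissible input patterns, two mapping to $1$ and two to $0$, are still not linearly separable (summing the two acceptance inequalities and the two rejection inequalities yields a contradiction). Any threshold-gate circuit computing XOR has depth at least $2$, and in the SNN dynamics a depth-$d$ sub-circuit incurs $d$ rounds of latency, so your register cannot hold the binary representation of $t-\tau$ updated once per round. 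You also gloss over the initialization: the binary representation of $t$ has both $1$-bits and $0$-bits, so a single input neuron $x$ cannot force the load pattern without having both positive and negative outgoing weights, contradicting the strict excitatory/inhibitory constraint. (The paper never needs to load a nontrivial pattern: its cascade starts from the all-zero state and $x$ merely triggers the first layer.)

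The paper sidesteps the XOR obstacle entirely by not maintaining a binary count in the state. Its construction is a cascade of $\Theta(\log t)$ frequency-divider layers: layer $i$ consists of two counting neurons and one inhibitory reset neuron whose firing pattern cycles with period roughly $2^i$, driven by a spike from layer $i-1$ rather than by a per-bit XOR, so each per-round update is a legitimate single-round threshold computation. The resulting $\Theta(\log t)$ ripple delay is absorbed by building the network for a shifted parameter $\hat t$ with $\hat t + \log \hat t = t$. Your design could plausibly be repaired by slowing the counter to one decrement every $c\geq 2$ rounds and compensating in the initial value and the wiring to $y$ for the resulting $O(1)$ phase discrepancy, but the per-round single-gate update you assert is false as stated.
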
\vspace{-5pt}
This timer can be easily adapted to the related problem of \emph{counting}, where the network should output the number of spikes (by the input $x$) within a time window of $t$ rounds.

\paragraph{Does Randomness Help in Time Estimation?}
Neural computation in general, and neural spike responses in particular, are inherently stochastic~\cite{lindner2009some}. One of our broader scope agenda is to understand the power and limitations of randomness in neural networks. Does neural computation become \emph{easier} or \emph{harder} due to the stochastic behavior of the neurons? 
%

We define a randomized version of the neural timer problem that allows some slackness both in the approximation of the time, as well as allowing a small error probability. For a given error probability $\delta \in (0,1)$, the output $y$ should fire for at least $t$ rounds, and must stop firing after at most $2t$ rounds\footnote{Taking $2t$ is arbitrary here, and any other constant greater than one would work as well.} with probability at least $1-\delta$.  
It turns out that this randomized variant leads to a considerably improved solution for $\delta=2^{-O(t)}$:
\vspace{-5pt}
\begin{theorem}[Upper Bound for Randomized Timers]\label{lem:upper-bound-rand}
For every time parameter $t \in \mathbb{N}_{>0}$, and error probability $\delta \in (0,1)$,  there exists a probabilistic neural timer network $\mathcal{N}$ with  $O(\min\{\log\log 1/\delta, \log t\})$ deterministic threshold gates plus \emph{additional} random spiking neuron. 
\end{theorem}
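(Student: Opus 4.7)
The plan is to build a network with a single probabilistic neuron $r$ that flips a biased coin with success probability $p = \Theta(k/t)$ each round, together with a $k$-bit deterministic binary counter (using $O(\log k)$ threshold gates) that tallies the total firings of $r$. The output $y$ is set to fire exactly when the counter has not yet reached its threshold $k$; when the counter hits $k$, $y$ shuts off. Taking $k = \Theta(\log 1/\delta)$ yields the claimed size $O(\log\log 1/\delta)$ for the deterministic part, plus the single probabilistic neuron $r$.

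Correctness then follows from a multiplicative Chernoff bound on the $\text{Binomial}(t,p)$ number of firings: with $p = 3k/(4t)$, the expected count is $3k/4$ in $t$ rounds and $3k/2$ in $2t$ rounds, so setting $k = c\log(1/\delta)$ for a sufficiently large constant $c$ gives both $\Pr[\text{at least } k \text{ firings in the first } t \text{ rounds}] \le \delta$ and $\Pr[\text{fewer than } k \text{ firings in the first } 2t \text{ rounds}] \le \delta$. A union bound then yields that $y$ is alive throughout rounds $1,\ldots,t$ and shuts off by round $2t$ with probability at least $1-O(\delta)$, which is absorbed into $\delta$ by adjusting $c$. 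The $\min$ with $\log t$ in the theorem is handled by falling back to the deterministic construction of Theorem~\ref{lem:upper-bound-det} whenever $\log\log 1/\delta > \log t$.

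The main technical obstacle is designing the $O(\log k)$-bit counter so that every firing of $r$ correctly increments it, even when firings are clustered in consecutive rounds. A naive binary counter relies on carry propagation across $\Theta(\log k)$ rounds and would miscount such bursts. I plan to instead implement a parallel-update counter in which bit $b_i$ has direct edges from $r$ and from all lower bits $b_0,\ldots,b_{i-1}$, and realizes, in a single round, the XOR update $b_i^{\text{new}} = b_i \oplus (r \wedge b_0 \wedge \cdots \wedge b_{i-1})$ via a constant number of excitatory and inhibitory threshold gates per bit. This keeps the total neuron count at $O(\log k)$ while guaranteeing that consecutive firings of $r$ are both registered.

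The remaining model-specific pieces --- triggering the whole system on a spike of $x$ (which resets the counter and turns $y$ on), gating $r$ so that its coin-flip is active only while $y$ is alive, and calibrating the bias and weights of $r$ so that its sigmoid firing probability equals $p$ when active --- are direct adaptations of the techniques already used in Theorem~\ref{lem:upper-bound-det}, so I expect only routine verification there.
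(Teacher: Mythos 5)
Your proposal takes a genuinely different and, at the high level, cleaner route than the paper. The paper's construction simulates each \emph{round} of the warm-up $\Theta(\log 1/\delta)$-neuron timer $\RandBasic$ as a \emph{phase} of $\Theta(\log 1/\delta)$ real rounds, coordinating a single spiking neuron $a^*$ with interacting Phase-Counter, Global-Phase-Timer, and Internal-Phase-Timer modules so that the per-phase firing count of $a^*$ reproduces (in distribution) the per-round number of alive neurons in $\RandBasic$; correctness is then inherited via a coupling argument. You instead give $r$ a fixed unconditional bias $p=\Theta(\log(1/\delta)/t)$, count its heads, and shut $y$ off at $k=\Theta(\log 1/\delta)$ heads, with correctness falling out of one Chernoff bound on a $\mathrm{Binomial}(\cdot,p)$ variable. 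Your tail estimates are right, as is the fallback to Theorem~\ref{lem:upper-bound-det} when $\log\log 1/\delta > \log t$, and the peripheral model issues you flag (resetting on a spike of $x$, gating $r$ by $y$, setting the sigmoid bias so that $r$ fires with probability $p$ while active and with probability at most $\delta/\poly(t)$ when gated off) are indeed routine.

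The gap is the counter. The ``parallel-update'' counter you propose, with $b_i^{\text{new}} = b_i \oplus (r \wedge b_0 \wedge \cdots \wedge b_{i-1})$ realized in a \emph{single} round, cannot exist: already for the least significant bit the update is $b_0^{\text{new}} = r \oplus b_0$, and two-input XOR is the canonical non-threshold function, so no single threshold gate (indeed no one-round threshold circuit of any constant size feeding $b_0$) can compute it. The usual two-layer threshold implementation of XOR costs two rounds, at which point the clustering issue you were trying to sidestep reappears. Moreover, the premise that a pipelined binary counter ``would miscount bursts'' is not correct: Lemma~\ref{lem:det-counter}'s $\DetCounter$ explicitly handles consecutive firings of its input (see Claim~\ref{clm:first-layer} in Appendix~\ref{sec:mod-timer}, which treats the case $\tau'=\tau-1$); carries propagate correctly, and the only cost is an $O(\log r_\tau)$-round settling delay before the output vector reflects the true count. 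So the right fix is simply to feed $r$ into $\DetCounter(\Theta(k))$, note that the $O(\log k)=O(\log\log 1/\delta)$-round lag on the top bit is negligible against the $[t,2t]$ slack (and can be absorbed by a small rescaling of $p$), and add a self-looped latch that keeps $y$ off once the top bit has fired. With that substitution your construction goes through and matches the size bound of the theorem.
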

\vspace{-5pt}
Our starting point is a simple network with $O(\log 1/\delta)$ neurons, each firing independently with probability $1-1/t$. The key observation for improving the size bound into $O(\log\log 1/\delta)$ is to use the \emph{time axis}: we will use a \emph{single} neuron to generate random samples over time, rather than having \emph{many} random neurons generating these samples in a \emph{single} round. The deterministic neural counter network with time parameter of $O(\log 1/\delta)$ is used as a building block in order to gather the firing statistics of a single spiking neuron.
In light of the $\Omega(\log t)$ lower bound for deterministic networks, we get the first separation between deterministic and randomized solutions for error probability $\delta=\omega(1/2^t)$. This shows that randomness can help, but up to a limit: Once the allowed error probability is exponentially small in $t$, the deterministic solution is the best possible.
Perhaps surprisingly, we show that this behavior is tight:
\vspace{-5pt}
\begin{theorem}[Lower Bound for Randomized Timers]\label{lem:lower-bound-rand}
Any SNN network for the neural timer problem with time parameter $t$, and error $\delta \in (0,1)$ must use $\Omega(\min\{\log\log 1/\delta, \log t\})$ neurons. 
\end{theorem}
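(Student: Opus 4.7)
The plan is to model the SNN as a finite-state Markov chain and reduce the lower bound to an anti-concentration property of exit times from finite Markov chains. Let $N := 2^n$ and view the network's dynamics after the spike of $x$ at round $0$ as a time-homogeneous Markov chain on the $N$-element configuration space $\{0,1\}^n$ of the $n$ non-input neurons. Partition configurations into $F$ (those in which $y$ fires) and $\bar F$, and let $T$ be the first round at which the chain enters $\bar F$. The timer spec translates to $\Pr[T>t]\geq 1-\delta$ and $\Pr[T>2t]\leq\delta$, so it suffices to prove $N\geq\Omega(\min\{t,\log(1/\delta)\})$. Writing $Q$ for the substochastic transition matrix on $F$ and $\alpha$ for the initial distribution on $F$, we have $\Pr[T>k]=\alpha^\top Q^k \mathbf{1}$, so $T$ is a discrete phase-type random variable with at most $N$ phases; equivalently, $\Pr[T>k]=\sum_i c_i\lambda_i^k$ is a generalized exponential sum of length at most $N$, indexed by the eigenvalues $\lambda_i$ of $Q$.

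The crux is the following anti-concentration lemma: for any discrete phase-type $T$ on at most $N$ phases, if both $\Pr[T\leq t]\leq\delta$ and $\Pr[T>2t]\leq\delta$, then
\[
N \;\geq\; \Omega\!\bigl(\min\{t,\,\log(1/\delta)\}\bigr).
\]
Combined with $N=2^n$ this yields $n\geq\Omega(\min\{\log t,\log\log(1/\delta)\})$, as desired. The lemma splits into two regimes. When $\log(1/\delta)<t$, it becomes $N\geq\Omega(\log(1/\delta))$, reflecting the extremal role of the Erlang distribution (the sum of $N$ i.i.d.\ geometrics) as the ``most concentrated'' phase-type law with $N$ phases---whose tails decay precisely as $2^{-\Theta(N)}$ by Chernoff, and which cannot be improved upon by any other phase-type structure. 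When $\log(1/\delta)\geq t$, it reduces to $N\geq\Omega(t)$, reflecting that a chain with fewer than $t$ states cannot even deterministically count to $t$ without revisiting a state and thereby incurring a geometric-like tail.

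The hardest step is establishing the $2^{-\Theta(N)}$ anti-concentration rate. The classical Aldous--Shepp bound $\mathrm{Var}(T)\geq\E[T]^2/N$ gives only polynomial anti-concentration via Chebyshev, which would yield only $N\geq\Omega(1/\delta)$---far too weak. The correct exponential rate can be derived either via a variational/extremal argument identifying Erlang as the phase-type law minimizing tail mass outside a factor-$2$ window around the mean, or via a spectral/rational-approximation route: since $\sum_k\Pr[T>k]\,z^k$ is a rational function of $z$ of degree at most $N$, Zolotarev-type lower bounds on the best degree-$N$ rational approximation to a step function on two disjoint intervals with factor-$2$ separation force the approximation error (and hence $\delta$) to be at least $2^{-\Theta(N)}$. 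The complementary $\Omega(t)$ bound when $\delta$ is very small follows from a pigeonhole argument on the chain's trajectories, analogous to the deterministic lower bound.
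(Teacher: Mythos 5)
Your reduction to a finite-state Markov chain and to the exit time $T$ from the firing set $F$ is sound, and the translation of the timer spec into the two tail conditions $\Pr[T\le t]\le\delta$, $\Pr[T>2t]\le\delta$ is correct. But the proof then rests entirely on the anti-concentration lemma for discrete phase-type laws, and that lemma is \emph{asserted}, not proved. Neither of the two routes you sketch is carried far enough to be checkable. For the extremal-Erlang route, the statement that the Erlang law is the ``most concentrated'' phase-type law with $N$ phases in the sense of minimizing tail mass outside a factor-$2$ window is not a standard theorem, and you give no variational argument for it; it is also not obviously the right extremizer once one allows a deterministic counting prefix followed by random phases. For the rational-approximation route, the generating function $\alpha^\top(I-zQ)^{-1}\mathbf{1}$ is indeed rational of degree $\le N$, but the poles are constrained to lie outside the unit disk, the coefficients $c_i$ in $\Pr[T>k]=\sum_i c_i\lambda_i^k$ can be complex or negative, the sequence is only evaluated at integers $k$, and the ``Zolotarev-type'' bound you need (on two disjoint integer intervals with factor-$2$ separation, for this constrained class of rationals) is not a result you can simply cite. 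Both routes are plausible research directions, not proofs, and the variance bound you invoke (Aldous--Shepp plus Chebyshev) does not in fact yield even a polynomial anti-concentration statement, since a lower bound on variance says nothing about how much mass can sit inside $[t,2t]$.

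By contrast, the paper's proof is entirely elementary and avoids any extremal or spectral machinery: with $S=2^N$ configurations, a pigeonhole over $\Theta(S)$ sub-intervals of $[0,t]$ produces a state $s^*$ that recurs with probability at least $1/S$ and with separation at least $t/(3S)$; the Markov property then shows the probability of chaining $\Theta(cS)$ such recurrences (pushing the firing past $c\cdot t$) is at least $(1/S-\delta)^{3cS}$, which exceeds $\delta$ once $S\lesssim\log(1/\delta)/\log\log(1/\delta)$. This gives $n\ge\Omega(\min\{\log t,\log\log 1/\delta\})$ directly. Your framing would, if the lemma were proved, give a marginally cleaner bound on $S$ (without the $\log\log$ loss), but as written the key step is a gap, not a proof, and it is genuinely the hard part of the argument rather than a routine verification.
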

\vspace{-10pt}
\paragraph{Neural Counters.} 
Spiking neurons are believed to encode information via their firing rates. This underlies the \emph{rate coding} scheme \cite{adrian1926impulses,tsodyks1997neural,gerstner1997neural} in which the spike-count of the neuron in a given span of time is interpreted as a \emph{letter} in a larger alphabet. In a network of memory-less spiking neurons, it is not so clear how to implement this rate dependent behavior. How can a neuron convey a complicated message over time if its neighboring neurons remember only its recent spike? 
This challenge is formalized by the following neural counter problem: Given an input neuron $x$, a time parameter $t$, and $\Theta(\log t)$ output neurons represented by a vector $\bar{y}$, it is required to design a neural network such that the output vector $\bar{y}$ holds the binary representation of the number of times that $x$ fired in a sequence of $t$ rounds. As we already mentioned this problem is very much related to the neural timer problem and can be solved using $O(\log t)$ neurons. Can we do better?

The problem of maintaining a \emph{counter} using a small amount of space has received a lot of attention in the \emph{dynamic streaming} community. The well-known Morris algorithm \cite{Morris78,Flajolet85} maintains an approximate counter 
for $t$ counts using only $\log \log t$ bits. The high-level idea of this algorithm is to increase the counter with probability of $1/2^{C'}$ where $C'$ is the current read of the counter. The counter then holds the exponent of the number of counts. By following ideas of \cite{Flajolet85}, carefully adapted to the neural setting, we show:\vspace{-5pt}
\begin{theorem}[Approximate Counting]\label{thm:approx-counting}
For every time parameter $t$, and $\delta \in (0,1)$, there exists a randomized construction of approximate counting network using $O(\log\log t+\log (1/\delta))$ deterministic threshold gates plus an additional \emph{single} random spiking neuron, that computes an $O(1)$ (multiplicative) approximation for the number of input spikes in $t$ rounds with probability $1-\delta$.
\end{theorem}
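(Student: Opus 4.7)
The plan is to adapt Morris's approximate counting scheme~\cite{Morris78} in its Flajolet-refined form~\cite{Flajolet85} to the spiking neural network setting, using the deterministic neural timer of Theorem~\ref{lem:upper-bound-det} as a building block. Recall that the classical Morris counter maintains an integer $C$ initialized to zero; upon each input event, $C$ is incremented with probability $2^{-C}$, and the estimate of the true count is $\hat n = 2^C - 1$. In Flajolet's generalization, the base $2$ is replaced by $a = 2^{1/k}$, so the counter effectively stores values in units of $1/k$; its variance becomes $O(n^2/k)$, and by Chebyshev the choice $k = \Theta(1/\delta)$ yields an $O(1)$-approximation with probability $1-\delta$. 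Since the integer register then ranges over $\{0,1,\ldots,O(k\log t)\}$, its binary representation fits in $O(\log\log t + \log(1/\delta))$ bits.

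The first step is to store $C$ in binary using $O(\log\log t + \log(1/\delta))$ deterministic threshold gates (with $k$ chosen as a power of two), which also serve as the output vector $\bar y$, from which the $O(1)$-approximation $2^{\lfloor C/k\rfloor}$ is read off. The second step is to implement the probabilistic increment. On each input spike from $x$, the network launches a \emph{sampling phase} of length $m := \lfloor C/k\rfloor$ rounds: the length $m$ is simply the top $O(\log\log t)$ bits of the register for $C$, so no arithmetic is needed to extract it, and the phase itself is orchestrated by the deterministic neural timer of Theorem~\ref{lem:upper-bound-det} instantiated with parameter $m$. During the phase, a single auxiliary AND-chain neuron $a$ enforces the invariant that $a$ fires in round $\tau$ iff the random spiking neuron $z$ (firing independently with probability $1/2$ each round) has fired in every round of the phase so far. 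At the end of the $m$ rounds, $a$ is still firing with probability exactly $2^{-m} = a^{-C}$, and if so a standard ripple-carry-style binary incrementer (built from deterministic threshold gates) updates $C\mapsto C+1$.

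The main subtlety, and what I expect to be the hardest part of the argument, is dealing with input spikes that arrive while a prior sampling phase is still in progress. I would handle this by a single latch neuron that marks ``sampling in progress'' and causes overlapping spikes to be dropped, and then argue that this does not destroy the approximation guarantee: when the counter value is $C$, at most $m = \lfloor C/k\rfloor$ events are discarded per attempted increment, whereas the expected number of events between two successful increments at level $C$ is $a^C = 2^{C/k}$, which dominates $m$ once $C$ exceeds a constant multiple of $k\log k$. A union bound over the $O(\log t)$ relevant counter levels then shows that only a constant fraction of events are ever discarded, which shifts the Flajolet estimator by at most a constant multiplicative factor and can be absorbed into the $O(1)$-approximation guarantee.
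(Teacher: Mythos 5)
Both you and the paper adapt the Morris--Flajolet approximate counter with a base close to $1$, store the counter register in $O(\log\log t + \log(1/\delta))$ bits of the deterministic counter module, and cite Chebyshev with $\alpha = 1 + \Theta(\delta)$ (your $k = \Theta(1/\delta)$). The crucial implementation step, though, is quite different. The paper exploits the sigmoid firing function directly: a single spiking neuron whose membrane potential equals $-c\ln\alpha$ (induced by negative weights $-2^{i-1}\ln\alpha$ from the register bits) fires with probability exactly $\frac{1}{1+\alpha^{c}}$, so the probabilistic increment is realized in \emph{one round}. This both pins down the source of randomness and eliminates any need for an $m$-round sampling phase. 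The residual ``freeze'' in the paper is only the $O(\log\log t + \log(1/\delta))$ ripple-carry update time of the $\DetCounter$ module, which the paper bounds and then shows contributes only $o(n)$ lost events; your sampling phase, by contrast, lasts $m = \lfloor C/k\rfloor$ rounds, which grows to $\Theta(\log t)$.

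That difference is not cosmetic: the argument you give for why dropped events are harmless is wrong. You compare the per-attempt drop bound $m$ against the expected number of \emph{attempts} between successful increments, $\approx 2^{C/k}$, and conclude that the latter ``dominates.'' But each of those $\approx 2^{C/k}$ attempts launches its own sampling phase and can drop up to $m$ events, so the drops per successful increment are $\approx m\cdot 2^{C/k}$ while the processed events per increment are only $\approx 2^{C/k}$ --- a drop fraction of $\approx m/(m+1)$, which tends to $1$ as $C$ nears its cap $\approx k\log_2 t$. With a dense input ($x$ firing every round, so $n = \Theta(t)$), the fraction of events actually processed can be as small as $\Theta(1/\log t)$, so the estimator is off by a $\Theta(\log t)$ factor, not an $O(1)$ factor. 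The step would need to be repaired, e.g.\ by terminating the sampling phase (and releasing the latch) at the first failure of the coin neuron $z$, which makes the expected freeze length $O(1)$ rounds and restores a constant drop fraction --- but as written the claim does not hold and the proposed union-bound argument does not establish it. Two smaller points worth noting: your increment probability is $2^{-\lfloor C/k\rfloor}$, not $a^{-C}=2^{-C/k}$, so Flajolet's variance bound does not apply verbatim (it needs a short re-derivation for the ``staircase'' estimator, though the relative variance is still $\Theta(1/k)$); and the paper's probability $\frac{1}{1+\alpha^c}$ is itself a constant-factor perturbation of $\alpha^{-c}$, which is why it bootstraps with a separate exact ``small-count'' module before switching the approximate counter on.
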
\vspace{-5pt}
We note that unlike the deterministic construction of timers that could be easily adopted to the problem of neural counting, our optimized randomized timers with $O(\log\log 1/\delta)$ neurons cannot be adopted into an approximate counter network. We therefore solve the latter by adopting Morris algorithm to the neural setting.
\vspace{-3pt}

\paragraph{Broader Scope: Lessons From Dynamic Streaming Algorithms.}
We believe that approximate counting problem provides just one indication for the potential relation between succinct neural networks and dynamic streaming algorithms.  In both settings, the goal is to 
gather statistics (e.g., over time) using a small amount of space. In the setting of neural network there are additional difficulties that do not show up in the streaming setting. E.g., it is also required to obtain fast \emph{update time}, as illustrated in our solution to the approximate counting problem.
\vspace{-10pt}
\subsection{Neural Synchronizers}\vspace{-5pt}
The standard model of spiking neural networks assumes that all edges (synapses) in the network have a uniform response latency. That is, the electrical signal is passed from the presynaptic neuron to the postsynaptic neuron within a fixed time unit which we call a \emph{round}. 
However, in real biological networks, the response latency of synapses can vary considerably depending on the biological properties of the synapse, as well as on the distance between the neighboring neurons. This results in an asynchronous setting in which different edges have distinct response time. 
We formalize a simple model of spiking neurons in the asynchronous setting, in which the given neural network also specifies a \emph{response latency} function $\ell: A \to \mathbb{R}_{\geq 1}$ that determines the number of rounds it takes for the signal to propagate over the edge.  Inspired by the synchronizers of Awerbuch and Peleg~\cite{AwerbuchP90b}, and using the above mentioned compressed timer and counter modules, we present a general simulation methodology (a.k.a synchronizers) that takes a network $\cN_{\sync}$ that solves the problem in the synchronized setting, and transform it into an ``analogous" network $\cN_{\async}$ that solves the same problem in the asynchronous setting.

 The basic building blocks of this transformation is the neural time component adapted to the asynchronous setting. 
The cost of the transformation is measured by the overhead in the number of neurons and in the computation time. Using our neural timers leads to a small overhead in the number of neurons.
%
%
%
\vspace{-5pt}
\begin{theorem}[Synchronizer, Informal]
There exists a synchronizer that given a network $\mathcal{N}_{\sync}$ with $n$ neurons and maximum response latency\footnote{I.e., $L$ correspond to the \emph{length} of the longest round.} $L$, constructs a network $\mathcal{N}_{\async}$ that has an ``analogous" execution in the asynchronous setting with a total number of $O(n + L\log L)$ neurons and a time overhead of $O(L^3)$.
\end{theorem}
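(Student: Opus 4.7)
The plan is to imitate Awerbuch--Peleg synchronizers at the neural level: each round of $\mathcal{N}_{\sync}$ is simulated by a ``super-round'' of length $\Theta(L)$ in $\mathcal{N}_{\async}$, chosen large enough so that every spike emitted during one super-round settles into all destination neurons before the next super-round begins. The main technical work is to realize both the super-round clock and the per-edge delay equalization while spending only $O(L\log L)$ auxiliary neurons in total; the cubic dependence on $L$ then arises because the synchronization scaffolding itself must tolerate the very asynchrony it is trying to hide.

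First I would build a \textbf{global clock} using the timer of Theorem~\ref{lem:upper-bound-det}: a deterministic network of $O(\log L)$ neurons that, once triggered, fires a signal $\Next$ for $L$ consecutive rounds and re-triggers itself upon expiry, partitioning time into consecutive windows $W_1, W_2, \dots$ of length $L$; a derived $\Last$ pulse can be extracted at the final round of each window via an $O(1)$-sized gadget. Within each window I install a single chain of $L$ delay nodes $d_1,\dots,d_L$, one per integer latency in $[1,L]$, whose $i$-th element fires in the $i$-th round of the window. For every neuron $v$ of $\mathcal{N}_{\sync}$ I keep a copy $v'$ in $\mathcal{N}_{\async}$, and for every directed edge $(u,v)$ of synchronous latency $\ell(u,v)=j$ I route the spike $u'\to v'$ through an AND-gadget gated by $d_{L-j}$; after $L-j$ rounds of padding and $j$ rounds of actual latency, every incoming spike to $v'$ thus arrives at the \emph{same} round at the end of the window. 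The shared chain costs $L$ neurons, the clock costs $O(\log L)$, and the per-endpoint AND-gadgets are $O(1)$ each, absorbed into the $n$ term.

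To enforce the synchronous dynamics on top of this scaffolding, each $v'$ applies its threshold rule only when $\Last$ fires, and stores the resulting status in an $O(1)$-sized holding gadget so that its outgoing edges can be sampled by the delay demultiplexer of the \emph{next} window. A direct induction over windows then shows that the restriction of $\mathcal{N}_{\async}$'s execution to the round indices $\{|W_1|,\, |W_1|+|W_2|,\,\dots\}$ coincides, spike by spike, with the synchronous execution of $\mathcal{N}_{\sync}$ on its copies $\{v'\}$.

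The main obstacle, and the source of the cubic factor, is the \textbf{self-referential latency} inside the scaffolding itself: the clock and delay-chain neurons communicate over edges whose latencies are themselves up to $L$, so a naive timer that tries to measure $L$ rounds must first compensate its own internal delays. I would resolve this by padding every internal edge of the clock and of the chain with an $L$-round slack, which inflates each window to $\Theta(L^2)$ async rounds; combined with an additional $O(L)$ factor for the handshake that ties the clock, the chain, and the $v'$-gating logic together in the asynchronous setting, this yields a total time overhead of $O(L^3)$ per simulated round. The extra $\log L$ factor in the neuron count comes from the clock and its $\Next$/$\Last$ decoders, which must be replicated $O(L)$ times to drive the delay chain correctly under the internal latency padding, giving a final size of $O(n+L\log L)$.
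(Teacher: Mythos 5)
Your proposal takes a genuinely different route from the paper's: you try to \emph{equalize} the per-edge delays so that all incoming spikes to a node arrive in the same round, whereas the paper sidesteps timing entirely by using self-loops (which by assumption have latency $1$) to make each out-copy \emph{keep re-presenting} its previous-phase state for $\Theta(L)$ consecutive rounds, so that downstream neurons eventually see all of their inputs simultaneously without anyone needing to know \emph{when} each spike arrives. This difference is not merely stylistic --- it exposes a gap that makes your construction unworkable in this model.

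The fatal problem is that you gate the edge $u'\to v'$ by $d_{L-j}$ where $j=\ell(u,v)$. But the synchronizer is given only the synchronous network $\mathcal{N}_{\sync}$ and the \emph{bound} $L$; it is not given the latency function $\ell$, which is assigned adversarially to whatever network it outputs (this is exactly how Observation~\ref{obs:lower-async} is proved: fix the network, then choose the latencies). So the quantity $j$ is unavailable at construction time and the wiring to $d_{L-j}$ cannot be specified. Even setting that aside, the AND-gadget you insert introduces new edges --- from $d_{L-j}$ to the gadget and from the gadget to $v'$ --- whose latencies are again adversarial and up to $L$, so the total delay is not $(L-j)+j$ but $(L-j)+j+\Theta(L)$ with an uncontrolled error term, and your claimed alignment of all inputs at the window's final round fails. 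There is also an accounting issue: one AND-gadget per directed edge costs $\Theta(|A|)$ neurons, which can be $\Theta(n^2)$ and cannot be ``absorbed into the $n$ term.'' The paper's construction avoids all of this by adding only a constant number of copy neurons per \emph{vertex} and a global $O(L\log L)$-sized scaffold (pulse generator, delay, and reset modules built from $\DetTimer_{\async}$, which itself costs $O(L\log t)$ for parameter $t$), and by relying on persistence-via-self-loop rather than delay cancellation, so that correctness holds for every latency assignment bounded by $L$.
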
	\vspace{-5pt}
We note that although the construction is inspired by the work of Awerbuch and Peleg~\cite{AwerbuchP90b}, due to the large differences between these models, the precise formulation and implementation of our synchronizers are quite different. The most notable difference between the distributed and neural setting is the issue of memory: in the distributed setting, nodes can aggregate the incoming messages and respond when all required messages have arrived. In strike contrast, our neurons can only respond (by either firing or not firing) to signals arrived in the \emph{previous} round, and all signals from previous rounds cannot be locally stored.
For this reason and unlike \cite{AwerbuchP90b}, we must assume a bound on the largest edge latency.
In particular, in App.~\ref{sec:append-intro} we show that the size overhead of the transformed network $\mathcal{N}_{\async}$ must depend, at least logarithmically, on the value of the largest latency $L$. 
\vspace{-5pt}
\begin{Observation} \label{obs:lower-async}
The size overhead of any synchronization scheme is $\Omega(\log L)$.
\end{Observation}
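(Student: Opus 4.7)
The plan is to reduce the deterministic timer lower bound of Theorem~\ref{lem:upper-bound-det}(2) to the synchronization problem, by exhibiting a constant-size synchronous network whose asynchronous simulation must embed a timer of parameter $\Omega(L)$.

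First, I would take $\cN_{\sync}$ to be a small fixed synchronous network in which an input $x$ feeds the output $y$ via two parallel paths converging at an AND-like gate, so that in the synchronous execution $y$ fires at a specific round only when both signals arrive simultaneously. Since $\cN_{\sync}$ has $O(1)$ neurons, the size of the network $\cN_{\async}$ produced by any synchronizer equals the overhead up to an additive constant, so it suffices to lower bound the size of $\cN_{\async}$ itself.

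Next, I would consider an adversarial latency assignment to the edges of $\cN_{\async}$ that makes one internal path slow (latency $\Theta(L)$) and the other fast (latency $1$). Any ``analogous'' execution of $\cN_{\async}$ must fire $y$ only after the slow signal has actually arrived at the AND-gate; hence the information carried by the fast signal has to be ``held'' inside $\cN_{\async}$ for $\Omega(L)$ asynchronous rounds until the slow signal catches up. Because neurons in our model are memory-less and respond only to the previous round's spikes, implementing such a hold requires a sub-network that keeps a signal alive for $\Omega(L)$ rounds --- precisely a neural timer of parameter $\Omega(L)$, which by Theorem~\ref{lem:upper-bound-det}(2) requires $\Omega(\log L)$ neurons.

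The main obstacle is to rule out that the synchronizer ``cheats'' by relying on the adversarial latencies themselves as a clock: the synchronizer fixes the structure of $\cN_{\async}$ before the latency function $\ell$ is revealed, so it cannot assume any specific values of $\ell$ and must provide its own internal counter. To formalize this, I would apply a pigeonhole / configuration-counting argument in the spirit of Theorem~\ref{lem:upper-bound-det}(2): with fewer than $\log L$ overhead neurons, two latency assignments differing only in the slow-path latency would induce identical configuration sequences on the auxiliary sub-network, forcing $y$ to fire at the same absolute round in both --- contradicting the fact that in one scenario the slow signal has not yet reached the AND-gate.
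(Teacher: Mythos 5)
Your high-level intuition --- exploit an asymmetry between a fast and a slow path, then apply a pigeonhole / memorylessness argument on the auxiliary configurations --- is the same engine that powers the paper's proof. But the specific gadget you chose (an AND of two parallel paths emanating from a single input) has a real gap, and the paper's choice of a NOT gate is not cosmetic.

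The issue is the step ``hence the information carried by the fast signal has to be held $\ldots$ implementing such a hold requires a neural timer of parameter $\Omega(L)$.'' Holding a signal for an arbitrary number of rounds does \emph{not} require a timer: a single neuron with a positive self-loop holds a spike indefinitely. The hard part a timer captures is knowing \emph{when to stop}, i.e., detecting that $\Omega(L)$ rounds have passed. Your AND scenario never forces this: in the run where $y$ must fire, the slow signal actually does arrive at round $\approx L$, so the cycle that pigeonhole gives you during $[1,L)$ is \emph{broken} by the incoming slow spike. You cannot conclude ``$y$ never fires'' in that execution, so the contradiction you want evaporates. Your final two-latency argument does not rescue this: during the indistinguishable prefix $y$ is not supposed to fire in \emph{either} latency assignment, so ``$y$ fires at the same round in both'' never contradicts anything.

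The paper sidesteps this by using a NOT gate and slowing down \emph{all} outgoing edges of $x$ (which one can always do, since $x$ must have some outgoing edges in $\cN_{\async}$, whatever its internal structure --- this also removes your ambiguity about which ``internal path'' to slow). It then compares a run $\Pi_{\mathrm{yes}}$ where $x$ fires with a run $\Pi_{\mathrm{no}}$ where $x$ is silent. The two runs are indistinguishable on the auxiliary neurons for $L$ rounds, so correctness on $\Pi_{\mathrm{yes}}$ forces $y$ to be idle during $[1,L]$ in both. Now the pigeonhole + memorylessness argument is applied inside $\Pi_{\mathrm{no}}$: since $x$ \emph{never} fires there, nothing ever breaks the repeating configuration, the auxiliary execution cycles forever with $y$ idle, yet correctness demands that $y$ eventually fire. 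That is the contradiction. The crucial feature, which your AND gate lacks, is that the scenario in which $y$ is obligated to fire is precisely the scenario with \emph{no} input spike at all, so the cycle is permanent.
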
\vspace{-5pt}

This provably illustrates the difference in the overhead of synchronization between general distributed networks and neural networks. We leave the problem of tightening this lower bound (or upper bound) as an interesting open problem. 

\paragraph{Additional Related Work}
To the best of our knowledge, there are two main previous theoretical work on asynchronous neural networks.
Maass \cite{Maass94ECCC} considered a quite elaborated model for deterministic neural networks with \emph{arbitrary} response \emph{functions} for the edges, along with latencies that can be chosen by the network designer. Within this generalized framework, he presented a coarse description of a synchronization scheme that consists of various time modules (e.g., initiation and delay modules). Our work complements the scheme of \cite{Maass94ECCC} in the simplified SNN model by providing a rigorous implementation and analysis for size and time overhead. 
Khun et al. \cite{kuhn2010synchrony} analyzed the synchronous and asynchronous behavior under the stochastic neural network model of DeVille and Peskin \cite{deville2008synchrony}. Their model and framework is quite different from ours, and does not aim at building synchronizers. 


Turning to the setting of logical circuits, there is a long line of work on the asynchronous setting under various model assumptions \cite{armstrong1969design,hauck1995asynchronous, sparso2001asynchronous,bjerregaard2006survey,manohar2017eventual} that do not quite fit the memory-less setting of spiking neurons.

\paragraph{Comparison with Concurrent Work \cite{lynch2019counter}.} Independently to our work, Wang and Lynch proposed a similar construction for the neural counter problem. Their work restricts attention to deterministic threshold gates and do not consider the neural timer problem and synchronizers which constitute the main contribution of our paper. We note that our approximate counter solution with $O(\log\log t+\log (1/\delta))$ neurons resolves the open problem stated in \cite{lynch2019counter}.
%
%
%
%

\vspace{-10pt}\subsection{Preliminaries} \label{sec:perlim} \vspace{-5pt}
We start  by defining our model along with useful notation.

\textbf{A Neuron.} A \emph{deterministic} neuron $u$ is modeled by a \emph{deterministic} threshold gate. 
Letting $b(u)$ to be the threshold value of $u$. Then it outputs $1$ if the weighted sum of its incoming neighbors exceeds $b(u)$. A \emph{spiking neuron} is modeled by a probabilistic threshold gate that fires with a sigmoidal probability $p(x)= \frac{1}{1+e^{-x}}$ where $x$ is the difference between the weighted incoming sum of $u$ and its threshold $b(u)$. 

\textbf{Neural Network Definition.}
A \emph{Neural Network} (NN) $\cN =\langle X,Z,Y, w,b \rangle$ consists of $n$ input neurons $X=\{x_1, \ldots, x_{n}\}$, $m$ output neurons $Y=\{y_1, \ldots, y_{m}\}$, and $\ell$ auxiliary neurons $Z = \{z_1,...,z_{\ell} \}$. 
In a \emph{deterministic} neural network (DNN) all neurons are deterministic threshold gates. In spiking neural network (SNN), the neurons can be either deterministic threshold gates or probabilistic threshold gates. 
The directed weighted synaptic connections between $V=X \cup Z \cup Y$ are described by the weight function $w: V \times V \rightarrow \mathbb{R}$. A weight $w(u,v) =0$ indicates that a connection is not present between neurons $u$ and $v$. Finally, for any neuron $v$, $\Bias(v) \in \mathbb{R}_{\geq 0}$ is the threshold value (activation bias).
%
The weight function defining the synapses is restricted in two ways.
The in-degree of every input neuron $x_i$ is zero, i.e., $w(u,x) = 0$ for all $u \in V$ and $x \in X$. 
Additionally, each neuron is either inhibitory or excitatory: 
if $v$ is inhibitory, then $w(v,u)\leq 0$ for every $u$, and if $v$ is 
excitatory, then $w(v,u)\geq 0$ for every $u$. 

\textbf{Network Dynamics.}
The network evolves in discrete, synchronous rounds as a Markov chain. 
The firing probability of every neuron in round $\tau$ depends on the firing status of its neighbors in round $\tau-1$, via a standard sigmoid 
function, with details given below.
For each neuron $u$, and each round $\tau \ge 0$, let $u^{\tau}=1$ if $u$ fires (i.e., generates a spike) in round $\tau$. Let $u^{0}$ denote the initial firing state of the neuron. The firing state of each input neuron $x_j$ in each round is the input to the network. For each non-input neuron $u$ and every round $\tau \ge 1$, let $\pot(u,\tau)$ denote the membrane potential at round $\tau$ and $p(u,\tau)$ denote
the firing probability ($\Pr[u^\tau = 1]$), calculated as:
\begin{align}
\label{eq:potentialOut}
\pot(u,\tau)= \hspace{-.65em}  \sum_{v \in V}w_{v,u}\cdot v^{\tau-1} -b(u) 
\text{ and }p(u,\tau)=\frac{1}{1+e^{-\frac{\pot(u,\tau)}{\lambda}}}
\end{align}
where $\lambda > 0$ is a \emph{temperature parameter} which determines the steepness of the sigmoid. Clearly, $\lambda$ does not affect the computational power of the network (due to scaling of edge weights and thresholds), thus we set $\lambda=1$. In \emph{deterministic} neural networks (DNN), each neuron $u$ is a deterministic threshold gate that fires in round $\tau$ iff $\pot(u,\tau)\geq 0$. 

\textbf{Network States (Configurations).}
Given a network $\mathcal{N}$ (either a DNN or SNN) with $N$ neurons, the configuration (or \emph{state}) of the network in time $\tau$ denoted as $s_\tau$ can be described as an $N$-length binary vector indicating which neuron fired in round $\tau$. 

\textbf{The Memoryless Property.} 
The neural networks have a memoryless property, in the sense that each state depends only on the state of the previous round. In a DNN network, the state $s_{\tau-1}$ fully determines $s_\tau$. 
In an SNN network, for every fixed state $s^*$ it holds
$\Pr[s_\tau=s^* \ | \ s_1,...s_{\tau-1}] = \Pr[s_\tau=s^* \ | \ s_{\tau-1}]$.
Moreover for any $\tau, \tau',r >0$, it holds that $\Pr[s_{\tau+r}=s^* \ | \  s_{\tau}] = \Pr[s_{\tau'+r}=s^* \ | \ s_{\tau'}]$.
%

%
%
%

\textbf{Hard-Wired Inputs.} We consider neural networks that \emph{solve} a given parametrized problem (e.g., neural timer with time parameter $t$).
The parameter to the problem can be either hard-wired in the network or alternatively be given as part of the input layer to the network. In most of our constructions, the time parameter is hard-wired. In some cases, we also show constructions with soft-wiring.


\vspace{-5pt}
\begin{figure}[h]
	\includegraphics[scale=0.26]{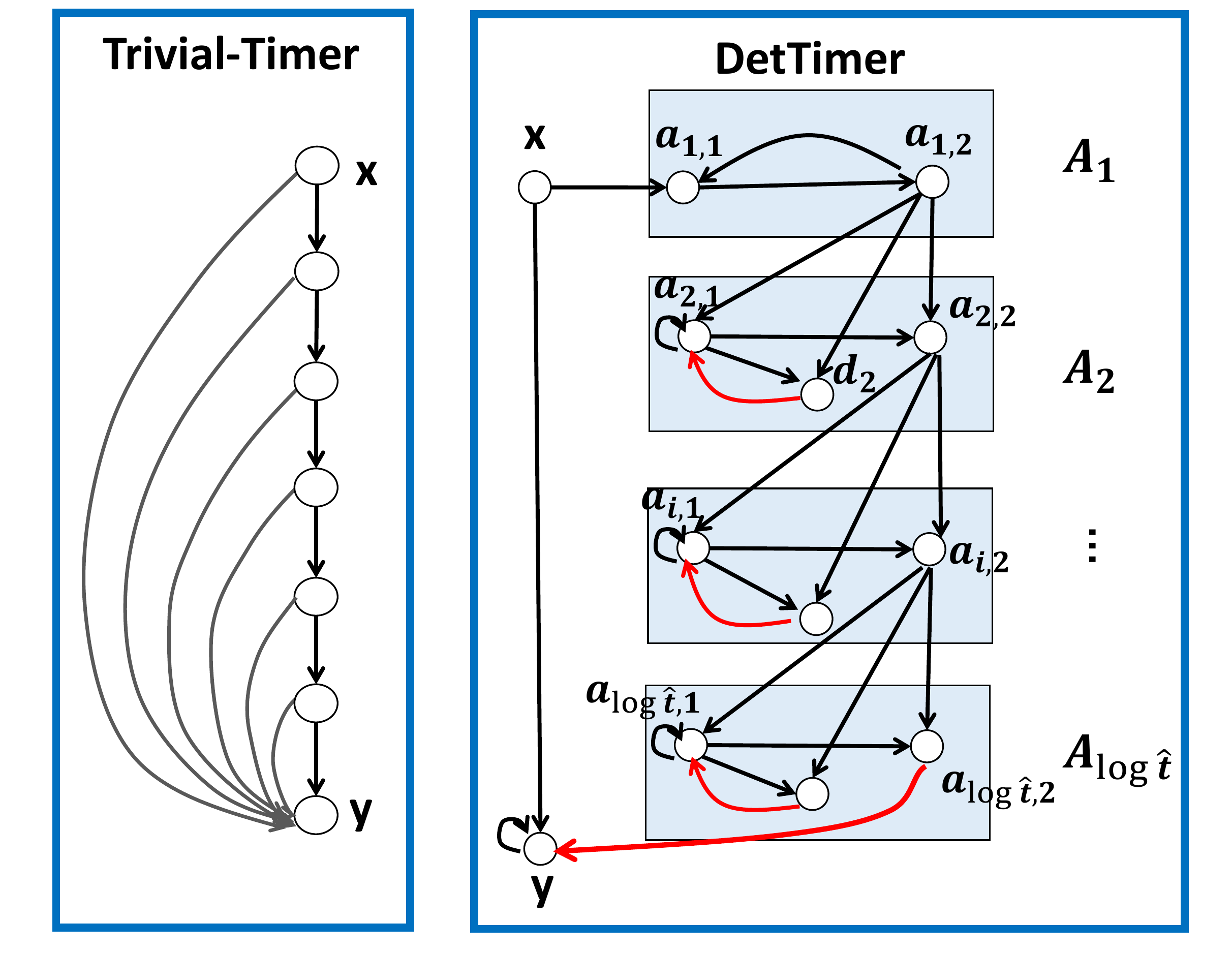}
	\centering
	\includegraphics[scale=0.28]{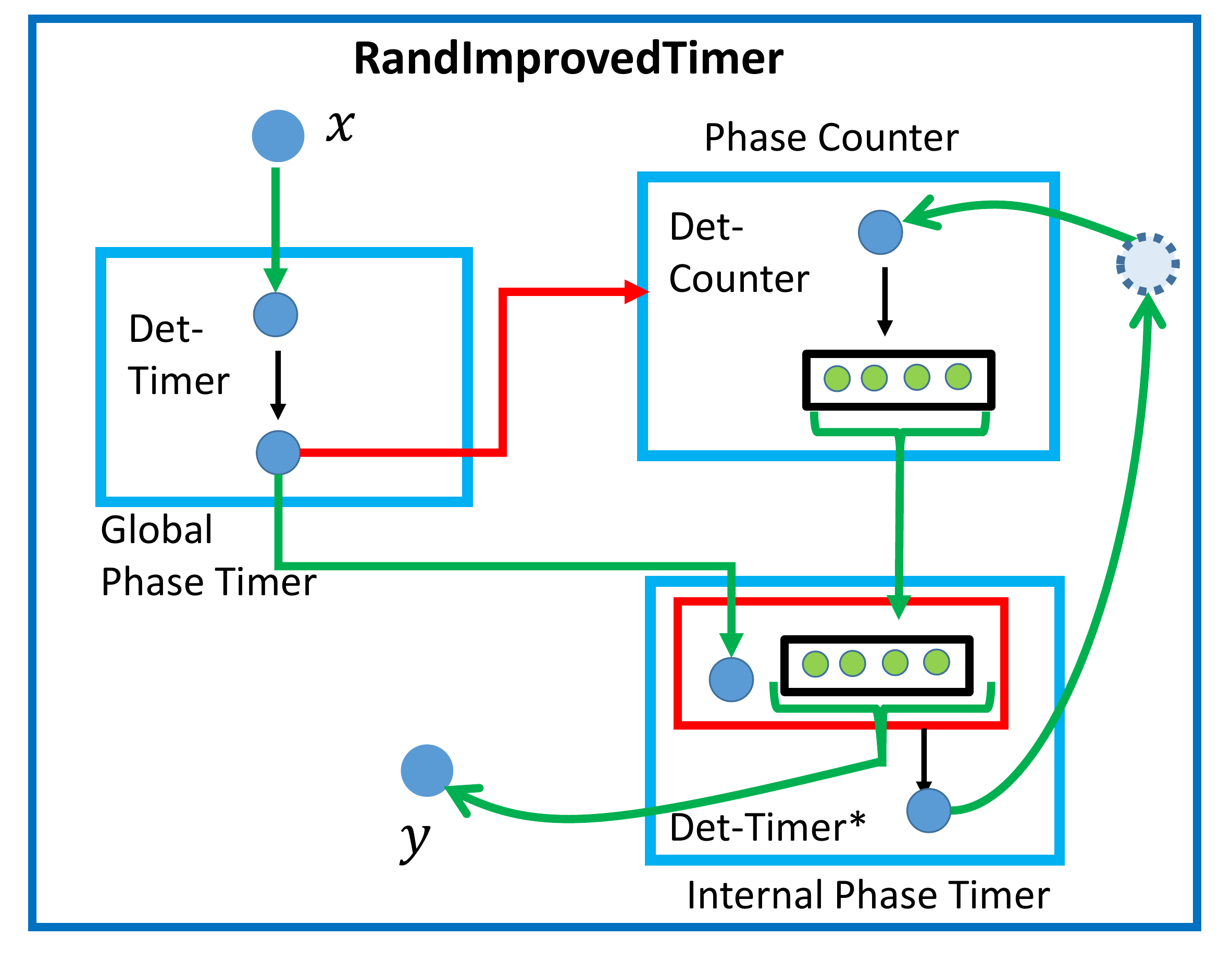}
	\caption{\small{Illustration of timer networks with time parameter $t$. Left: The na\"{\i}ve  timer with $\Theta(t)$ neurons. Mid: deterministic timer with $\Theta(\log t)$ neurons. Right: randomized timer with $O(\log\log 1/\delta))$ neurons, using the $\DetTimer$ modules with parameter $t'=\log 1/\delta$.}\label{fig:det-counters}}
\end{figure}
\vspace{-25pt}
\section{Deterministic Constructions of Neural Timer Networks}\label{sec:det-timer}\vspace{-10pt}
As a warm-up, we start by considering deterministic neural timers.
\begin{definition}[Det. Neural Timer Network]
Given time parameter $t$, a \emph{deterministic} neural timer network $\mathcal{DT}$ is a network of threshold gates, with an input neuron $x$, an output neuron $y$, and additional auxiliary neurons. The network satisfies that in every round $\tau$, $y^\tau=1$ iff there exists a round $\tau>\tau'\geq \tau - t$ such that $x^{\tau'}=1$. 
\end{definition}
%
%
\vspace{-5pt}
\paragraph{Lower Bound (Pf. of Thm. \ref{lem:upper-bound-det}(2)).}
For a given neural timer network $\cN$ with $N$ auxiliary neurons, recall that the \emph{state} of the network in round $\tau$ is described by an $N$-length vector indicating the firing neurons in that round.
Assume towards contradiction that there exists a neural timer with $N \leq \log t-1$ auxiliary neurons. 
Since there are at most $2^N$ different states, by the pigeonhole principle, there must be at least two rounds $\tau, \tau' \leq t-1$ in which the state of the network is identical, i.e., where $s_{\tau}=s_{\tau'}=s^*$ for some $s^* \in \{0,1\}^N$. By the correctness of the network, the output neuron $y$ fires in all rounds $\tau'' \in [\tau+1,\tau'+1]$.  By the memoryless property, we get that $s_{\tau''}=s^*$ for $\tau''=\tau+i \cdot (\tau'-\tau)$ for every $i \in \mathbb{N}_{\geq 0}$. Thus $y$ continues firing forever, in contradiction that it stops firing after $t$ rounds.  Note that this lower bound holds even if $y$ is allowed to stop firing in any finite time window.

\paragraph{A Matching Upper Bound (Pf. Thm. \ref{lem:upper-bound-det}(1)).}
For ease of explanation, we will sketch here the description of the network assuming that it is applied only once (i.e., the input $x$ fires once within a window of $t$ rounds).
Taking care of the general case requires slight adaptations\footnote{I.e., whenever $x$ fires again in a window of $t$ rounds, one should reset the timer and start counting $t$ rounds from that point on.}, see Appendix~\ref{sec:append-det} for the complete details.

At the high-level, the network consists of $k=\Theta(\log t)$ layers $A_1, \ldots , A_k $ each containing two excitatory neurons $a_{i,1}, a_{i,2}$ denoted as \emph{counting} neurons, and one inhibitory neuron $d_i$. Each layer $A_i$ gets its input from layer $A_{i-1}$ for every $i\geq 2$, and $A_1$ gets its input from $x$. The role of each layer $A_i$ is to count \emph{two} firing events of the neuron $a_{i-1,2} \in A_{i-1}$. Thus the neuron $a_{\log t,2}$ counts $2^{\log t}$ rounds. 

Because our network has an update time of $\log t$ rounds (i.e., number of rounds to update the timer), for a given time parameter $t$, the construction is based on the parameter $\hat{t}$ where $\hat{t}+\log \hat{t} = t$.
\vspace{-10pt}
 \begin{itemize}
 	\item The first layer $A_1$ consists of two neurons $a_{1,1}, a_{1,2}$. The first neuron $a_{1,1}$ has positive incoming edges from $x$ and $ a_{1,2}$ with weights $w(x,a_{1,1})=3$ , $w(a_{1,2}, a_{1,1})=1$, and threshold $b(a_{1,1})=1$. The second neuron $a_{1,2}$ has an incoming edge from $a_{1,1}$ with weight $w(a_{1,1}, a_{1,2})=1$ and threshold $b(a_{1,2}) = 1$.
 	Because we have a loop going from $a_{1,1}$ to $a_{1,2}$ and back, once $x$ fired  $a_{1,2}$ will fire every two rounds.
 	
\item For every $i=2 \ldots \log \hat{t}$, the $i^{th}$ layer $A_i$ contains $3$ neurons, two \emph{counting} neurons $a_{i,1}$, $a_{i,2}$ 
and a reset neuron $d_i$. 
The first neuron $a_{i,1}$ has positive incoming edges from $a_{i-1,2}$, and a self loop with weight $w(a_{i-1,2},a_{i,1})=w(a_{i,1},a_{i,1})=1$, a negative incoming edge from $d_i$ with weight $w(d_i, a_{i,1})=-1$, and threshold $b(a_{i,1})=1$. The second counting neuron $a_{i,2}$ has incoming edges from $a_{i-1,2}$ and $a_{i,1}$ with weight $w(a_{i-1,2},a_{i,2})=w(a_{i,1},a_{i,2})=1$, and threshold $b(a_{i,2})=2$. The reset neuron $d_i$ is an inhibitor copy of $a_{i-1,2}$ and therefore also has incoming edges from $a_{i-1,2}$ and $a_{i,1}$ with weight $w(a_{i-1,2},d_i)=w(a_{i,1},d_i)=1$ and threshold $b(d_i)=2$. As a result, $a_{i,1}$ starts firing after $a_{i-1,2}$ fires once, and $a_{i,2}$ fires after $a_{i-1,2}$ fires twice. Then the neuron $d_i$ inhibits $a_{i,1}$ and the layer is ready for a new count. 
 	
\item The output neuron $y$ has a positive incoming edge from $x$ as well as a self-loop with weights $w(x,y)=2$, $w(y,y)=1$. In addition, it has a negative incoming edge from the last counting neuron $a_{\log \hat{t},2}$ with weight $w(a_{\log \hat{t},2},y)=-1$ and threshold $b(y)=1$. Hence, after $x$ fires the output $y$ continues to fire as long as $a_{\log \hat{t},2}$ did not fire.
 	
\item The last counting neuron  $a_{\log \hat{t},2}$ also has negative outgoing edges to all counting neurons (neurons of the form $a_{i,j}$) with weight $w(a_{\log \hat{t},2}, a_{i,j})=-2$. As a result, after the timer counts $t$ rounds it is reset.  	
 \end{itemize}
The key claim that underlines the correctness of Thm.~\ref{lem:upper-bound-det}(1) is as follows. 
\begin{claim} \label{clm:layers}
	If $x$ fires in round $t_0$, for each layer $i$ the neuron $a_{i,2}$ fires in rounds $t_0 + \ell \cdot 2^i+i-1$ for every $\ell = 1 \ldots \floor{\hat{t}/2^i}$.
\end{claim}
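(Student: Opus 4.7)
The plan is to prove the claim by induction on the layer index $i$. For the base case $i=1$, I would directly unroll the dynamics of the pair $(a_{1,1}, a_{1,2})$: the weight-$3$ edge from $x$ forces $a_{1,1}$ to fire at round $t_0+1$, which then triggers $a_{1,2}$ at round $t_0+2$, and the symmetric weight-$1$ loop between the two neurons sustains an alternating firing pattern thereafter. This gives $a_{1,2}^\tau=1$ exactly when $\tau = t_0+2\ell$, matching the formula with $i=1$ since then $i-1 = 0$.

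For the inductive step, let $T_k := t_0 + k \cdot 2^{i-1} + (i-2)$ denote the firing rounds of $a_{i-1,2}$ guaranteed by the inductive hypothesis. I would analyze the gadget $(a_{i,1}, a_{i,2}, d_i)$ over each interval $[T_k, T_{k+1}]$, leveraging two structural observations: (i) the self-loop of $a_{i,1}$ together with its threshold of $1$ makes it a \emph{latch} that stays on once triggered, until the $-1$ inhibition from $d_i$ clears it; and (ii) $a_{i,2}$ and $d_i$ share identical positive inputs and threshold $2$, so they are synchronized and fire precisely when $a_{i-1,2}$ and $a_{i,1}$ fired together in the previous round.

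With these observations in hand, a secondary induction on $k$ would show that the gadget cycles cleanly with period $2^i$: after the $(2\ell-1)$-th firing of $a_{i-1,2}$ at round $T_{2\ell-1}$, the latch $a_{i,1}$ turns on at $T_{2\ell-1}+1$ and stays on via its self-loop through round $T_{2\ell}+1$, at which point the coincidence $a_{i-1,2}^{T_{2\ell}} = a_{i,1}^{T_{2\ell}} = 1$ triggers both $a_{i,2}$ and $d_i$; the resulting inhibition then silences $a_{i,1}$, and the layer rests until $a_{i-1,2}$ next fires at $T_{2\ell+1}$. This gives firing rounds of $a_{i,2}$ exactly at $T_{2\ell}+1 = t_0 + \ell \cdot 2^i + (i-1)$, as claimed, with $\ell$ ranging over $1, \ldots, \lfloor \hat{t}/2^i \rfloor$ before the global reset from $a_{\log \hat{t},2}$ kicks in.

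The step I expect to be the main obstacle is the tight potential accounting at the two ``transition'' rounds of each sub-cycle: immediately after $a_{i,2}$ fires, the potential of $a_{i,1}$ must drop below threshold (the $-1$ from $d_i$ must exactly cancel the $+1$ self-loop contribution, with no residual excitation left from $a_{i-1,2}$), and during the odd-phase rounds strictly between $T_{2\ell-1}+1$ and $T_{2\ell}$, one must verify that $a_{i,1}$ firing alone is insufficient to excite $a_{i,2}$ or $d_i$, since their threshold is $2$ and only the weight-$1$ edge from $a_{i,1}$ is active. These checks confirm that the weight/threshold budget is tight, which is exactly why the layer succeeds with only two counting neurons plus one reset neuron.
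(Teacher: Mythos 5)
Your proof is correct and follows essentially the same approach as the paper: an outer induction on the layer index $i$, with an inner argument showing that the gadget $(a_{i,1}, a_{i,2}, d_i)$ cycles with period $2^i$ by treating $a_{i,1}$ as a self-sustained latch that is set by $a_{i-1,2}$, read out via the coincidence at $a_{i,2}$ and $d_i$, and then cleared by $d_i$'s inhibition. The only difference is cosmetic: you make the secondary induction on the cycle count $\ell$ explicit and call out the tight potential accounting at the two transition rounds, which the paper handles more informally with ``we can repeat the same arguments.'' Your checks are correct — at round $T_{2\ell}+2$ the self-loop $+1$ is exactly cancelled by $d_i$'s $-1$ and $a_{i-1,2}$ supplies no residual, and during the odd-phase rounds $a_{i,1}$ alone contributes potential $1 < 2$ to $a_{i,2}$ and $d_i$ — so the argument closes.
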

\begin{proof}
	The proof is by induction on $i$. For $i=1$, once $x$ fires in round $t_0$, neuron $a_{1,1}$ fires in round $t_0+1$ and $a_{1,2}$ fires in round $t_0+2$. Because there is a bidirectional edge between $a_{1,1}$ and $a_{1,2}$, the second counting neuron $a_{1,2}$ keeps firing every two rounds. Assume the claim holds for neuron $a_{i-1,2}$, and consider the $i^{th}$ layer $A_i$. Recall that $a_{i,2}$ fires in round $t'$ only if $a_{i,1}$ and $a_{i-1,2}$ fired in round $t'-1$. The neuron $a_{i,1}$ fires one round after $a_{i-1,2}$ fires and keeps firing as long as $d_i$ did not fire. By the induction assumption $a_{i-1,2}$ fired for the first time in round $2^{i-1}+i-2$ and therefore $a_{i,1}$ starts firing in round $2^{i-1}+i-1$. Note that in round $2^{i-1}+i-1$ the neuron $a_{i-1,2}$ did not fire, and therefore the neurons  $a_{i,2}$ and $d_i$ can start firing only after $a_{i-1,2}$ fires again. Hence, only in round $2 \cdot 2^{i-1}+i-2+1 = 2^{i}+i-1$ the neurons $a_{i,2}$ and $d_i$ fires for the first time. In the next round, because of the inhibition of $d_i$ both counting neurons $a_{i,1}$ and $a_{i,2}$ do not fire and we can repeat the same arguments considering the next time the counting neurons $a_{i,1}$, $a_{i,2}$ fire. 
	
	We note that once the neuron  $a_{\log \hat{t},2}$ fires for the first time in round $t_0 + 2^{\log \hat{t}}+\log \hat{t}-1 =  t_0 + \hat{t}+\log \hat{t}-1 $, it inhibits all the counting neurons. Hence, as long as $x$ did not fire again, all counting neurons will be idle starting at round $t_0 + \hat{t}+\log \hat{t} = t_0 + t$.
\end{proof}
The complete proof of Thm.~\ref{lem:upper-bound-det}(1) is given in Appendix~\ref{sec:det-proof}. \\
\textbf{Timer with Time Parameter.}
In Appendix~\ref{sec:mod-timer}, we show a slight modified variant of neural timer denoted by $\DetTimer^*$ which also receives as input an additional set of $\log t$ neurons that encode the desired duration of the timer. This modified variant is used in our improved randomized constructions.

\vspace{-5pt}
\paragraph{Neural Counters.}
In Appendix \ref{sec:mod-timer} we show a modification of the timer into a counter network $\DetCounter$ that instead of counting the number of rounds, counts the number of input spikes in a time interval of $t$ rounds. 
\vspace{-3pt} 
\begin{lemma} \label{lem:det-counter}
	Given time parameter $t$, there exists a deterministic \emph{neural counter} network which has an input neuron $x$, a collection of $\log t$ output neurons represented by a vector $\bar{y}$, and $O(\log t)$ additional auxiliary neurons. In a time window of $t$ rounds, for every round $\tau$, if $x$ fired $r_\tau$ times in the last $\tau$ rounds, the output $\bar{y}$ encodes $r_{\tau}$ by round $\tau+\log r_\tau+1$. 
\end{lemma}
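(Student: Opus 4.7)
The plan is to adapt the deterministic timer of Theorem~\ref{lem:upper-bound-det}(1) into a counter by replacing the self-sustaining internal clock of its first layer with an input-driven triggering mechanism. The overall layered structure is preserved: the network has $k = \lceil \log t \rceil$ layers $A_1, \ldots, A_k$ with $O(1)$ threshold gates each, for a total of $O(\log t)$ neurons.

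\textbf{Construction.} In the timer, $A_1$ is wired so that once $x$ has fired, $a_{1,2}$ emits a self-sustained tick every two rounds. For the counter, I replace $A_1$ with a layer in which every spike of $x$ constitutes exactly one tick (the internal feedback that generates periodic spikes is removed, and $x$ now drives the layer directly). The higher layers $A_i$ for $i \geq 2$ retain the timer's template almost verbatim: the three neurons $a_{i,1}, a_{i,2}, d_i$ aggregate two consecutive ticks of $a_{i-1,2}$ into one outgoing pulse. Hence, after $r$ spikes of $x$, layer $A_i$ has received $\lfloor r/2^{i-1}\rfloor$ ticks. To produce the binary output $\bar{y}$, each layer $A_i$ is augmented with a dedicated output neuron $y_{i-1}$ equipped with a self-loop so it remembers its state between pulses; $y_{i-1}$ is toggled by the carry pulse of $A_{i-1}$ and fires iff the $(i-1)$-th bit of the current count is $1$.

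\textbf{Analysis.} Correctness is argued by induction on $i$, mirroring Claim~\ref{clm:layers}: assuming layer $A_{i-1}$ emits its $\ell$-th carry pulse exactly when the input count first reaches $\ell \cdot 2^{i-1}$, the reset logic of $A_i$ guarantees that $y_{i-1}$ toggles precisely at those moments when bit $i-1$ of the count flips. The settling time follows from the ripple-carry structure: a new spike at round $\tau$ triggers a cascade of at most $\lfloor \log r_\tau \rfloor + 1$ carries, each propagating to the next layer in one round, so all output bits stabilize by round $\tau + \log r_\tau + 1$.

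\textbf{Main obstacle.} The most delicate point is that, unlike the timer where $x$ fires only once, the counter must gracefully handle input bursts. Ensuring that each $x$-spike induces exactly one bit-$0$ toggle and at most one carry to the next layer requires careful tuning of the first-layer weights and thresholds: the inhibitor $d_1$ must be primed to reset $a_{1,1}$ immediately after a carry, yet not block the processing of a subsequent $x$-spike. A second, more technical point is realising the toggle operation on $y_{i-1}$ within the strict excitatory/inhibitory separation of neurons; this is accomplished by pairing its self-loop with an auxiliary inhibitory neuron that fires precisely when $y_{i-1}$ is currently on and an incoming carry arrives, switching the bit off, while the carry acts excitatorily when the bit is off to switch it on.
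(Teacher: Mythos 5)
Your overall plan matches the paper's: keep the layered mod-$2$ structure of $\DetTimer$, convert the first layer from a clock-driven advance to an $x$-driven advance, keep layers $A_i$ for $i\ge 2$ essentially intact, and bound the settling time by the length of the ripple-carry cascade. Where you diverge — and where the proposal leaves a real gap — is the read-out. You introduce a fresh toggle neuron $y_{i-1}$ per layer, flipped by the carry pulse, and you correctly observe that a memory-less threshold gate cannot implement this XOR directly, so you sketch an auxiliary inhibitor that detects ``$y_{i-1}$ on \emph{and} carry present.'' But that inhibitor necessarily lags one round behind the carry it reacts to: at the moment bit $i-1$ should flip $1\to 0$, the self-loop together with the incoming carry still keeps $y_{i-1}$ on for an extra round before the inhibitor catches up, and you would need additional delay/gating neurons (and a verification that the inhibitor itself does not misfire on the subsequent round) to make the toggle clean; the proposal does not work this out. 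The paper sidesteps the whole issue by noticing that the existing neuron $a_{i,1}$ \emph{already is} bit $i-1$ of the count: it turns on one round after $a_{i-1,2}$'s first tick, stays on via its self-loop until $d_i$ inhibits it one round after the second tick, so its firing state equals $\lfloor r/2^{i-1}\rfloor \bmod 2$; the output $y_i$ is then just a direct copy of $a_{i,1}$ with weight $1$ and threshold $1$, and no toggle logic is needed at all. Relatedly, your statement that ``the internal feedback that generates periodic spikes is removed'' in $A_1$ is not what the paper does: the paper replaces the $a_{1,1}\!\leftrightarrow\! a_{1,2}$ oscillation by a self-loop on $a_{1,1}$ together with an input-gated reset via $d_1$, and that self-loop is essential precisely so that $a_{1,1}$ can remember bit $0$ across arbitrarily spaced $x$-spikes. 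So the high-level decomposition is the same, but your output stage both adds neurons and leaves unresolved a timing hazard that the paper's direct read-out avoids for free.
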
 \vspace{-3pt} 
This extra-additive factor of $\log r_\tau$ is due to the update time of the counter.
In Appendix \ref{sec:counting}, we revisit the neural counter problem and provide an \emph{approximate} randomized solution with $O(\log\log t+\log(1/\delta))$ many neurons where $\delta$ is the error parameter. This construction is based on the well-known Morris algorithm (using the analysis of \cite{Flajolet85}) for approximate counting in the streaming model.
%

\vspace{-7pt}
\section{Randomized Constructions of Neural Timer Networks}\label{sec:rand-timer}\vspace{-5pt}
We now turn to consider randomized implementations.
The input to the construction is a time parameter $t$ and an error probability $\delta \in (0,1)$, that are hard-wired into the network. 

\begin{definition}[Rand. Neural Timer Network]
A randomized neural timer $\mathcal{RT}$ for parameters $t \in \mathbb{N}_{>0}$ and $\delta \in (0,1)$, satisfies the following for a time window of $\poly(t)$ rounds.
\begin{itemize}
\item For every fixed firing event of $x$ in round $\tau$, with probability $1-\delta$, $y$ fires in each of the following $t$ rounds.
\item $y^{\tau'}=0$ for every round $\tau'$ such that $\tau'- \Last(\tau') \geq 2t$ with probability $1- \delta$, where $\Last(\tau')=\max\{i \leq \tau' ~\mid~ x^i=1\}$ is the last round $\tau$ in which $x$ fired up to round $\tau'$.
\end{itemize}
\end{definition}

Note that in our definition, we have a success guarantee of $1-\delta$ for any fixed firing event of $x$, on the event that $y$ fires for $t$ many rounds after this firing. In contrast, with probability of $1-\delta$ over the entire span of $\poly(t)$ rounds, $y$ \emph{does not} fire in cases where the last firing of $x$ was $2t$ rounds apart. 
We start by showing a simple construction with $O(\log 1/\delta)$ neurons. 

\subsection{Warm Up: Randomized Timer with $O(\log 1/\delta)$ Neurons}
The network $\RandBasic(t,\delta)$ contains a collection of $\ell=\Theta(\log 1/\delta)$ spiking neurons $A=\{a_1,\ldots, a_\ell\}$ that can be viewed as a \emph{time-estimator population}. Each of these neurons have a positive self loop, a positive incoming edge from the input neuron $x$, and a positive outgoing edge to the output neuron $y$. See Figure~\ref{fig:random} for an illustration. Whereas these $a_i$ neurons are probabilistic spiking neurons\footnote{A neuron that fires with a probability specified in Eq. (\ref{eq:potentialOut})}, the output $y$ is simply a threshold gate. We next explain the underlying intuition. 
Assume that the input $x$ fired in round $0$. It is then required for the output neuron $y$ to fire for at least $t$ rounds $1,\ldots, t$, and stop firing after at most $2t$ rounds with probability $1-\delta$. By having every neuron $a_i$ fires (independently) w.p $(1-1/t)$ in each round given that it fired in the previous round\footnote{A neuron $a_i$ that stops firing in a given round, drops out and would not fire again with good probability.}, we get that $a_i$ fires for $t$ \emph{consecutive} rounds with probability $(1-1/t)^t\approx 1/e$. On the other hand, it fires for $2t$ consecutive rounds with probability $(1-1/t)^{2t}=1/e^2$. Since we have $\Theta(\log 1/\delta)$ many neurons, by a simple application of Chernoff bound, the output neuron $y$ (which simply counts the number of firing neurons in $A$) can distinguish between round $t$ and round $2t$ with probability $1-\delta$.

\begin{figure}[h] 
\begin{center}
\includegraphics[scale=0.3]{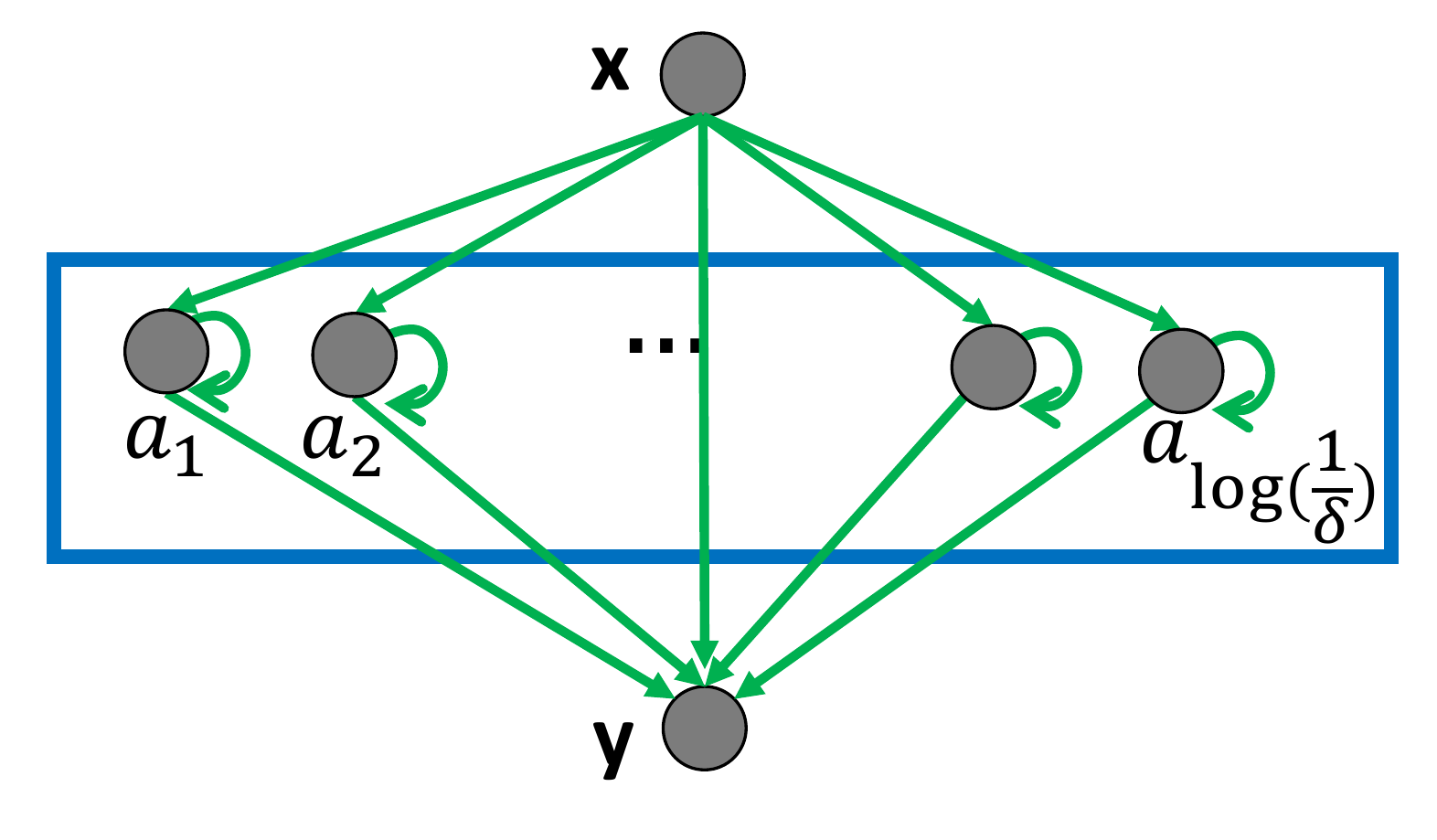}
\end{center}
\caption{\small{Illustration of the $\RandBasic(t,\delta)$ network. Each neuron $a_i$ fires with probability $1-1/t$ in round $\tau$ given that it fired in the previous round, and therefore fires for $t$ consecutive rounds with constant probability. The output $y$ fires if at least $1/(2e)$ fraction of the $a_i$ neurons fired in the previous round.  \label{fig:random}}}
\end{figure}
\paragraph{Detailed Construction.}
The network $\RandBasic(t,\delta)$ has input neuron $x$, output neuron $y$, and $\ell=\Theta(\log 1/\delta)$ spiking neurons $A=\{a_1,\ldots, a_\ell\}$.
We set the weights of the self loop of each $a_i$, and the weight of the incoming edge from $x$ to be $w(x,a_i)=w(a_i,a_i)=\log(t-1)+b(a_i)$. The threshold value of $a_i$ is set to $b(a_i)=\Theta (\log(t\ell/\delta))$. This makes sure that given a firing of either $x$ or $a_i$ in round $\tau$, the probability that $a_i$ fires in round $\tau+1$ is $1-1/t$. In the complementary case (neither $x$ nor $a_i$ fired in round $\tau$), $a_i$ fires in round $\tau$ with probability at most $O(\delta/\poly(t\ell))$. 
For the output $y$, we set $w(a_i,y)=1$ for each $a_i$, the weight of the edge from $x$ to be $w(x,y)= \frac{\ell}{2e}$, and its threshold $b(y)=\frac{\ell}{2e}$. This makes sure that $y$ fires in round $\tau'$ if either $x$ or at least $1/2e$ fraction of the $a_i$ neurons fired in round $\tau'-1$. 
We next analyze the construction.
\begin{lemma}[Correctness]
Within a time window of $\poly(t)$ rounds it holds that:
\begin{itemize}
\item For every fixed firing event of $x$ in round $\tau$, with probability $1-\delta$, $y$ fires in each of the following $t$ rounds.
\item $y^{\tau'}=0$ for every round $\tau'$ such that $\tau'- \Last(\tau') \geq 2t$ with probability at least $1- \delta$.
\end{itemize}	
\end{lemma}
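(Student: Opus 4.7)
The plan is to condition on the firing pattern of $x$ and analyze the $\ell=\Theta(\log 1/\delta)$ spiking neurons $a_1,\ldots,a_\ell$ as $\ell$ independent two-state Markov chains. The key observation I will exploit is that the output threshold fraction $1/(2e)$ at $y$ lies strictly between $1/e$ (the per-neuron survival probability over a horizon of $t$ rounds) and $1/e^2$ (the per-neuron survival probability over a horizon of $2t$ rounds); since $e>2$, both gaps are a positive constant. A Chernoff bound on the $\ell$ independent indicators then separates the two regimes with failure probability $\delta$.

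For the first bullet I will fix a firing $x^\tau=1$. The direct edge $w(x,y)=\ell/(2e)$ with threshold $b(y)=\ell/(2e)$ deterministically forces $y^{\tau+1}=1$. For the rounds $\tau+2,\ldots,\tau+t$, I will call $a_i$ a \emph{survivor through round} $r$ if $a_i^{r'}=1$ for every $r'\in[\tau+1,r]$. Conditional on $x^\tau=1$ and survivorship through $r-1$, the chain fires in round $r$ with probability $1-1/t$, so $a_i$ survives through round $\tau+t-1$ with probability at least $(1-1/t)^{t-1}\ge 1/e$. Let $N$ be the number of such survivors; by a Chernoff bound, $\Pr[N<\ell/(2e)]\le e^{-\Omega(\ell)}\le \delta$ for $\ell=\Theta(\log 1/\delta)$. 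On that event, for every $r\in[\tau+1,\tau+t-1]$ at least $\ell/(2e)$ of the $a_i$'s fire, so $y^{r+1}=1$, yielding $y^{\tau+1}=\cdots=y^{\tau+t}=1$ as required.

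For the second bullet I will fix a round $\tau'$ with $\Delta:=\tau'-\Last(\tau')\ge 2t$. Since $x^{\tau'-1}=0$, the event $y^{\tau'}=1$ requires at least $\ell/(2e)$ of the $a_i$'s to fire in round $\tau'-1$. For a single $a_i$, I will split the event $\{a_i^{\tau'-1}=1\}$ into (i) continuous firing from round $\Last(\tau')+1$ through $\tau'-1$, which occurs with probability at most $(1-1/t)^{\Delta-1}\le (1-1/t)^{2t-1}\le 1/e^2+O(1/t)$, and (ii) a spurious firing from an idle state at some intermediate round, which has per-round probability $q=O(\delta/\poly(t\ell))$ by construction and, after a union bound over the $\poly(t)$-length window, contributes at most $O(\delta/\ell)$. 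Hence $\Pr[a_i^{\tau'-1}=1]\le 1/e^2+o(1)$, strictly below $1/(2e)$ with a constant gap. A Chernoff bound over the $\ell$ independent $a_i$'s gives per-round failure $e^{-\Omega(\ell)}\le \delta/\poly(t)$, and a final union bound over the $\poly(t)$ candidate rounds $\tau'$ closes the argument.

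The hard part will be the clean bookkeeping of (ii): I must verify that the ``dark'' firing probability $q=O(\delta/\poly(t\ell))$, built into the construction through $b(a_i)=\Theta(\log(t\ell/\delta))$, is small enough that the union bound over all $\ell$ neurons and all $\poly(t)$ rounds only costs a constant factor in the size bound, preserving $\ell=\Theta(\log 1/\delta)$. A secondary subtlety is confirming the gap inequalities $1/e>1/(2e)$ and $1/e^2<1/(2e)$ with an absolute constant margin, which is precisely why the output threshold at $y$ was chosen to be $1/(2e)$ rather than $1/e$ or $1/e^2$; both follow cleanly from $e>2$, but must be made quantitative to feed into Chernoff.
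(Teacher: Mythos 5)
Your proposal is correct and takes essentially the same route as the paper: both rest on the separation $(1-1/t)^{t}\approx 1/e$ versus $(1-1/t)^{2t}\approx 1/e^2$ with the output threshold $1/(2e)$ chosen strictly between them, a Chernoff bound over the $\ell=\Theta(\log 1/\delta)$ independent chains, and a union bound over the $\poly(t)$-length window to absorb spurious firings from idle states (which is why $b(a_i)=\Theta(\log(t\ell/\delta))$). The only bookkeeping difference is that the paper first conditions globally on ``no $a_i$ fires from an idle state'' (cost $\delta/2$), under which the set of alive neurons is monotone decreasing between firings of $x$ so a single Chernoff at the $2t$-mark suffices per firing, whereas you bound $\Pr[a_i^{\tau'-1}=1]$ directly per round and union-bound over all rounds; the two accountings are equivalent and give the same asymptotics in the operative regime $\log(1/\delta)\gtrsim\log t$.
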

\begin{proof}
When $x$ fires in round $\tau_0$, each neuron $a_i$ fires for the following $t$ consecutive rounds independently with probability $1/e$. Therefore, the expected number of neurons in $A$ that fired for $t$ consecutive rounds starting round $\tau_0+1$ is $\frac{\ell}{e}$. Using Chernoff bound upon picking a large enough constant $c$ s.t $\ell= c \cdot \log(1/\delta)$, at least $\ell/2e$ auxiliary neurons fired for $t$ consecutive rounds and $y$ fires in rounds $[\tau_0+2,\tau_0+t]$ with probability $1 - \delta$. Since $y$ has an incoming edge from $x$, it fires in round $\tau_0+1$ as well.

Next, recall that for each neuron $a_i\in A$, given that $a_i$ or $x$ did not fire in round $\tau$, the probability that $a_i$ fires in round $\tau+1$ is at most $\delta/\poly(\ell t)$. Hence by union bound, in a window of $\poly(t)$ rounds,  the probability there exists a neuron $a_i\in A$ that fired in round $\tau'$ but did not fire in round $\tau'-1$ is at most $\delta/2$. Assuming no $a_i \in A$ fires unless it fired previously, each $a_i\in A$ fires for $2t$ consecutive rounds with probability $1/e^2$. Using Chernoff bound the probability at least $\frac{\ell}{2e}$ neurons from $A$ fired for $2t$ consecutive rounds is at most $\delta/2$ (again we choose $\ell$ accordingly).
Thus, we conclude that the probability there exists a round $\tau'$ s.t $\tau'- \Last(\tau') \geq 2t$ in which $y^{\tau'}=1$ is at most $\delta$.
\end{proof}
\subsection{Improved Construction with $O(\log\log 1/\delta)$ Neurons}
We next describe an optimal randomized timer $\RandImprovedTimer$ with an exponentially improved number of auxiliary neurons. This construction also enjoys the fact that it requires a \emph{single} spiking neuron, while the remaining neurons can be deterministic threshold gates. 
Due to the tightness of Chernoff bound, one cannot really hope to estimate time with probability $1-\delta$ using $o(\log(1/\delta))$ samples. Our key idea here is to generate the same number of samples by re-sampling one particular neuron over several rounds.  
Intuitively, we are going to show that for our purposes having $\ell=\log(1/\delta)$ neurons $a_1, \ldots, a_\ell$ firing with probability $1-1/t$ in a \emph{given} round is \emph{equivalent} to having a \emph{single} neuron $a^*$ firing with probability $1-1/t$ (independently) in a sequence of $\ell$ rounds. 

Specifically, observe that the distinction between round $t$ and $2t$ in the $\RandBasic$ network is based only on the \emph{number} of spiking neurons in a given round. In addition, the distribution on the number of times $a^*$ fires in a span of $\ell$ rounds is equivalent to the distribution on the number of firing neurons $a_1, \ldots, a_\ell$ in a given round. For this reason, every phase of $\RandImprovedTimer$ simulates a single round of $\RandBasic$. To count the number of firing events in $\ell$ rounds, we use the deterministic neural counter module with $\log \ell=O(\log\log 1/\delta)$ neurons.

We now further formalize this intuition. The network $\RandImprovedTimer$ simulates each round of $\RandBasic$ using a phase of $\ell' =\Theta(\log 1/\delta)$ rounds \footnote{Due to tactical reasons each phase consists of $\ell'=\ell + \log \ell$ rounds instead of $\ell$.}, but with only $O(\log\log 1/\delta)$ neurons. In the $\RandBasic$ network each of the neurons $a_i$ fires (independently) in each round w.p $1-1/t$. Once it stops firing in a given round, it basically drops out and would not fire again with good probability. Formally, consider an execution of the $\RandBasic$ and let $n_i$ be the number of neurons in $A$ that fired in round $i$. In round $i+1$ of this execution, we have $n_i$ many neurons each firing w.p $1-1/t$ (while the remaining neurons in $A$ fire with a very small probability).
In the corresponding $i+1$ phase of the network $\RandImprovedTimer$, the chief neuron $a^*$ fires w.p $1-1/t'$ where $t'= \frac{t}{\ell'}$ for $n'_i \leq \ell $ consecutive rounds\footnote{Note that because each phase takes $\ell'=\Theta(\log 1/\delta)$ rounds, we will need to count $t'= \frac{t}{\ell'}$ many phases. Thus $a^*$ fires with probability $1-1/t'$ rather then w.p $1-1/t$.} where $n'_i$ is the number of rounds in which $a^*$ fired in phase $i$. 

The dynamics of the network $\RandImprovedTimer$ is based on discrete phases. Each phase has a fixed number of $\ell' = O(\ell)$ rounds, but has a possibly different number of \emph{active rounds}, namely, rounds in which $a^*$ attempts firing. Specifically, a phase $i$ has an active part of $n'_i$ rounds where $n'_i$ is the number of rounds in which $a^*$ fired in phase $i-1$. In the remaining $\ell'-n'_i$ rounds of that phase, $a^*$ is idle.  
To implement this behavior, the network should keep track of the number of rounds in which $a^*$ fires in each phase, and supply it as an input to the next phase (as it determines the length of the active part of that phase).
For that purpose we will use the deterministic modules of neural timers and counters. The module $\DetCounter$ with time parameter $\Theta(\log 1/\delta)$ is responsible for counting the number of rounds that $a^*$ fires in a given phase $i$. The output of this module at the end of the phase is the input to a $\DetTimer^*$ module\footnote{Here we use the variant of $\DetTimer$ in which the time is encoded in the input layer of the network.} in the beginning of phase $i+1$. In addition, we also need a \emph{phase timer} module $\DetTimer$ with time parameter $\Theta(\log 1/\delta)$ that ``announces" the end of a phase and the beginning of a new one. Similarly to the network $\RandBasic$, the output neuron $y$ fires as long as $a^*$ fires for at least $(1/2e)$ fraction of the rounds in each phase (in an analogous manner as in the $\RandBasic$ construction). 
See Fig. \ref{fig:loglog-random} for an illustration of the network.
Note that since we only use deterministic modules with time parameter $\Theta(\log 1/\delta)$, the total number of neurons (which are all threshold gates) will be bounded by $O(\log \log 1/\delta)$. 
We next give a detailed description of the network and prove Thm.~\ref{lem:upper-bound-rand}.

\paragraph{Complete Proof of Thm.~\ref{lem:upper-bound-rand}:}
We first describe the modules of the network $\RandImprovedTimer$ that gets as input the time parameter $t$ and error probability $\delta$.

\paragraph{Network Modules:}
\begin{itemize}
\item A \emph{Global-Phase-Timer} module implemented by a (slightly modified) module of $\DetTimer(\ell')$. Due to the update time of $\DetCounter$ (lemma~\ref{lem:det-counter}), we set the length of each phase to $\ell'= \ell+ \log \ell$ where $\ell$ correspond to the number of spiking neurons in $\RandBasic$. 
Upon initializing this timer, the output neuron of this module fires \emph{after} $\ell'$ rounds (instead of firing \emph{for} $\ell'$ rounds). This firing is the wake-up call for the network that a phase has terminated ($\ell'$ rounds have passed). This will activate some cleanup steps, and a subsequent ``announcement" for the start of a new phase. 

To allow this module to inhibit as well as excite other neurons in the network, we will have two output 
(copy) neurons, one will be inhibitor and the other excitatory. The inhibitor activates a clean-up round (in order to clear the counting information from the previous phase). After one round, using a delay neuron the excitatory neuron safely announces the beginning of a new phase.  	
\item An \emph{Internal-Phase-Timer} module also implemented by a (yet a differently slightly modified) variant of $\DetTimer$. The role of this module is to indicate to the spiking neuron $a^*$ the number of rounds in which it should attempt firing in each phase. Recall that each phase $i$ starts by an active part of length $n_i$ in which $a^*$ attempts firing w.p. $1-1/t'$ in each of these rounds. In the remaining $\ell'-n_i$ rounds till the end of the phase, $a^*$ is idle. In each phase $i$, we then set the internal timer to $n_i$, this will activate $a^*$ for $n_i$ rounds. The time parameter $n_i$ is given as input to this module. For that purpose, we use the $\DetTimer^*$ variant in which the time parameter is given as an input. In our case, this input is supplied by the output layer of the counting module (describe next) at the end of phase $i-1$. 
In particular, at the end of the phase, the output of the counting module is fed into the input layer of the \emph{Internal-Phase-Timer} module. Then, the information will be deleted from the counting module, ready to maintain the counting in the next phase.

Since we would need to keep on providing the counting information throughout the entire phase, we augment the input layer of this module by self loops that keeps on presenting this information thought the phase.
	
\item A \emph{Phase-Counter} module implemented by the $\DetCounter$ network, maintains the number of rounds in which $a^*$ fires in the current phase. At the end of every $i^{th}$ phase, the output layer of this module stores the number of rounds in which $a^*$ fired in phase $i$.  
At the end of the phase, upon receiving a signal from the \emph{Global-Phase-Timer}, the output layer copies its information to the input layer of the \emph{Internal-Phase-Timer} module using an intermediate layer of neurons, and the information of the module is deleted (by inhibitory connections from the \emph{Global-Phase-Timer} module).

\paragraph{Complete Description (Edge Weights, Bais Values, etc.)}
\item The neuron $a^*$ has threshold $b(a^*) = \Theta(\log (\ell t /\delta))$, and a positive incoming edge from the output $z_1$ of the \emph{Internal-Phase-Timer} module with weight $w(z_1,a^*)=\ln (t'-1) + b(a^*)$. Therefore $a^*$ fires with probability $1-1/t'$ if $z_1$ fired in the previous round, and w.h.p\footnote{Here high probability refers to probability of $1-\delta/\poly(t)$.} does not fire otherwise.

\item Each neuron in the output of the \emph{Phase-Counter} has a positive outgoing edge to an intermediate \emph{copy} neuron $c_i$ with weight $1$. In addition, each $c_i$ has a positive incoming edge from the \emph{Global-Phase-Timer} excitatory output with weight $1$, and threshold $b(c_i)=2$. The copy neurons have outgoing edges to the input of the \emph{Internal-Phase-Timer} and are used to copy the current count for the next phase.

\item The inhibitor output of the \emph{Global-Phase-Timer} has outgoing edges to all neurons in the \emph{Internal-Phase-Timer} and \emph{Phase-Counter} with weight $-5$. This is used to clean-up the out-dated counting information at the end of the phase. 

\item The excitatory output of the \emph{Global-Phase-Timer} has an outgoing edge to a delay neuron $d$ with weight $1$ and threshold $b(d)=1$. Hence, $d$ fires one round after a phase ended, and alerts the beginning of the new phase. The neuron $d$ has outgoing edges to the input of \emph{Global-Phase-Timer} and \emph{Internal-Phase-Timer} with large weight.

\item The output neuron $y$ has incoming edges from the time input neurons $q_1, \ldots, q_{\log \ell}$ of the \emph{Internal-Phase-Timer} module each with weight $w(q_i,y)=1$ and threshold $b(y)=\frac{\ell}{2e}$.  Therefore $y$ fires if $a^*$ fired for at least $\frac{\ell}{2e}$ times in the previous phase. In addition, $y$ has positive incoming edges from $x$ and the delay neuron $d$ of the \emph{Global-Phase-Timer} module, each with weight $\frac{\ell}{2e}$. This insures that $y$ also fires between phases. 

\item The \emph{Global-Phase-Timer} input has an incoming edge from $x$ with large weight, in order to initialize the timer when the input $x$ fires. In addition, $x$ has outgoing edges with large weight to the time input of the \emph{Internal-Phase-Timer}, such that the decimal value of the input is set to $\ell$. 
\end{itemize}
All neurons except for $a^*$ are threshold gates, 
see Figure~\ref{fig:loglog-random} for a schematic description of the $\RandImprovedTimer$ network. 
\begin{figure} 
\begin{center}
\includegraphics[scale=0.4]{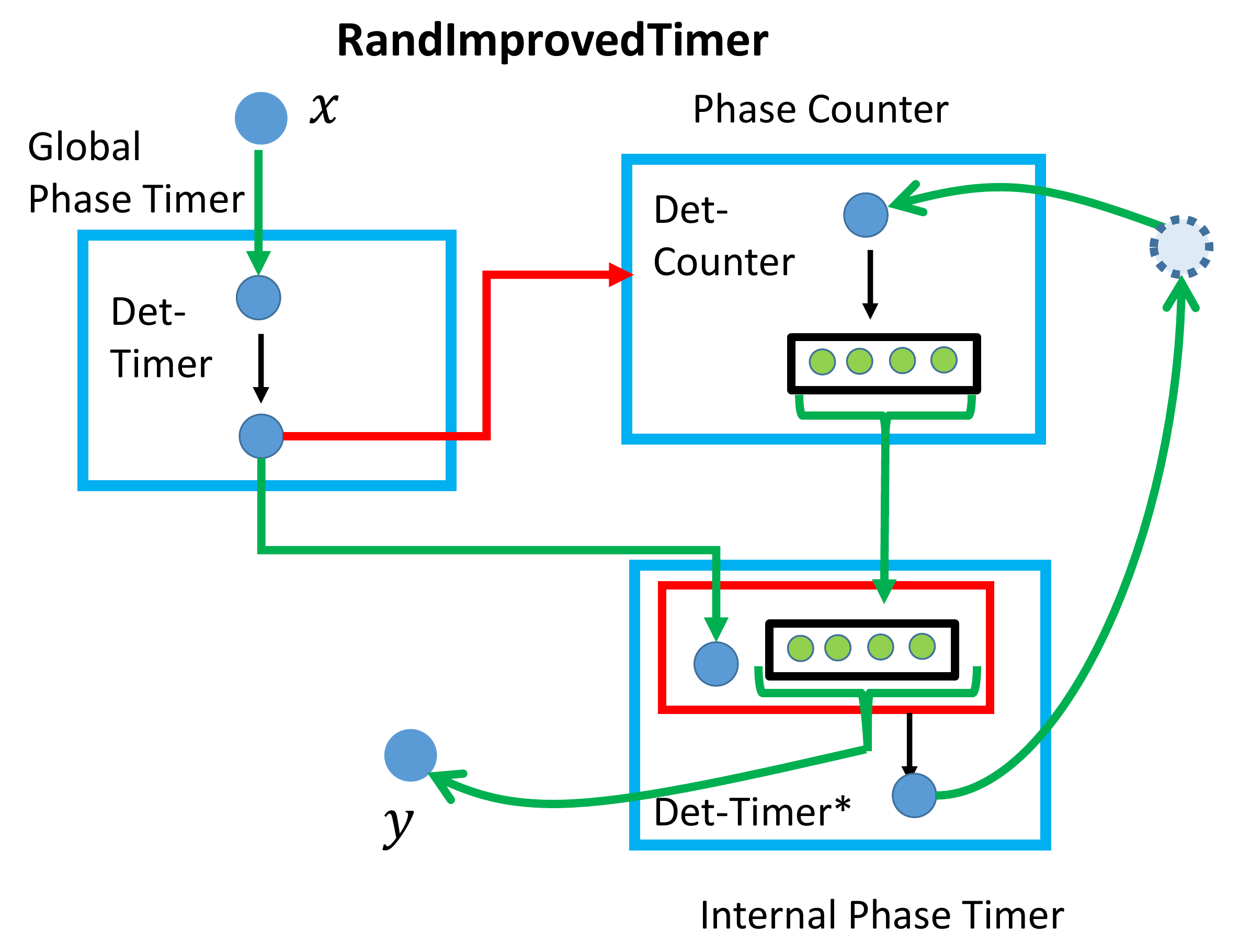}
\end{center}
\caption{ 
Schematic description of the randomized timer. In each module only the input layer and the output layer are shown. Excitatory (inhibitory) relations are shown in green (red) arrows. Each module is deterministic and has $\Theta(\log\log 1/\delta)$ threshold gates. The lower right module (\emph{Internal-Phase-Timer}) uses the variant of the deterministic neural timer in which the time parameter is softly encoded in the input layer. This is crucial as the length of the $(i+1)^{th}$ active phase depends on the spike counts of $a^*$ in phase $i$. This value is encoded by the output layer of the \emph{Phase-Counter} module at the end of phase $i$. In contrast, the \emph{Global-Phase-Timer} module uses the standard neural timer network (hard-wired), as the length of each phase is fixed. \label{fig:loglog-random}}	
\end{figure} 

\paragraph{Correctness.} For simplicity we begin by showing the correctness of the construction assuming that there is a single firing of the input $x$ during a period of $2t$ rounds.
Taking care of the general case requires minor modifications that are described at the end of this section.

Our goal is to show that each \emph{phase} of the $\RandImprovedTimer$ network is equivalent to a \emph{round} in the $\RandBasic$ network. Toward that goal, we start by showing that the length of the active part of phase $i$ has the same distribution as the number of neurons $n_{i-1}$ that fire in round $i-1$ in $\RandBasic(t',\delta)$, where $t'=t/\ell'$. 
In the $\RandBasic(t',\delta)$ construction, let $\bar{B}_i$ be a random variable indicating the event that there exists a neuron $a \in A$ which fired in round $\tau \leq i$ but $a$ as well as $x$ did not fire in round $\tau-1$.
Similarly, for the $\RandImprovedTimer(t, \delta)$ construction, let $\bar{B}_i'$ be a random variable indicating the event that there exists a phase $\tau \leq i$, where neuron $a^*$ fired in an inactive round of phase $\tau$. 
Note that in both constructions, the probability that $a^*$ fired in an inactive round, and the probability that $a \in A$ fired given that it did not fire in the previous round is identical. Moreover, within a window of $ \tau = \poly(t)$ rounds, by union bound both probabilities $\Pr[B_{\tau}]$ and $\Pr[B'_\tau]$ are at most $\delta/2$. 

Let $Y_i$ be a random variable for the number of neurons that fired in the $i^{th}$ round in $\RandBasic(t',\delta)$, and let $X_i$ be the random variable for the number of rounds $a^*$ fired during phase $i$ in $\RandImprovedTimer(t, \delta)$. In both constructions we assume that the input neuron $x$ fired only in round $0$.
\begin{claim}
For any $k \geq 0$ and $i \geq 1$, $\Pr[X_i = k ~\mid~ \bar{B'}_i] = \Pr[Y_i = k ~\mid~ \bar{B}_i]$.
\end{claim}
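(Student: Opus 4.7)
My plan is to proceed by induction on the phase/round index $i$, with the central observation being that once we rule out the rare ``spurious'' firing event (either some $a_j \in A$ in $\RandBasic$ firing in a round when neither $a_j$ nor $x$ fired in the previous round, or $a^*$ in $\RandImprovedTimer$ firing during an inactive round of some phase), the processes governing $Y_i$ and $X_i$ reduce to identical binomial schemes with parameter $1-1/t'$. The key structural fact I will lean on is that, by the deterministic construction of the modules, the active part of phase $i$ in $\RandImprovedTimer$ has length exactly equal to the number of spikes of $a^*$ in phase $i-1$, via the pipeline Phase-Counter $\to$ copy neurons $\to$ Internal-Phase-Timer.

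For the base case $i=1$, both constructions reduce to a single binomial trial. In $\RandBasic$, the input $x$ fires in round $0$ and drives each of the $\ell$ neurons $a_1,\dots,a_\ell$ to fire in round $1$ independently with probability $1-1/t'$, so $Y_1 \mid \bar{B}_1 \sim \text{Bin}(\ell, 1-1/t')$. In $\RandImprovedTimer$, the input $x$ sets the time input of the Internal-Phase-Timer to $\ell$, so phase $1$ has active length $\ell$; in each of these rounds, $a^*$ fires independently with probability $1-1/t'$, so $X_1 \mid \bar{B'}_1 \sim \text{Bin}(\ell, 1-1/t')$.

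For the inductive step, I condition on $Y_{i-1}=n$ in $\RandBasic$ and $X_{i-1}=n$ in $\RandImprovedTimer$. In $\RandBasic$, under $\bar{B}_i$ no neuron that failed to fire in round $i-1$ fires in round $i$, while each of the $n$ neurons that did fire in round $i-1$ fires in round $i$ independently with probability $1-1/t'$; hence $Y_i \mid Y_{i-1}=n,\, \bar{B}_i \sim \text{Bin}(n, 1-1/t')$. In $\RandImprovedTimer$, by the deterministic behavior of the Phase-Counter and Internal-Phase-Timer modules, phase $i$ has active length exactly $n$; conditioned on $\bar{B'}_i$, the $a^*$ spikes in these $n$ active rounds are i.i.d.\ Bernoulli$(1-1/t')$, so $X_i \mid X_{i-1}=n,\, \bar{B'}_i \sim \text{Bin}(n, 1-1/t')$. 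Applying the induction hypothesis and summing over $n$ yields the desired equality.

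The main obstacle I anticipate is the careful bookkeeping around the deterministic modules rather than the probabilistic core: verifying that the value stored in the Phase-Counter at the end of phase $i-1$ is exactly $X_{i-1}$, that this value is correctly transferred to the Internal-Phase-Timer in time for phase $i$ (accounting for the one-round cleanup via the Global-Phase-Timer's inhibitory output and the subsequent wake-up through the delay neuron $d$), and that the conditioning event $\bar{B'}_i$ removes mass only from spurious inactive-round spikes while leaving the joint distribution of the active-round spikes of $a^*$ as independent Bernoulli$(1-1/t')$ variables (this uses the Markovian, memoryless nature of the SNN together with the fact that the randomness of $a^*$ in distinct rounds is independent by construction). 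Once these deterministic-module invariants are isolated as black-box facts, the induction itself is short.
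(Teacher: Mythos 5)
Your proposal is correct and follows essentially the same route as the paper: induction on the phase index, a Binomial$(\ell,1-1/t')$ base case, the observation that conditioning on $\bar{B}_i$ (resp.\ $\bar{B'}_i$) forces the transition kernel $\Pr[\cdot \mid Y_{i-1}=n]$ (resp.\ $\Pr[\cdot \mid X_{i-1}=n]$) to be Binomial$(n,1-1/t')$, and the law of total probability combined with the inductive hypothesis. Your flagged ``bookkeeping obstacles'' (Phase-Counter correctness, transfer to Internal-Phase-Timer, independence across rounds) are exactly the facts the paper treats as consequences of the deterministic module constructions established earlier, so nothing is missing.
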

\begin{proof}
By induction on $i$. For $i=1$, given $\bar{B}_1$, $\bar{B'}_1$, the random variable $X_1$ as well as $Y_1$ are the sum of $\ell$ independent Bernoulli variables with probability $1-1/t'$ and therefore $X_1 = Y_1$. Assume $\Pr[X_i = k ~\mid~ \bar{B}_i] = \Pr[Y_i = k ~\mid~ \bar{B}_i]$ and we will show the equivalence for $i+1$. First recall that in the $\RandBasic(t',\delta)$ construction, each $a \in A$  fires with probability $1-1/t'$ given that it fired in the previous round. Moreover, conditioning on $\bar{B}_{i+1}$, given that $a$ did not fire in round $i$, it does not fire in round $i+1$ as well. Thus, for any $k , j$ it holds that $\Pr[Y_{i+1}=k ~\mid~ Y_i=j, \bar{B}_{i+1}] = \binom{j}{k}(1- 1/t')^k\cdot (1/t')^{j-k}$ (i.e., a binomial distribution). Similarly, in the $\RandImprovedTimer(t,\delta)$ construction, since we assumed that $a^*$ fires only in the active rounds of each phase, given that $a^*$ fired $j$ times in phase $i$, in phase $i+1$ it holds that $\Pr[X_{i+1}=k ~\mid~ X_i=j, \bar{B'}_{i+1}] = \binom{j}{k}(1- 1/t')^k\cdot (1/t')^{j-k}$. 
By the law of total probability we conclude that 
\begin{eqnarray}
\Pr[X_{i+1}&=&k ~\mid~ \bar{B'}_{i+1}] = \sum_{j=0}^{\ell} \Pr[X_{i+1}=k ~\mid~ X_i=j, \bar{B'}_{i+1}] \cdot \Pr[X_i=j ~\mid~  \bar{B'}_{i+1}] \nonumber
\\&=& \sum_{j=0}^{\ell} \Pr[X_{i+1}=k ~\mid~ X_i=j, \bar{B'}_{i+1}] \cdot \Pr[Y_i=j ~\mid~ \bar{B}_i]  \nonumber
\\&=& \sum_{j=0}^{\ell} \binom{j}{k}(1- 1/t')^k\cdot (1/t')^{j-k} \cdot \Pr[Y_i=j ~\mid~ \bar{B}_i] \nonumber
\\&=& \sum_{j=0}^{\ell} \Pr[Y_{i+1}=k ~\mid~ Y_i=j, \bar{B}_{i+1}] \cdot \Pr[Y_i=j ~\mid~ \bar{B}_i] =  \Pr[Y_{i+1}=k ~\mid~ \bar{B}_{i+1}], \nonumber
\end{eqnarray}
where the second equality is due to the induction assumption.
\end{proof}
Hence, by the correctness of the network $\RandBasic(t',\delta)$, with probability at least $1- \delta$ the neuron $a^*$ fires at least $\ell/2e$ times in each of the first $t'$ phases. Since every phase consists of $\ell' = \ell+ \log \ell$ rounds, $y$ fires for at least $\ell'\cdot t' = t$ rounds w.h.p.
On the other hand, with probability at most $\delta/2$ the neuron $a^*$ fires in an inactive round during one of the first $2 t'$ phases. Given that $a^*$ fired only in active-rounds, we conclude that with probability at most $\delta/2$ the output $y$ fires for at least $2t'$ phases. All together, with probability at least $1- \delta$ the output $y$ stops firing by round $2t'\cdot \ell' = 2t$.

Finally, we describe the small modifications needed to handle the case where $x$ fires several times within a window of $2t$ rounds. Upon any firing of $x$, all modules get reset, and a new counting starts. To implement the reset, we connect the input neuron $x$ to two additional neurons, an inhibitor neuron $x_1$, and an excitatory neuron $x_2$ where $w(x,x_1)=w(x,x_2)=1$ with thresholds $b(x_1)= b(x_2) =1$. 
The inhibitor $x_1$ has outgoing edges to all auxiliary neurons in the network with weight $-4$. The excitatory neuron $x_2$ has outgoing edges to the input of \emph{Global-Phase-Timer} and the time input of the \emph{Internal-Phase-Timer}, such that the decimal value is equal to $\ell$ with weights $6$. Thus, after one round of cleaning-up, the network starts to account the last firing of $x$. The output $y$ has incoming edges from $x$ and $x_2$ each with weight $w(x,y)= w(x_1,y)= \frac{\ell}{2e}$, this makes $y$ fire during the reset period.

\subsection{A Matching Lower Bound}
We now turn to show a matching lower bound with randomized spiking neurons. 
Assume towards contradiction there exists a randomized neural timer $\cN$ for a given time parameter $t$ with $N=o(\log\log 1/\delta)$ neurons that succeeds with probability at least $1-\delta$. This implies that once $x$ fired, $y$ fires for $t$ consecutive round with probability $1-\delta$. 
Moreover, there exists some constant $c \geq 2$ such that $y$ stops firing after $(c-1) \cdot t$ rounds w.p $1-\delta$. Throwout the proof, we assume w.l.g that $x$ fired in round $0$.
Recall that the state of the network in time $\tau$ denoted as $s_\tau$ can be described as an $N$-length binary vector.
Since we have $N$ many neurons, the number of distinct states (or configurations) is bounded by $S=2^N=o(\log 1/\delta)$. 
We start by establishing useful auxiliary claims.

We first claim that because we have relatively small number of states, and the memoryless property discussed in section~\ref{sec:perlim} in every window of $t$ rounds there exists a state that occurs at least twice (and with sufficient distance).
Let $s^*$ be a state for which the probability that there exist rounds $t', t'' \leq t$ such that $\frac{t}{3\cdot S} \leq t'-t'' \leq t$ and $s_{t'}=s_{t''}=s^*$ is at least $1/S$.
\begin{claim} \label{claim: low}
	There exists such a state $s^*$.
\end{claim}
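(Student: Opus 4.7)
The plan is to prove the claim by a two-step pigeonhole argument: first I would show that in \emph{every} sample path of the network, some (execution-dependent) state witnesses the event; then averaging over the at most $S$ possible states produces a single state $s^*$ whose probability is at least $1/S$.

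Concretely, fix an arbitrary execution and consider the sequence of configurations $s_1,s_2,\ldots,s_t$. This is a length-$t$ string over an alphabet of size $S=2^N$, so by the usual pigeonhole principle some state $s$ appears in at least $\lceil t/S\rceil$ of these rounds. If we enumerate its occurrence times as $\tau_1<\tau_2<\cdots<\tau_m$ with $m\ge t/S$, then the pair $(\tau_1,\tau_m)$ consists of distinct integers in $[1,t]$ with $\tau_m-\tau_1\ge m-1\ge t/S-1\ge t/(3S)$, where the last inequality holds for $t$ at least a modest constant multiple of $S$ (for smaller $t$ the required gap $t/(3S)$ is less than $1$ and any two equal states trivially satisfy it). Also $\tau_m-\tau_1\le t$ automatically. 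Hence the pair $(\tau_1,\tau_m)$ witnesses the event in the claim for the state $s$, which depends on the execution.

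For each possible state $s$, let $E_s$ denote the event appearing in the claim (existence of rounds $t',t''\le t$ with $t/(3S)\le t'-t''\le t$ and $s_{t'}=s_{t''}=s$). The previous paragraph shows $\sum_s \mathbf{1}[E_s]\ge 1$ pointwise on the sample space, so by linearity of expectation $\sum_s \Pr[E_s]\ge 1$. Since the sum ranges over at most $S$ values, at least one state $s^*$ satisfies $\Pr[E_{s^*}]\ge 1/S$, which is exactly the statement of the claim.

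The argument has no real obstacle; the only subtlety worth flagging is that the pigeonhole in the first step produces a state that may depend on the randomness, while the claim demands a single deterministic $s^*$. The averaging step in the third paragraph is precisely what converts the ``there exists a (random) state'' guarantee into a ``there exists a (fixed) state with non-negligible probability'' guarantee, at the cost of the $1/S$ factor.
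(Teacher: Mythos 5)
Your proof is correct and, structurally, mirrors the paper's: a pigeonhole argument that forces some (execution-dependent) repeated state with a sufficiently large gap, followed by averaging over the $S$ possible states to extract a fixed $s^*$ with $\Pr[E_{s^*}]\ge 1/S$. The only real difference is in the first step. The paper partitions $[0,t]$ into $2(S+1)$ equal intervals and pigeonholes the states seen in the $S+1$ even-indexed intervals, which immediately guarantees two occurrences separated by at least one full interval of length $t/2(S+1)\ge t/(3S)$. You instead count occurrences of the most frequent state ($m\ge t/S$) and observe that its first and last occurrences are spread by at least $m-1\ge t/S-1$, then compare $t/S-1$ to $t/(3S)$. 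Both routes work; the paper's avoids the $-1$ slack, while yours has to argue separately when $t$ is small. On that point, your parenthetical claim that ``for smaller $t$ ... any two equal states trivially satisfy it'' is slightly too glib: if $t\le S$ there may be no repeated state at all, so the premise fails, not just the gap bound. The paper sidesteps this entirely by noting up front that in the regime being contradicted ($N=o(\log\log 1/\delta)$, $1/\delta\le 2^{\poly(t)}$) one always has $t/(3S)\ge 1$, i.e., $t\ge 3S$; you should invoke that same observation rather than the hand-wave, after which your $t/S-1\ge t/(3S)$ step is airtight.
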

\begin{proof}
	Note that because $N = o(\log \log 1/\delta)$ and $1/\delta \leq 2^{\poly(t)}$ it holds that $\frac{t}{3\cdot S} \geq 1$.
	We partition the interval $[0,t]$ into $2\cdot (S+1)$ balanced intervals, each of size $t/2(S+1)$. 
	Because we have only $S$ different states, in every execution of the network for $t$ many rounds, there must be a state that occurs in at least two even intervals. 
	Thus, there exists a state $s^{*}$ for which the probability that $s^{*}$ occurred in two even intervals is at least $1/S$. Because each interval is of size $t/2(S+1) \geq t/3S$ we conclude that the claim holds.	
\end{proof}
Next we use the assumption that with probability at least $1 - \delta$ the output $y$ fires in rounds $[0,t]$ as well as the memoryless property to show that given that state $s^*$ occurred in round $t'$, with a sufficiently large probability, $s^*$ occurs again with a long enough interval, and $y$ fires in all rounds between the two occurrences of $s^*$.
Let $p(t')=\Pr[\exists t'' \in [t'+t/(3S), t'+t], s_{t''}=s^* \mbox{~~and~~} y^{t^*}=1 ~~\forall t^*\in [t',t''] ~\mid~ s_{t'}=s^*]$. By the memoryless property, we have: 
\begin{Observation}
	$p(t')=p(t'')$ for every $t',t''$.
\end{Observation}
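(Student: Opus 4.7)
The plan is to invoke the time-homogeneous Markov property stated in Section~\ref{sec:perlim}. First I would observe that the output neuron $y$ is itself a coordinate of the configuration vector $s_\tau$, so the indicator $y^{t^*}=1$ is a deterministic function of $s_{t^*}$. Consequently, the event inside the definition of $p(t')$ can be rewritten purely in terms of the trajectory $(s_{t'},s_{t'+1},\ldots,s_{t'+t})$: it is the event that for some $r\in[t/(3S),t]$ one has $s_{t'+r}=s^*$ and, for every $r'\in[0,r]$, the $y$-coordinate of $s_{t'+r'}$ equals $1$. Call this event $A(t')$; it is a finite union over $r$ of intersections of finitely many cylinder events, all indexed by time offsets from $t'$.

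Next I would use the memoryless property to argue that the conditional joint distribution of the shifted trajectory $(s_{t'+1},\ldots,s_{t'+t})$ given $s_{t'}=s^*$ does not depend on $t'$. The one-step version of this is immediate from the Markov property, and the paper's stronger formulation $\Pr[s_{\tau+r}=s^*\mid s_\tau]=\Pr[s_{\tau'+r}=s^*\mid s_{\tau'}]$ expresses time-homogeneity of the transition kernel. A straightforward induction on the trajectory length, repeatedly conditioning on the next state and applying the one-step time-homogeneous transition, promotes this to equality of full joint distributions: for any fixed sequence $(u_1,\ldots,u_t)$ of states, $\Pr[s_{t'+1}=u_1,\ldots,s_{t'+t}=u_t\mid s_{t'}=s^*]=\Pr[s_{t''+1}=u_1,\ldots,s_{t''+t}=u_t\mid s_{t''}=s^*]$.

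Finally, since $A(t')$ and $A(t'')$ are the same measurable event when expressed in terms of the shifted trajectory, and the two conditional joint distributions over such shifted trajectories coincide, summing (or taking unions) of probabilities over the finitely many realizations witnessing $A$ yields $p(t')=p(t'')$. I do not expect a real obstacle here: the only slightly delicate point is promoting the one-step time-homogeneity to joint distributions of length-$t$ trajectories, and this is the standard and routine extension for Markov chains.
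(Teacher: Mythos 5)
Your proposal is correct and takes essentially the same approach as the paper, which simply cites the memoryless property without spelling out the details; you fill in exactly the expected routine steps (expressing the event as a function of the shifted trajectory, then promoting one-step time-homogeneity to equality of conditional joint distributions by induction on trajectory length).
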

Define $p^*=p(1)= p(t')$ for any round $t'$. The next claim shows that $p^*$ is sufficiently large. 
\begin{claim} \label{clm: low3}
	$p^*\geq 1/S-\delta$.
\end{claim}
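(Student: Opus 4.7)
The plan is to combine the two guaranteed events and then collapse the resulting union via the memoryless property. Let $F$ be the event that $y^{\tau}=1$ for every $\tau\in[1,t]$; by correctness, $\Pr[F]\ge 1-\delta$. Let $E$ be the event provided by Claim~\ref{claim: low}, namely that there exist rounds $t_1<t_2$ in $[1,t]$ with $s_{t_1}=s_{t_2}=s^*$ and $t_2-t_1\ge t/(3S)$; by that claim $\Pr[E]\ge 1/S$. An inclusion--exclusion step then gives
\[
\Pr[E\cap F]\ \ge\ \Pr[E]+\Pr[F]-1\ \ge\ 1/S-\delta.
\]

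Next, I would introduce the first-hit time $\tilde\tau=\min\{i\in[1,t]:\ s_i=s^*\}$ (set $\tilde\tau=\infty$ if no such $i$ exists). The key observation is that on $E\cap F$, we have $\tilde\tau\le t_1$, so the round $t_2$ from $E$ satisfies
\[
t_2-\tilde\tau\ \ge\ t_2-t_1\ \ge\ t/(3S),\qquad t_2\ \le\ t\ \le\ \tilde\tau+t,
\]
and, because $[\tilde\tau,t_2]\subseteq[1,t]$, event $F$ guarantees $y^{\tau^*}=1$ for every $\tau^*\in[\tilde\tau,t_2]$. Writing $A_k$ for the event $\{s_k=s^*,\ \exists t''\in[k+t/(3S),k+t]\colon s_{t''}=s^*\ \text{and}\ y^{\tau^*}=1\ \forall \tau^*\in[k,t'']\}$, this shows that on $E\cap F$ the event $A_{\tilde\tau}$ holds.

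Hence $\Pr[E\cap F]\le \sum_{k=1}^{t}\Pr[\tilde\tau=k,\ A_k]$, the events being disjoint by the definition of $\tilde\tau$. Since $\{\tilde\tau=k\}$ is determined by $s_1,\dots,s_k$ (and forces $s_k=s^*$), while $A_k$ is determined by $s_k,\dots,s_{k+t}$, the memoryless property of the Markov chain yields
\[
\Pr[A_k\mid \tilde\tau=k]\ =\ \Pr[A_k\mid s_k=s^*]\ =\ p(k)\ =\ p^*,
\]
where the last equality uses the time-homogeneity observation that $p(t')$ does not depend on $t'$. Therefore
\[
1/S-\delta\ \le\ \Pr[E\cap F]\ \le\ \sum_{k=1}^{t}\Pr[\tilde\tau=k]\cdot p^*\ \le\ p^*,
\]
which is exactly the desired bound.

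The only delicate step is the last one: one must check carefully that $A_k$ really depends only on $s_k,s_{k+1},\dots$ (in particular, that $y^{\tau^*}$ is captured by the state vector $s_{\tau^*}$, which holds since $y$ is one of the network's neurons) and that $\{\tilde\tau=k\}$ is a function of $s_1,\dots,s_k$ only, so the Markov factorization is legitimate. Once this bookkeeping is in place, the combination of the two probabilistic guarantees with the first-hit reduction yields the claim.
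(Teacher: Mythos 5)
Your proof is correct and follows essentially the same route as the paper's: both combine the two guarantees via inclusion-exclusion/union bound to get $\Pr \geq 1/S - \delta$, decompose over the first occurrence of $s^*$ (your explicit first-hit time $\tilde\tau$ plays exactly the role of the paper's indicators $F(t')$), and then apply the Markov property together with the time-homogeneity observation to collapse each conditional term to $p^*$. The one cosmetic difference is that you keep the upper end of the $t''$ window as $k+t$ while the paper shrinks it to $t$, but this only makes your bound looser in the right direction, so nothing changes.
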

\begin{proof}
	Let $A$ be an indicator random variable for the event that there exist $0<t',t''<t$ such that $t''-t'\in [t/3S, t]$, $s_{t'}=s_{t''}=s^*$. 
	Let $B$ be the indicator random variable for the event that there exists $t^* \in [0,t]$ such that $y^{t^*}=0$. 
	By Claim~\ref{claim: low}, $\Pr[A] \geq 1/S$, and by the success guarantee of the network, $\Pr[B] \leq \delta$. Hence, by union bound, we get 
	$\Pr[A \wedge \bar{B}] \geq 1/S-\delta~.$
	
	Let $A(t',t'')$ be the indicator random variable for the event that $s_{t'}=s_{t''}=s^*$ and $y^{t^*}=1$ for every $t^* \in [t',t'']$. Let $F(t')$ be the indicator random variable for the event that $s^*$ appears in round $t'$ for the first time. Hence, we get
	\begin{eqnarray}
	1/S-\delta &\leq& \Pr[A \wedge \bar{B}] \leq \sum_{0<t' <t-t/(3S)}\Pr[F(t') \wedge \left(\exists t'' \in [t'+t/3S,t] \mbox{~~s.t~~} s_{t''}=s^*\right) \wedge \bar{B}] \nonumber
	\\& \leq& 
	\sum_{0<t' <t-t/(3S)}\Pr[F(t') \wedge \left(\exists t'' \in [t'+t/3S,t] \mbox{~~s.t~~} A(t',t'')=1 \right)] \nonumber
	\\&= & \sum_{0<t' <t-t/(3S)}\Pr[F(t')] \cdot \Pr[\left(\exists t'' \in [t'+t/3S,t] \mbox{~~s.t~~} A(t',t'')=1 \right) ~\mid~ F(t')]  \nonumber
	\\&= &
	\sum_{0<t' <t-t/(3S)}\Pr[F(t')] \cdot \Pr[\left(\exists t'' \in [t'+t/3S,t] \mbox{~~s.t~~} A(t',t'')=1 \right) ~\mid~ s_{t'}=s^*] \nonumber
	\\& = &
	\sum_{0<t' <t-t/(3S)}\Pr[F(t')] \cdot p^* \leq p^*, \nonumber
	\end{eqnarray}
	where the second inequality is by union bound over all possibilities for event $A$.
	The third equation is due to the memoryless property, the probability that the event occurred conditioning on $F(t')$, is equivalent to conditioning on $s_{t'}=s^*$. 
	The last equality follows by summing over a set of disjoint events $F(t')$.
\end{proof}
We are now ready to complete the proof of Theorem \ref{lem:lower-bound-rand}. 

\paragraph{Proof of Thm.~\ref{lem:lower-bound-rand}.} 
We bound the probability that $y$ fires in each of the first $c \cdot t$ rounds.
Let $C$ be the indicator random variable for the event that there exists a sequence of rounds $0 = \tau_0 <\tau_1< \tau_2<\dots<\tau_{3c \cdot S}$ such that for every $i \geq 1$, it holds that: 
\begin{itemize}
	\item $y^{t^*}=1$ for all $t^{*} \in [\tau_{i-1}, \tau_{i}]$.
	\item  $ s_{\tau_i}=s^*$.
	\item $\tau_i-\tau_{i+1} \in [t/3S,t]$.
\end{itemize} 
Note that because $\tau_{i+1}-\tau_{i} \geq t/3S$, it holds that $\tau_{3c \cdot S} \geq c \cdot t$.
Hence, the probability that $y$ fires in each of the first $c \cdot t$ rounds is at least $\Pr[C]$.
Next, we calculate the probability of the event $C$. 
Recall that given that  $s_{\tau_{i}}=s^*$, the probability there exists a round $\tau_{i+1} \in [\tau_i+t/3S,\tau_i+t]$ for which $A(\tau_{i}, \tau_{i+1})$ is equal to $p(\tau_{i})= p^*$. Moreover, by Claim~\ref{claim: low} and the success guarantee, the probability there exists $0 < \tau_1 < t$ such that $A(0,\tau_1)$ is at least $1/S-\delta$. By Claim~\ref{clm: low3} and the memoryless property, we have:
\begin{eqnarray}
\Pr[C] &=&  \Pr[\exists \tau_1 < t ~s.t~ A(0,\tau_1)]\cdot \prod_{i=1}^{3c \cdot S-1}\Pr[\exists \tau_{i+1}\in [\tau_i+t/3S,\tau_i+t] ~s.t~ A(\tau_{i},\tau_{i+1}) ~\mid~ s_{\tau_i = s^*}] \nonumber 
\\& \geq& (1/S-\delta) \prod_{i=1}^{3c \cdot S-1}p^* \geq \left(1/S-\delta\right)^{3 c  S}~. \nonumber
\end{eqnarray}
Taking $N \leq (\log \log 1/\delta - \log\log\log 1/\delta) -\log 6c = \log(\frac{\log 1/\delta}{\log \log 1/\delta}) -\log 6c$, the number of different states is bounded by  $S < \frac{ \log 1/\delta}{6 c \cdot \log \log 1/ \delta}$. Thus the network fails with probability at least 
$$\left(1/S- \delta\right)^{3c \cdot S} > \left(\frac{6 \cdot c \cdot \log \log 1/ \delta}{\log 1/\delta}\cdot \frac{1}{2}\right)^{\frac{\log 1/ \delta}{\log \log 1/\delta}} > \delta~,$$ 
in contradiction to the success guarantee of at least $1 - \delta$.

\section{Applications to Synchronizers}

\textbf{The Asynchronous Setting.} In this setting, the neural network $\cN =\langle X,Z,Y,A, w,b \rangle$ also specifies a response latency function $\ell: A \to \mathbb{N}_{>0}$. For ease of notation, we normalize all latency values such that $\min_{e \in A} \ell(e)=1$ and denote the maximum response latency by $L=\max_{e \in A} \ell(e)$. 
Supported by biological evidence \cite{ikeda2006autapses}, we assume that self-loop edges (a.k.a. autapses) have the minimal latency in the network, that $\ell((u,u))=1$ for self-edges $(u,u)$. This assumption is crucial in our design\footnote{In a follow-up work, we actually show that this assumption is necessary for the existence of syncrnoizers even when $L=2$.}. Indeed the exceptional short latency of self-loop edges has been shown to play a critical role in biological network synchronization \cite{ma2015autapse,fan2018autapses}.
The dynamic proceeds in synchronous rounds and phases. The length of a round corresponds to the minimum edge latency, this is why we normalize the latency values so that $\min_{e \in A} \ell(e)=1$. If neuron $u$ fires in round $\tau$, its endpoint $v$ receives $u$'s signal in round $\tau+\ell(e)$. Formally, a neuron $u$ fires in round $\tau$ with probability $p(u,\tau)$:
\begin{align}
\label{eq:potentialOut-async}
\pot(u,\tau)= \hspace{-.65em}  \sum_{v \in X\cup Z \cup Y}w_{v,u}\cdot v^{\tau-\ell(u,v)} -b(u) 
\text{ and }p(u,\tau)=\frac{1}{1+e^{-\frac{\pot(u,\tau)}{\lambda}}}
\end{align}

\paragraph{Synchronizer.}
A synchronizer $\nu$ is an algorithm that gets as input a network $\cN_{\sync}$ and outputs a network $\cN_{\async}= \nu(\cN_\sync)$ such that $V(\cN_{\sync}) \subseteq V(\cN_{\async})$ where $V(\cN)$ denotes the neurons of a network $\cN$. The network $\cN_{\async}$ works in the asynchronous setting and should have \emph{similar execution} to $\cN_{\sync}$ in the sense that
for every neuron $v \in V(\cN_{\sync})$, the firing pattern of $v$ in the asynchronous network should be similar to the one in the synchronous network. The output network $\cN_{\async}$ simulates each round of the network $\cN_{\sync}$ as a \emph{phase}. 
 
\begin{definition}[Pulse Generator and Phases]
A \emph{pulse generator} is a module that fires to declare the end of each phase.
Denote by $t(v,p)$ the (global) round in which neuron $v$ receives the $p^{th}$ spike from the pulse generator. We say that $v$ is in phase $p$ during all rounds $\tau \in [t(v,p-1), t(v,p)]$. 
\end{definition} 
\begin{definition}[Similar Execution (Deterministic Networks)]
The \emph{synchronous} execution $\Pi_{\sync}$ of a deterministic network $\cN_{\sync}$ is specified by a list of states $\Pi_{\sync}=\{\sigma_1,\ldots, \}$ where each $\sigma_i$ is a binary vector describing the firing status of the neurons in round $i$. The \emph{asynchronous} execution of network $\cN_{\async}$ denoted by $\Pi_{\async}$ is defined analogously only when applying the asynchronous dynamics (of Eq. (\ref{eq:potentialOut-async})).  
The execution $\Pi_{\async}$ is divided into phases of fixed length. 
The networks $\cN_{\sync}$ and $\cN_{\async}$ have a \emph{similar execution} if $V(\cN_{\sync}) \subseteq V(\cN_{\async})$, and in addition, a neuron $v \in V(\cN_{\sync})$ fires in round $p$ in the execution $\Pi_{\sync}$  iff $v$ fires during phase $p$ in $\Pi_{\async}$.
\end{definition}
\vspace{-5pt}
For simplicity of explanation, we assume that the network $\cN_{\sync}$ is deterministic. However, our scheme can easily capture randomized networks as well (i.e., by fixing the random bits in the synchronized simulation and feeding it to the async. one). 
\subsection{Extension for Randomized Networks}\label{sec:rand-sync}
 For networks $\cN_{\sync}$ that contain also probabilistic threshold gates, the notion of similar execution is defined as follows. Consider a fixed execution $\Pi_{\sync}$ of the network $\cN_{\sync}$. In each round of simulating $\cN_{\sync}$, the spiking neurons flip a coin with probability that depends on their potential. Once we fix those random coins used by the neurons in the execution $\Pi_{\sync}$, the process becomes deterministic. Formally, for every round $p$ and neuron $v$, let $R(v,p)$ be the set of random coins used by the neuron $v$ in round $p$ in the execution $\Pi_{\sync}$. The firing decision of $v$ in round $p$ is fully determined given those bits. The asynchronous network $\cN_{\async}$ contains a set of neurons $V'$ that are analogous to the neurons in $\cN_{\sync}$ and an additional set of deterministic threshold gates.
When simulating this network, the neurons in $V'$ will use the \emph{same} random coins as those used by their corresponding neurons in $\Pi_{\sync}$: in each phase $p$ in the execution $\Pi_{\async}$, the neuron $v$ will be given the bits $R(v,p)$ and will base its firing decision using a deterministic function of its current potential, bias value and $R(v,p)$. This allows us to restrict attention to deterministic networks\footnote{Where the neurons in those networks are not necessarily threshold gates, but rather base their firing decision using \emph{some} deterministic function}.  

\textbf{The Challenge.}  
Consider a network of a threshold gate $z$ with two incoming inputs: an excitatory neuron $x$, and an inhibitory neuron $y$. The weights are set such that $z$ computes $X  \wedge \bar{Y}$ thus it fires in round $\tau$ if $x$ fired in round $\tau-1$ and $y$ did not fire. Implementing an $X  \wedge \bar{Y}$ gate in the asynchronous setting is quite tricky. In the case where both $x$ and $y$ fire in round $\tau$, in the synchronous network, $z$ should not fire in round $\tau+1$. However, in the asynchronous setting, if $\ell(x,z)<\ell(y,z)$, then $z$ will mistakenly fire in round $\tau+\ell(x,z)$. This illustrates the need of enforcing a \emph{delay} in the asynchronous simulation: the neurons should attempt firing only after receiving \emph{all} their inputs from the previous phase. We handle this by introducing a pulse-generator module, that announces when it is safe to attempt firing.

To illustrate another source of challenge, consider the asynchronous implementation of an AND-gate $X  \wedge  Y$. 
If both $x$ and $y$ fire in round $\tau$, then $z$ fires in round $\tau+1$ in the synchronous setting. However, if the latencies of the edges $\ell(x,z)$ and $\ell(y,z)$ are distinct, $z$ receives the spike from $x$ and $y$ in \emph{different} rounds, thus preventing the firing of $z$. Recall, that $z$ has no memory, and thus  its firing decision is based only on the potential level in the previous round. 
To overcome this hurdle, in the transformed network, each neuron in the original synchronous network is augmented with $3$ copy-neurons, some of which have self-loops. Since self-loops have latency $1$, once a neuron with a self-loop fires, it fires in the next round as well. This will make sure that the firing states of $x$ and $y$ are kept on being presented to $z$ for sufficiently many rounds, which guarantees the existence of a round where both spikes arrive. 

While solving one problem, introducing self-loops into the system brings along other troubles. Clearly, we would not want the neurons to fire forever, and at some point, those neurons should get inhibition to allow the beginning of a new phase. This calls for a delicate \emph{reset} mechanism that cleans up the old firing states at the end of each phase, only after their values have already being used. Our final solution consists of global synchronization modules (e.g., pulse-generator, reset modules) that are inter-connected to a modified version of the synchronous network. 
Before explaining those constructions, we start by providing a modified neural timer $\DetTimer_{\async}$ adapted to asynchronous setting. This timer will be the basic building block in our global synchronization infrastructures. 

\paragraph{Asynchronous Analog of $\DetTimer$.}
A basic building block in our construction is a variant of $\DetTimer$ to the asynchronous setting. Observe that the $\DetTimer$ implementation of Sec. \ref{sec:det-timer} might fail miserably in the asynchronous setting, e.g., when the edges $(a_{i-1,2}, a_{i,2})$ have latency $2$ for every $i \geq 2$, and the remaining edges have latency $1$, the timer will stop counting after $\Theta(\log t)$ rounds, rather than after $t$ rounds. In Appendix~\ref{app:det-timer-async}, we show:
\vspace{-5pt}
\begin{lemma}\label{lem:det-timer-async}[Neural Timer in the Asynchronous Setting]
	For a given time parameter $t$, there exists a deterministic network $\DetTimer_{\async}$ with $O(L \cdot \log t)$ neurons, satisfying that in the asynchronous setting with maximum latency $L$, the output neuron fires at least $\Theta(t)$ rounds, and at most $\Theta(L \cdot t)$ rounds after each firing of the input neuron.
\end{lemma}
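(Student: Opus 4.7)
The plan is to take the synchronous construction of Section~\ref{sec:det-timer} and make each of its $\log t$ doubling layers robust to latency variations by padding it with $\Theta(L)$ auxiliary neurons. Since the synchronous timer relies on coincidence detection (e.g., $a_{i,2}$ fires only when $a_{i,1}$ and $a_{i-1,2}$ both fire in the previous round) and on tight one-round inhibition (via the reset neuron $d_i$), both of which can fail when edge latencies vary in $[1, L]$, the main idea is to replace each ``instantaneous'' event with an $\Theta(L)$-round active window that can be reliably observed by downstream layers.

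Concretely, for each layer $A_i$ of the synchronous construction I would replace the counting neuron $a_{i,1}$ by a small module consisting of a self-loop carrying neuron (of latency $1$, guaranteed by the self-loop assumption) followed by a chain of $L$ delay neurons feeding back an inhibitor $d_i'$. Once triggered, this module remains active for exactly $\Theta(L)$ consecutive rounds and is then cleanly reset. The coincidence detector for $a_{i,2}$ is modified to fire whenever the active window of $a_{i,1}$ overlaps with a firing signal from $a_{i-1,2}$; because $a_{i,1}$'s window has width $\Theta(L)$ and any single edge latency is at most $L$, every firing of $a_{i-1,2}$ reliably falls inside the window of $a_{i,1}$'s current activation, regardless of the particular latency assignment. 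Each layer thus uses $O(L)$ neurons, and summing over the $\log t$ layers yields the claimed $O(L \cdot \log t)$ bound.

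For correctness I would prove an analog of Claim~\ref{clm:layers} by induction on $i$: once $x$ fires, the $\ell$-th firing of $a_{i,2}$ occurs in some round within $[\ell \cdot 2^i, \ell \cdot c \cdot L \cdot 2^i]$ for a constant $c$. The base case $i=1$ follows from the self-loop dynamics plus the bound on edge latencies, and the inductive step uses the fact that each firing of $a_{i-1,2}$ triggers exactly one fresh firing of $a_{i,2}$ (thanks to the $\Theta(L)$-padded reset, which closes the active window before the next firing of $a_{i-1,2}$ can arrive). Plugging $i = \log t$ then yields an output-firing window of $[\Theta(t), \Theta(L \cdot t)]$, as required.

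The main obstacle, in my view, is calibrating the reset mechanism: the inhibition that closes $a_{i,1}$'s active window must fire \emph{late enough} that every potentially slower incoming signal from $a_{i-1,2}$ has been observed (so no doubling event is missed, preserving the $\Theta(t)$ lower bound), yet \emph{early enough} that two distinct firings of $a_{i-1,2}$ are never absorbed into the same window (so at most $\Theta(L)$ overcounting per layer, preserving the $\Theta(L \cdot t)$ upper bound). The $L$-neuron delay chain driving $d_i'$ is tuned to achieve precisely this balance, exploiting the fact that self-edges, per the model assumption, have latency $1$, while all other edges have latency at most $L$.
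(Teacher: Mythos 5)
Your overall plan---pad each doubling layer with $\Theta(L)$ chain neurons to smooth out latency variation, then re-prove the layered-period claim by induction---is the right shape and is indeed what the paper does. But the specific mechanism you propose for $a_{i,1}$ breaks the doubling invariant. You have $a_{i,1}$'s module self-terminate after a fixed $\Theta(L)$-round window, via a chain of $L$ delay neurons feeding back an inhibitor. The trouble is that the whole point of $a_{i,1}$ is to stay active from one firing of $a_{i-1,2}$ to the \emph{next} firing of $a_{i-1,2}$, so that $a_{i,2}$ sees the coincidence only on every second firing of the layer below. By your own induction hypothesis that gap is $\Theta(L \cdot 2^{i-1})$ rounds, which dwarfs $\Theta(L)$ for all but the first layer. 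If the window closes after $\Theta(L)$ rounds, $a_{i,1}$ is idle by the time $a_{i-1,2}$ fires again and $a_{i,2}$ never fires. Conversely, the reading you give in the inductive step---``each firing of $a_{i-1,2}$ triggers exactly one fresh firing of $a_{i,2}$''---means $a_{i,2}$ fires at the \emph{same} rate as $a_{i-1,2}$, not half as often; then the output neuron at layer $\log t$ fires after only $O(L\log t)$ rounds, not $\Theta(t)$, and the claimed bound $[\ell\cdot 2^i,\, \ell\cdot cL\cdot 2^i]$ does not follow from that step. Either way the frequency halving, which is the whole engine of the construction, is lost.

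The paper avoids this by keeping $a_{i,1}$'s original self-loop (latency $1$ by the model assumption) so that it stays active for the entire inter-firing gap, and by making the reset neuron $d_i$ \emph{event-driven} rather than time-driven: $d_i$ now receives input only from $a_{i,2}$, so $a_{i,1}$ is cleared only after a successful doubling event, whatever duration that took. The $\Theta(L)$-length chains are placed \emph{forward}---between $a_{i-1,2}$ and $a_{i,1}$, plus a $4L$-chain in the first layer---for a different purpose: to guarantee that the spike from $a_{i-1,2}$ reaches $a_{i,2}$ strictly before the spike from $a_{i,1}$ does, so that $a_{i,2}$ cannot misfire on the first incoming signal. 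Finally, the paper's inductive argument leans crucially on the fact that edge latencies are \emph{fixed} throughout the execution: if $a_{i-1,2}$ fires with a fixed period $\tau_{i-1}$, the arrival pattern at layer $i$ is reproducible, giving a fixed period $\tau_i \in [2\tau_{i-1},\,2\tau_{i-1}+O(L^2)]$. Your overlap argument does not use this periodicity and, as explained, cannot by itself yield the frequency halving; to repair it you would need the active window of $a_{i,1}$ to scale with the layer's period, which is exactly what the self-loop plus event-driven reset achieves and a fixed-length delay chain cannot.
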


\paragraph{Description of the Syncronizer.} 
The construction has two parts: a global infrastructure, that can be used to synchronize many networks\footnote{It is indeed believed that the neural brain has centers of synchronization.}, and an adaptation of the given network $\cN_{\sync}$ into a network $\cN_{\async}$.  
The global infrastructures consists of the following modules:
\begin{itemize}
\item A \emph{pulse generator} $PG$ implemented by $\DetTimer_{\async}$ with time parameter $\Theta(L^3)$. 
\item A \emph{reset} module $R_1$ implemented by a directed chain of $\Theta(L)$ neurons \footnote{Each neuron in the chain has an incoming edge from its preceding neuron with weight $1$ and threshold $1$.} with input from the output neuron of the $PG$ module.
\item A \emph{delay} module $D$ implemented by $\DetTimer_{\async}$ with time parameter $\Theta(L^2)$ and input from the output of of the $PG$ module.
\item Another \emph{reset} module $R_2$ implemented by a chain of $\Theta(L)$ neurons with input from $D$.
\end{itemize}
\vspace{-5pt}
The heart of the construction is the pulse-generator that fires once within a fixed number of $\ell \in [\Theta(L^3),\Theta(L^4)]$ rounds,  and invokes a cascade of activities at the end of each phase. When its output neuron $g$ fires, it activates the reset and the delay modules, $R_1$ and $D$. The second reset module $R_2$ will be activated by the delay module $D$. Both reset modules $R_1$ and $R_2$ are implemented by chains of length $L$, with the last neuron on these chains being an \emph{inhibitor} neuron. The role of the reset modules is to \emph{erase} the firing states of some neurons (in $\cN_{\async}$) from the previous phase, hence their output neuron is an inhibitor. The timing of this clean-up is very delicate, and therefore the reset modules are separated by a delay module that prevents a premature operation.  The total number of neurons in these global modules is $O(L \cdot \log L)$. 
We next consider the specific modifications to the synchronous network $\cN_{\sync}$ (see Fig. ~\ref{fig:async-global}).
\vspace{-5pt}
\paragraph{Modifications to the Network $\cN_{\sync}$.} 
The input layer and output layer in $N_{\async}$ are \emph{exactly} as in $\cN_{\sync}$. 
We will now focus on the set of \emph{auxiliary} neurons $V$ in $\cN_{\sync}$. 
In the network $\cN_{\async}$, each
$v \in V$ is augmented by three additional neurons $v_{\inn}, v_{\delaynode}$ and $v_{\outt}$. 
The incoming (resp., outgoing) neighbors to $v_{\inn}$ (resp., $v_{\outt}$) are the out-copies (resp., in-copies) of all incoming (resp., outgoing) neighboring neurons of $v$. The neurons $v_{\inn}, v, v_{\delaynode}$ and $v_{\outt}$ are connected by a directed chain (in this order). Both $v_{\delaynode}$ and $v_{\outt}$ have self-loops. 

In case where the original network $\cN_{\sync}$ contains spiking neurons, the neuron $v_{\inn}$ will be given the exact same firing function as $v$ in $\Pi_{\sync}$. That is, in phase $p$, $v_{\inn}$ will be given the random coins\footnote{I.e., the random coins that are used to simulate the firing decision of $v$.} used by $v$ in round $p$ in $\Pi_{\sync}$. The other neurons $v, v_{\delaynode}$ and $v_{\outt}$ are deterministic threshold gates. 
%
	%
	%
The role of the \emph{out-copy} $v_{\outt}$ is to \emph{keep on presenting} the firing status of $v$ from the previous phase $p-1$ throughout the rounds of phase $p$. This is achieved through their self-loops. 
The role of the \emph{in-copy} $v_{\inn}$ is to simulate the firing behavior of $v$ in phase $p$. We will make sure that $v_{\inn}$ fires in phase $p$ only if $v$ fires in round $p$ in $\Pi_{\sync}$. For this reason, we set the incoming edge weights of $v_{\inn}$ as well as its bias to be exactly the same as that of $v$ in $\cN_{\sync}$. 
The neuron $v$ is an AND gate of its in-copy $v_{\inn}$ and the $PG$ output $g$. Thus, we will make sure that $v$ fires at the \emph{end} of phase $p$ only if $v_{\inn}$ fires in this phase as well. 
The role of the delay copy $v_{\delaynode}$ is to delay the update of $v_{\outt}$ to the up-to-date firing state of $v$ (in phase $p$). 
Since both neurons $v_{\delaynode}$ and $v_{\outt}$ have self-loops, at the end of each phase, we need to carefully reset their values (through inhibition). This is the role of the reset modules $R_1$ and $R_2$. Specifically, the reset module $R_1$ operated by the pulse-generator inhibits $v_{\outt}$. The second reset module $R_2$ inhibits the delay neuron $v_{\delaynode}$ only after we can be certain that its value has already being ``copied" to $v_{\outt}$. 
Finally, we describe the connections of the neuron $v_{\outt}$. The neuron $v_{\outt}$ has an incoming edge from the reset module $R_1$ with a super-large weight. This makes sure that when the reset module is activated, $v_{\outt}$ will be inhibited shortly after. In addition, it has a self-loop also of large weight (yet smaller than the inhibition edge) that makes sure that if $v_{\outt}$ fires in a given round, and the reset module $R_1$ is not active, $v_{\outt}$ also fires in the next round. Lastly, if $v_{\outt}$ did not fire in the previous round, then it fires when receiving the spikes from \emph{both} the delay module and from the delay copy $v_{\delaynode}$. This will make sure that the firing state of $v_{\delaynode}$ will be copied to $v_{\outt}$ only after the output of the delay module $D$ fires. 

\begin{figure}[h]
\centering
\vspace{-10pt}
\includegraphics[width=0.5\textwidth]{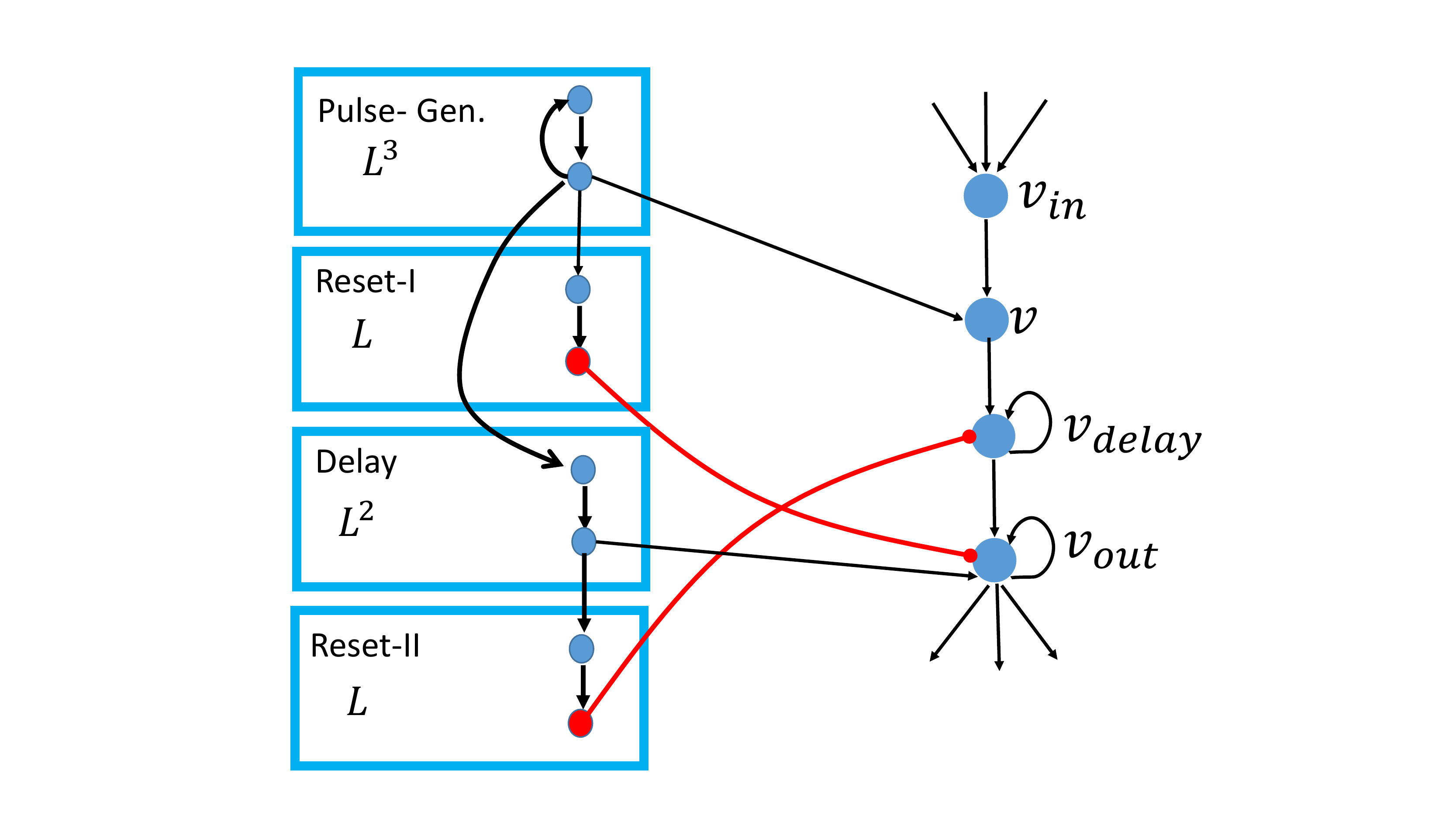}
\vspace{-5pt}
	\caption{{\footnotesize Illustration of the syncronizer modules. Left: global modules implemented by neural timers. Right: a neuron $v \in N_{\sync}$ augmented by three additional neurons that interact with the global modules.}
\label{fig:async-global}}
\end{figure}

\subsection{Analysis of the Syncronizers.}
Throughout, we fix a synchronous execution $\Pi_{\sync}$ and an asynchronous execution $\Pi_{\async}$. For every round $p$, recall that $V^+_{\sync}(p)$ is the set of neurons that fire in round $p$ in $\Pi_{\sync}$ (i.e., the neurons with positive entries in $\sigma_p$).
In our simulation, we will make sure that each $v$ in $\cN_{\async}$ has the same firing pattern as its copy in $\cN_{\sync}$.

\begin{Observation} \label{obs: receive_together}
Consider a neuron $v$ with incoming neighbors $u_1, \ldots, u_k$. 
If there is a round $\tau$ such that $u_1, \dots, u_k$ fire in each round $\tau' \geq \tau$, $v$ fires in every round $\tau'' \geq \tau+\max_{u_i}\ell(v,u_{i})$.
\end{Observation}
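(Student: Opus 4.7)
The plan is to reduce this to a direct application of the asynchronous potential formula (Eq.~\ref{eq:potentialOut-async}). Write $\ell_i := \ell(u_i,v)$ and $L_v := \max_i \ell_i$. For any round $\tau'' \ge \tau + L_v$, and for each incoming neighbor $u_i$, I have $\tau'' - \ell_i \ge \tau'' - L_v \ge \tau$, so by the hypothesis $u_i^{\tau''-\ell_i} = 1$. Substituting into the asynchronous potential formula gives
\[
\pot(v,\tau'') \;=\; \sum_{i=1}^{k} w(u_i,v)\cdot u_i^{\tau''-\ell_i} \;-\; b(v) \;=\; \sum_{i=1}^{k} w(u_i,v) \;-\; b(v),
\]
which is exactly the potential $v$ would see in the synchronous model if all of its incoming neighbors fire in the same round.

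Now I need to invoke the assumption, implicit in the observation's setup, that in the construction $v$ is wired so as to fire whenever all of its incoming neighbors fire simultaneously (equivalently, $\sum_i w(u_i,v) \ge b(v)$, so the above potential is nonnegative). Under this assumption, $v$ fires in round $\tau''$ with the appropriate probability (probability $1$ for a deterministic threshold gate, and the sigmoid value $p(v,\tau'')$ of the fully-activated potential for a spiking neuron; in the latter case the coin is the one pre-assigned to $v$ at phase $\tau''$ in the derandomized view of Section~\ref{sec:rand-sync}). Applying this argument uniformly for every $\tau'' \ge \tau + L_v$ yields the claim.

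The only thing that requires any care is the spiking case, but the derandomization trick already introduced in Section~\ref{sec:rand-sync} takes care of it: once the coins $R(v,p)$ are fixed, $v$ behaves as a deterministic function of its potential, so ``fully-activated potential implies firing'' is well-defined. There is no real obstacle here; the observation is essentially a bookkeeping statement about when all delayed signals have had time to reach $v$, and the factor $\max_i \ell(u_i,v)$ is exactly the worst-case travel time of the slowest incoming synapse.
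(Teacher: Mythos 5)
The paper states this as an \textit{Observation} and gives no proof of its own, so there is nothing to compare against; your job was therefore to supply the missing argument, and you have done so correctly. The core of your proof --- unwinding Eq.~(\ref{eq:potentialOut-async}), noting that for $\tau'' \ge \tau + \max_i \ell_i$ every $u_i^{\tau''-\ell_i}=1$, so the potential is pinned at $\sum_i w(u_i,v) - b(v)$ --- is exactly the bookkeeping the paper is implicitly relying on, and the self-loops used throughout the synchronizer construction are precisely what make ``fires in each round $\tau' \ge \tau$'' hold for the inputs in practice.

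The one thing you rightly flag, and which is worth saying explicitly, is that the Observation as written is literally false for an arbitrary neuron: it requires $\sum_i w(u_i,v) \ge b(v)$, i.e.\ that $v$ is an AND-type gate over all its incoming neighbors. In the synchronizer this is true of the neurons the Observation is applied to (e.g.\ $v$ as an AND of $v_{\inn}$ and $g$, or $v_{\delaynode}$ as an AND of $v$ and its own self-loop), so the implicit hypothesis is satisfied where the paper uses it, but stating it is the right thing to do. Your treatment of the spiking case via the derandomization of Section~\ref{sec:rand-sync} is more than what's needed here --- the neurons in question are all deterministic threshold gates --- but it does no harm. One small notational point: you write $\ell(u_i,v)$ while the statement (following Eq.~(\ref{eq:potentialOut-async})) uses the reversed convention $\ell(v,u_i)$ for the latency of the edge from $u_i$ into $v$; this is cosmetic and does not affect the argument.
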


\begin{lemma}
The networks $\cN_{\sync}$ and $\nu(\cN_{\sync})=\cN_{\async}$ have similar executions. 
\end{lemma}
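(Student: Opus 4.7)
The plan is to prove the lemma by strong induction on the phase number $p$, maintaining at each step a conjunction of timing invariants about the four copies of every auxiliary neuron. Concretely, the induction hypothesis states that by the beginning of phase $p$ the out-copy $v_{\outt}$ fires in every round of phase $p$ iff $v \in V^+_{\sync}(p-1)$, while $v_{\delaynode}$ has already been cleared by $R_2$. The base case $p=1$ is immediate from the assumption on the initial firing of the inputs and from the fact that all auxiliary neurons and their copies are silent until $x$ fires.

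A preliminary step is to pin down the time skeleton of one phase using $\DetTimer_{\async}$ (Lemma~\ref{lem:det-timer-async}). Applied with time parameter $\Theta(L^3)$, the pulse generator $PG$ fires every $\ell\in[\Theta(L^3),\Theta(L^4)]$ rounds, and applied with parameter $\Theta(L^2)$ the delay module $D$ fires $\Theta(L^2)$ rounds after each pulse of $PG$. Both reset chains $R_1,R_2$ have depth $\Theta(L)$. Together, this produces five non-overlapping sub-windows inside each phase: (i) an initial window in which the out-copies from phase $p-1$ are still stable on their self-loops and drive the in-copies; (ii) a window in which each $v_{\inn}$ has received all its incoming spikes (by Observation~\ref{obs: receive_together}) and fires iff $v \in V^+_{\sync}(p)$; (iii) the firing of $g$ followed by the AND neuron $v$ and the delay copy $v_{\delaynode}$; (iv) propagation of $R_1$ along its length-$\Theta(L)$ chain, inhibiting each $v_{\outt}$; (v) the firing of $D$, which together with $v_{\delaynode}$ updates $v_{\outt}$, and finally the firing of $R_2$ which clears $v_{\delaynode}$ before phase $p+1$.

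The induction step then breaks into three per-neuron claims. First, using Observation~\ref{obs: receive_together} and the self-loops on the out-copies of the incoming neighbors, we obtain a round $\tau^*$ in window (ii) after which all incoming out-copies of $v_{\inn}$ fire simultaneously for $\Omega(L)$ consecutive rounds; in those rounds $\pot(v_{\inn},\tau)$ matches $\pot(v,p)$ in $\Pi_{\sync}$, so because $v_{\inn}$ inherits the weights, bias, and (in the spiking case) the random coins $R(v,p)$ of $v$, it fires exactly when $v\in V^+_{\sync}(p)$. Second, since $v$ is an AND gate of $v_{\inn}$ and $g$ and $g$ fires only once per phase, $v$ fires at most once in the phase, precisely in this case. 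Third, $v_{\delaynode}$ inherits this firing one round later via the $v \to v_{\delaynode}$ edge, and the weights on $v_{\outt}$ are chosen so that, while the large inhibition from $R_1$ dominates its self-loop (resetting the stale value from phase $p-1$), it can subsequently fire only when both $v_{\delaynode}$ and the output of $D$ spike in the same round; after that, its self-loop maintains the new value until the next $R_1$ reset. This re-establishes the induction hypothesis at the start of phase $p+1$.

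The main obstacle, and the place where the choice of parameters is critical, is showing that the five sub-windows above really are ordered correctly for \emph{every} choice of latencies $\ell : A \to [1,L]$. The danger is either that $R_1$ inhibits $v_{\outt}$ before $v_{\inn}$ has a chance to read it (breaking (ii)), or that $R_2$ clears $v_{\delaynode}$ before $v_{\outt}$ has copied it (breaking (v)), or that a straggling edge of latency $L$ from a previous phase arrives inside the current phase and corrupts $v_{\inn}$. The first two are handled by the $\Theta(L^3)/\Theta(L^2)/\Theta(L)$ nesting of the timer lengths, which guarantees multiplicative-$L$ gaps between the five events; the third is ruled out because a spike emitted at the end of phase $p-1$ reaches any target within $L$ rounds, well within the ``settling'' portion of window (i) before $v_{\inn}$'s firing decision in window (ii). Once these inequalities are checked, the invariant $v$ fires in phase $p$ of $\Pi_{\async}$ iff $v\in V^+_{\sync}(p)$ follows, which is exactly the definition of similar executions.
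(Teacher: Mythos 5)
Your proof follows essentially the same inductive strategy as the paper: induct over phases, decompose each phase into a sequence of temporally ordered events driven by the pulse generator, reset chains, and delay module, and show that the $\Theta(L^3)/\Theta(L^2)/\Theta(L)$ nesting of the timer lengths forces these events to occur in the correct order for any latency assignment $\ell:A\to[1,L]$. The inner per-neuron claims (in-copies read the correct out-copy state by Observation~\ref{obs: receive_together}, the original neuron is an AND of $v_{\inn}$ and $g$, the delay/out-copy pipeline together with $R_1, R_2$ propagates the new firing state to $v_{\outt}$) match the paper's Claims on $v_{\delaynode}$ and $u_{\inn}$.

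One imprecision worth fixing: your stated induction hypothesis, taken literally, says $v_{\outt}$ ``fires in every round of phase $p$'' iff $v\in V^+_{\sync}(p-1)$. That cannot be the invariant you want, since $v_{\outt}$ is reset by $R_1$ partway through every phase and then re-populated with the phase-$p$ state towards the end of the phase; moreover, for $v\in V^+_{\sync}(p)\setminus V^+_{\sync}(p-1)$, the out-copy does fire in the tail of phase $p$. The invariant you actually use later (and the one the paper proves) is that $v_{\outt}$ fires during the \emph{initial window} of phase $p$ (before the $R_1$ reset lands) iff $v\in V^+_{\sync}(p-1)$, maintained via its self-loop across the phase boundary. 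With that correction your argument is a faithful sketch of the paper's proof; the five-window decomposition is a clean way to organize the same timing inequalities.
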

\begin{proof}
We will show by induction on $p$ that $V^+_{\sync}(p)=V^+_{\async}(p)$.
For $p=1$, let $ V^+_{\sync}(0)$ be the neurons that fired at the beginning of the simulation in round $0$. 
We now show that every neuron $v \in V$ fires at the end of phase $1$ iff $v \in V^+_{\sync}(1)$. Without loss of generality, assume that $g$ fired at the end phase $0$ and begins the simulation in round $0$. We begin with the following claim.
\begin{claim}
For every $u \in V$, for its in-copy $u_{\inn}$ there is a round $\tau_{u} \leq c_2 L^3+ L$ in which all its incoming neighbors in $V^+_{\sync}(0)$ fire (and the remaining neighbors do not fire), for a constant $c_2$.
\end{claim}
\begin{proof}
	We first show that for $v \in V^+_{\sync}(0)$, the out-copy $v_{\outt}$ fires when it receives a signal from the delay module $D$. Because each edge has latency at most $L$, by round $L$, neuron $v$ has fired. Since the delay neuron $v_{\delaynode}$ has a self loop (with latency one), it starts firing in every round starting round $\tau_{d} \in [2, 2L]$ (until it is inhibited by the reset module $R_2$). Recall that the out-copy $v_{\outt}$ is connected to the delay module $D$, and fires only when receiving a spike from both the output neuron of $D$ \emph{and} the delay-neuron $v_{\delaynode}$. 
	We claim that $v_{\outt}$ receives a signal from $D$ and starts firing \emph{after} it gets a reset from $R_1$. The reset module $R_1$ receives the signal from $g$ by round $L$ and starts counting $L$ rounds. Thus, the output neuron of $R_1$ fires in some round $\tau'_{r_1} \in [L + 1, L^2 + L]$. This insures that by round $L^2+2L$ the neuron $v_{\outt}$ is inhibited by the output of $R_1$. The delay module $D$ is implemented by $\DetTimer_{\async}$ with time parameter $2L^2$. Therefore, the output neuron of $D$ fires in round $\tau_{D} \in [2L^2, 10L^3]$, ensuring that it fires only \emph{after} $v_{\outt}$ has been reset by the module $R_1$.  
	Moreover, the reset module $R_2$ counts $L$ rounds after receiving a signal from $D$. This ensures that the inhibitory output of $R_2$ starts inhibiting $v_{\delaynode}$ only \emph{after} $v_{\outt}$ has received the signal from $D$ in round $\tau_{\outt}$. Overall, we conclude that $v_{\outt}$ fires in round $\tau_{\outt}\in [c_1 \cdot L^2, c_2 \cdot L^3]$, for some constants $c_1,c_2$. Due to the self loop, $v_{\outt}$ also fires in each round $\tau'' \geq \tau_{\outt}$ in that phase. As a result for every $u \in V$, its in-copy $u_{\inn}$ has a round $\tau_{u} \leq c_2 L^3+ L$ in which all its incoming neighbors in $V^+_{\sync}(0)$ fire. Note that for every neuron $v \notin V^+_{\sync}(0)$, non of its copy neurons $v_{\outt}, v_{\delaynode}$ fire during the phase.
\end{proof}
Hence, $u_{\inn}$ start firing in round $\tau_{u}$ only if $u$ fires in round $1$ in $\Pi_{\sync}$, i.e., if $u \in V^+_{\sync}(1)$.  We set the pulse-generator with time parameter $c_3 \cdot L^3$ for a large enough $c_3$ such that $c_3 \cdot L^3 > c_2 L^3+ 2L$. Since the out-copies keep on presenting the firing states of phase $0$, $u_{\inn}$ continues to fire in the last $L$ rounds of the phase. Thus, when the pulse-generator spikes again, the neurons in $V^+_{\sync}(1)$ indeed fire as both $g$ and $v_{\inn}$ fired in the previous rounds.

Next, we assume that $V^+_{\sync}(p)=V^+_{\async}(p)$ and consider phase $p+1$. Let $\tau^*$ be the round that the $PG$ fired at the end of phase $p$. We first show the following.
\begin{claim} \label{clm:delay-neuron}
	For every $v \in V$, the neuron $v_{\delaynode}$ starts firing by round $\tau^*+2L$, iff $v \in V^+_{\sync}(p)$.
\end{claim}
\begin{proof}
 Recall that all delay copies are inhibited by the reset module $R_2$ at most $L^2+2L$ rounds after the delay module $D$ has fired. We choose the time parameter of the $PG$ to be large enough such that this occurs before the next pulse of $PG$ in round $\tau^*$. Hence, when phase $p$ ended in round $\tau^*$, all delay copies $v_{\delaynode}$ are idle. Because each edge has latency of at most $L$, by round $\tau^* + L$, all the neurons in $V^+_{\sync}(p)$ have fired (and by the assumption other neurons did not fire during phase $p$). As a result, the neuron $v_{\delaynode}$ starts firing by round $\tau^*+2L$, iff $v \in V^+_{\sync}(p)$.
\end{proof}
We next show there exists a round in which the in-copies of $ V^+_{\sync}(p+1)$ begin to fire. 
\begin{claim} 
	For every $u \in V$ for its in-copy $u_{\inn}$ there is round $\tau_{u} \in [\tau^*+ c_1 \cdot L^2, \tau^*+c_2\cdot L^3+L]$ in which all its incoming neighbors in $V^+_{\sync}(p)$ fire, and the remaining neighbors do not fire.
\end{claim}
\begin{proof}
 The output neuron of $R_1$ fires in some round $\tau'\in [\tau^*+L+1, \tau^*+L^2+L]$, and therefore all neurons $v_{\outt}$ are inhibited by round $\tau^*+L^2+2L$. Recall that the delay module $D$ is implemented by $\DetTimer_{\async}$ with time parameter $2L^2$. Therefore the output neuron of $D$ fires in round $\tau_D \in [\tau^* + 2L^2 + 1 , \tau^*+10L^2]$, ensuring $D$ fires after $v_{\outt}$ was inhibited by $R_1$.
Recall that the reset module $R_2$ counts $L$ rounds after receiving a signal from $D$. This ensures that the inhibitory output of $R_2$ starts inhibiting $v_{\delaynode}$ after $v_{\outt}$ received the signal from $D$. 
By Claim~\ref{clm:delay-neuron} we conclude that when neuron $v_{\outt}$ receives the signal from the delay module $D$ in some round $\tau_{\outt}\in [\tau^*+c_1 \cdot L^2, \tau^*+ c_2 L^3]$, it fires iff $v \in V^+_{\sync}(p)$.
As a result, due to the self loops of the out-copies, $u_{\inn}$ has a round $\tau_{u} \in [\tau^*+c_1 \cdot L^2+1, \tau^*+c_2\cdot L^3+L]$ in which all its incoming neighbors in $V^+_{\sync}(p)$ fire.	
\end{proof}

Therefore, $u_{\inn}$ starts firing in round $\tau_u$ only if $u \in V^+_{\sync}(p+1)$ and it continues firing from round $\tau_{u}$ ahead in that phase due to the self loops of the out-copies of its neighbors.
Since the pulse generator fires to signal the end of phase $p+1$ in round $\tau^*+ c_3L^3 > \tau^*+c_2\cdot L^3+2L$, every neuron $v \in  V^+_{\sync}(p+1)$ fires in round $t(v,p+1)$ since both $g$ and $v_{\inn}$ fired previously (and other neurons are idle).
\end{proof}

\def\APPENDPFSYNC{
\paragraph{Why It Work? Proof Sketch.}
We assume that the firing states of the input layer is fixed throughout the simulation. 
For every round $p$, let $V^+_{\sync}(p) \subset V$ be the set of neurons that fire in round $p$ in the simulation of the synchronous network $\mathcal{N}_{\sync}$. 
Let $\tau^*$ be the round in which $g$ fires for the $p^{th}$ time, indicating the end of phase $p$. 
Recall that $t(v,p)=\tau^*+\ell(g,v)$ is the round in which neuron $v$ gets this signal from $g$. 
For every $v$ and phase $p$, we will maintain the invariant that $v$ fires in round $t(v,p)$ iff it fires in $\Pi_{\sync}$ in round $p$. 

Assuming that the invariant holds for phase $p$, we will show that it holds for phase $p+1$. 
Note that $t(v,p)\leq \tau^*+L$ for every $v \in V$, thus by round $\tau^*+L$ all the neurons in $V^+_{\sync}(p)$ have fired.  Since the delay neuron $v_{\delaynode}$ has a self-loop (with latency one), it starts firing in every round starting from round $\tau_{d} \in [\tau^*+2, \tau^*+2L]$. Our goal now is to copy the $p^{th}$-firing state of the $v$ neurons, as kept on being presented by $v_{\delaynode}$, to the out-copies $v_{\outt}$. This should be done only after the old state of $v_{\outt}$ (i.e., the firing state of $v$ in round $p-1$ in $\Pi_{\sync}$) gets deleted.

The first reset module $R_1$ receives its signal from $g$ in round $\tau_{r_1} \leq \tau^* +L$ and starts counting for $L$ rounds. Thus the output neuron of $R_1$ fires in some round $\tau'_{r_1} \in [\tau_{r_1}+L, \tau_{r_1}+L^2]$. This ensures that the inhibitory output of $R_1$ starts inhibiting $v_{\outt}$ (deleting its previous state) only \emph{after} all neurons in $V$ received the $PG$ signal (i.e., after all neurons in $V^+_{\sync}(p)$ have fired). This inhibitor inhibits $v_{\outt}$ for a single round, thus canceling its positive self-loop. 

At this point, we would like to copy the $p^{th}$ firing state of $v$ to its out-copy $v_{\outt}$ via the delay neuron $v_{\delaynode}$ that keeps on presenting this state. This should be done with care as we want this to happen only \emph{after} the reset of $v_{\outt}$ and not before (as otherwise, the reset will override the update state).
For this reason, the output copy $v_{\outt}$ is connected to the delay-module $D$, and will fire only when receiving the spike from both the output neuron of $D$ \emph{and} the delay-neuron $v_{\delaynode}$. 
The delay module $D$ is implemented by $\DetTimer_{\async}$ with time parameter $2L^2$. Therefore, the output neuron  of $D$ fires in round $\tau_{D} \in [\tau^*+2L^2, \tau^*+10L^3]$. Thus ensuring that it fires only \emph{after} all out-copies $v_{\outt}$ have been reset by the reset module $R_1$. Overall, we have that $v_{\outt}$ fires in some round $\tau_{\outt}\in [\tau^*+\Theta(L^2), \tau^*+\Theta(L^3)]$.

The reset module $R_2$ counts for $L$ rounds after receiving a signal from $D$. This ensures that the inhibitory output of $R_2$ inhibits $v_{\delaynode}$ only \emph{after} $v_{\outt}$ have already received the signal from $D$ in round $\tau_{\outt}$.
Our final goal is to end the $(p+1)^{th}$ phase with spikes from the neurons in $V^+_{\sync}(p+1)$. For that we need the $v_{\outt}$ neurons to present their current firing state (i.e., the firing state of $v$ in round $p$ of $\Pi_{\sync}$) for $\Omega(L)$ consecutive rounds. This is why we set the timer of the pulse-generator to be $\Theta(L^3)$. Due to their self-loops, in the last $2L$ rounds of phase $p+1$, the out-copy $v_{\outt}$ fires in each of these rounds for every $v \in V^+_{\sync}(p)$. As a result, for every in-copy $u_{\inn}$ there is a round in which all its incoming neighbors in $V^+_{\sync}(p)$ fire (and the remaining neighbors do not fire). This make $u_{\inn}$ starts firing only if $u$ fires in round $p+1$ in $\Pi_{\sync}$, i.e., if $u \in V^+_{\sync}(p+1)$. Since the out-copies keep on presenting the firing states of phase $p$ in each of the rounds, $u_{\inn}$ will continue firing for the last $\Omega(L)$ rounds of the phase. When the pulse-generator makes its $(p+1)^{th}$ pulse, the neurons $v \in  V^+_{\sync}(p+1)$ indeed fire as both $g$ and $v_{\inn}$ fire in the previous rounds. This completes the proof sketch of the construction. 

Finally we analyze the overhead in the number of neurons and in time. Using Awerbuch and Pelegs definition~\cite{AwerbuchP90b}, the time overhead is normalized by the longest ``round'' (edge latency in our case). Thus, we achieve a time overhead of $O(L^3)$ (the length of each phase). The total number of neurons we added is $O(L\log L)$ due to the global modules, and additional $O(n)$ neurons where $n$ in the number of auxiliary neurons in $V(\cN_{\sync})$. 
}

\vspace{-5pt}
\textbf{Acknowledgment:} We are grateful to Cameron Musco, Renan Gross and Eylon Yogev for various useful discussions.
\bibliographystyle{alpha} 
\bibliography{selection}

\newpage
\appendix

\section{Missing Details for the Introduction}\label{sec:append-intro} 
\paragraph{Proof of Observation~\ref{obs:lower-async}.}
We will show that implementing a simple NOT-gate in the asynchronous setting requires $\Omega(\log L)$ neurons. In the synchronous setting, one can easily implement a NOT-gate by connecting the input neuron to the output neuron with negative weight and setting the bias of the output to $0$.

Assume towards contradiction that there exists a deterministic network $\cN$ with $N =o(\log L)$ neurons, an input neuron $x$, and an output neuron $y$ that computes $y=NOT(x)$ within $T$ rounds. If $x$ fires in round $0$, the output $z$ should \emph{not} fire in any of the rounds $[0,T]$, and if $x$ does not fire, then there exists a round $\tau \in [0,T]$ in which $y$ fires.
We set the latencies on the edges of $\cN$ such that the outgoing edges from $x$ have latency $L$, and all other edges have latency $1$. 

Consider an execution $\Pi_{yes}$ where $x$ fires in rounds $[0,L+1]$, and an execution $\Pi_{no}$ where $x$ do not fire at all. The initial states of all other neurons are set to $0$ in both $\Pi_{yes}$ and $\Pi_{no}$.
By the correctness guarantee, during the execution $\Pi_{yes}$, the output neuron $y$ do not fire in rounds $[1,L+1]$, and during the execution $\Pi_{no}$ there exists a round $\tau \in [0,T]$ in which $z$ fires. 
Recall that the \emph{state} of the network in round $t$ is described by an $N$-length vector indicating the firing neurons in that round. Note that because the network contain $N =o(\log L)$ neurons, the network has at most $L/2$ distinct firing states.

Since the latency on all outgoing edges from $x$ is $L$, during rounds $[1,L]$ of execution $\Pi_{yes}$, the signal from $x$ does not reach any other neuron. Hence, the states of all neurons but $x$ during rounds $[1,L]$ of execution $\Pi_{no}$ are identical to those of execution $\Pi_{yes}$. In other words, except for the state of $x$, the two executions are indistinguishable over the first $L$ rounds.  By the correctness of $\Pi_{yes}$, we have that $y$ 
is idle during the first $L$ round, and therefore it is also idle in $\Pi_{no}$ during these rounds.

Since the network has at most $L/2$ distinct states, there must be a state $s$ that occurs at least twice during rounds $[1,L]$ in both executions $\Pi_{yes}$ and $\Pi_{no}$. In addition, in all the rounds between the two occurrences of $s$, the output $y$ does not fire (as $y$ is idle in the first $L$ rounds). Due to the memory-less property of the neurons, we conclude that the execution $\Pi_{no}$ is locked into a no-configuration in which $y$ will never fire, contradicting the correctness of the network.

\section{Missing Proofs for Det. Neural Timer}\label{sec:append-det} 
 \subsection{Complete Description of The $\DetTimer$ Network}\label{app:det}

\paragraph{Handling the General Case:} We begin by extending the $\DetTimer(t)$ network to handle the case where $x$ fired more than once within the execution.
\begin{itemize}
\item{\textbf{Case 1: $x$ fires several times within a span of $t$ rounds.}} We introduce an additional reset (inhibitory) neuron $r$ that receives input from $x$ with weight $w(x,r)=1$, has outgoing edges to all neurons except $a_{1,2}$ and $y$ with negative weight of $-2$, and threshold value $b(r)=1$.
	
\item{\textbf{Case 2: $x$ fires again just one round before $a_{\log \hat{t},2}$ fires.}} To process this new spike, we introduce a control neuron $c$ that receives input from $x$ with weight $w(c,x)=1$ and threshold $b(c)=1$ and fires one round after $x$. The control neuron $c$ has outgoing edges to $y$ and $a_{1,2}$ with weights $w(c,y)=w(c, a_{1,2})=3$. Therefore even if $a_{\log \hat{t},2}$ fires one round after $x$, the control neuron will cancel the inhibition on the output $y$ and on $a_{1,2}$ and the timer will continue to fire. 
\end{itemize}
Figure~\ref{fig:dettime} illustrates the structure of the network.

\begin{figure} 
	\hspace*{\fill}
	\includegraphics[scale=0.3]{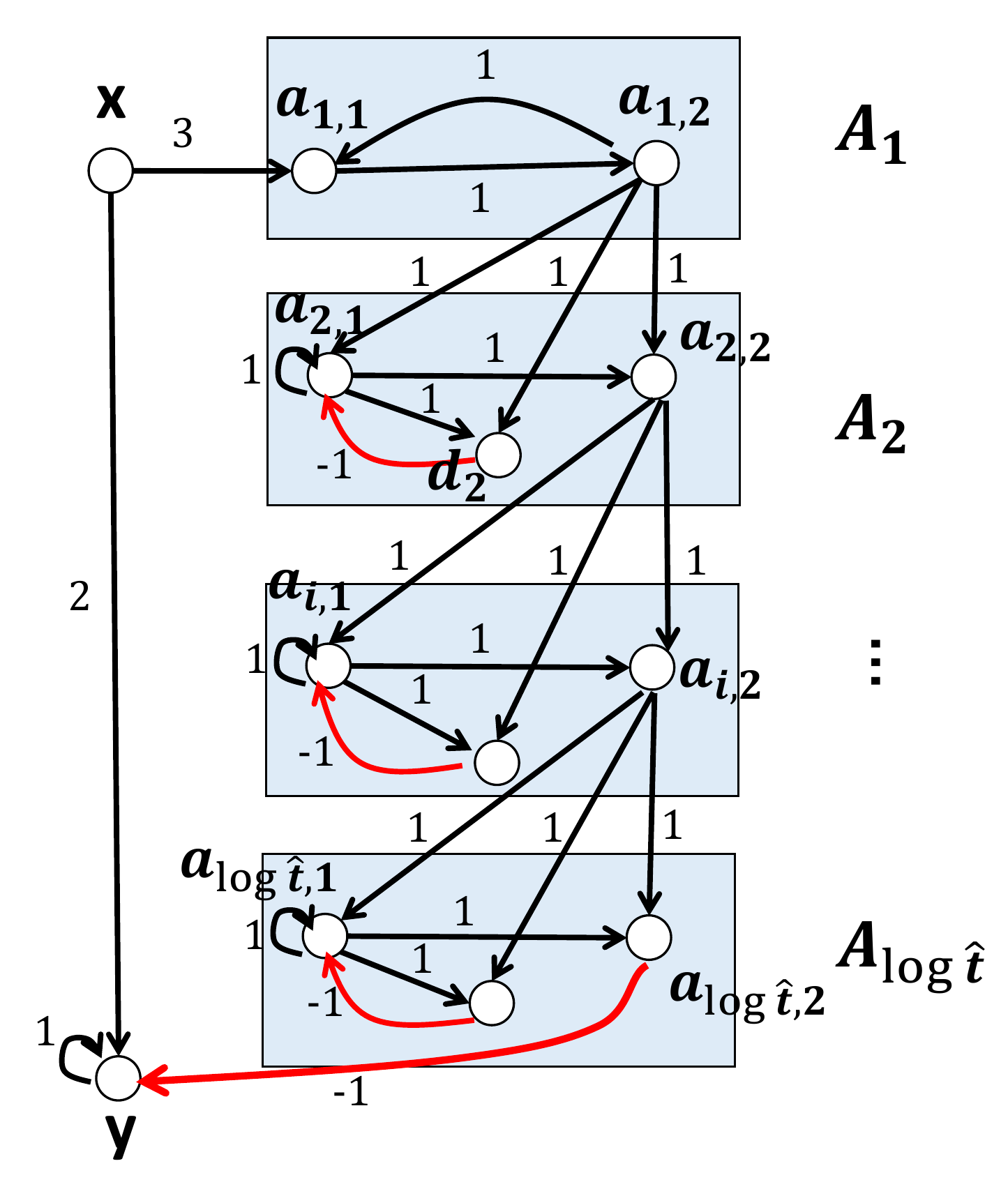} \hfill
	\includegraphics[scale=0.3]{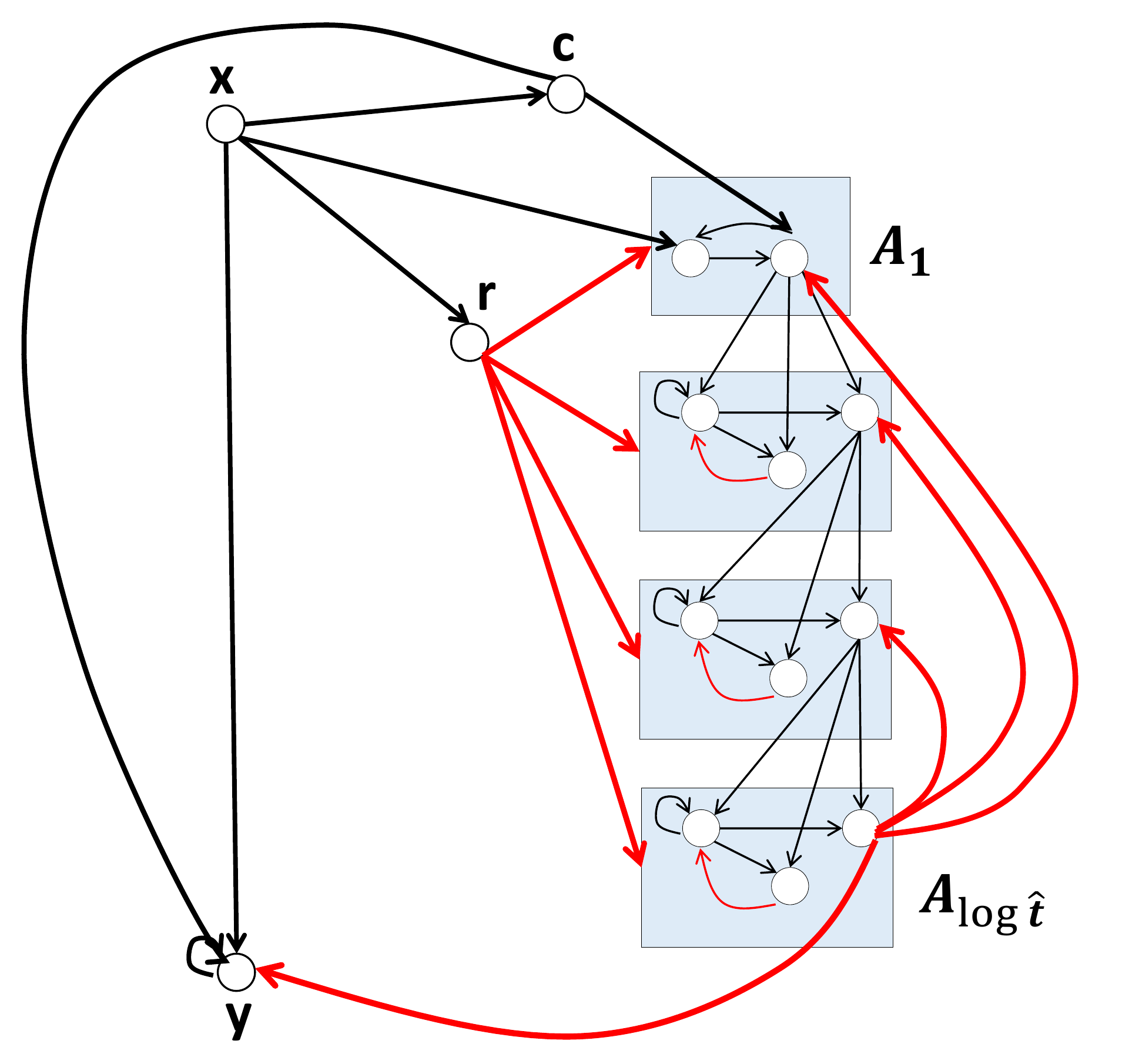}
	\hspace*{\fill}
	\centering
	\caption{Illustration of the $\DetTimer$ network. \small{Left: The simplified network for the case that $x$ fired once.  The neurons $y$, $a_{1,2}$ and the set of neurons $\{a_{1,1}, \ldots ,a_{\log \hat{t},1}\}$ have threshold $1$. For $i \geq 2$ the threshold of $a_{i,2}$ and $d_i$ is $2$. Right: A complete network description for the general case, where the input can fire several times during the execution. The reset neuron $r$ resets the timer in case $x$ fires several times. The control neuron $c$ takes care of the special extreme case where $x$ fires again one round \emph{before} the last counting neuron $a_{\log \hat{t},2}$ fires. \label{fig:dettime}}}
\end{figure}

We next use Claim~\ref{clm:layers} in order to prove the the first part of Thm.~\ref{lem:upper-bound-det}.

\subsection{Complete Proof of Thm.~\ref{lem:upper-bound-det}(1)} \label{sec:det-proof}
\begin{proof}
We start by considering the case where $x$ fires once in round $t'$. 
If $x$ fired in round $t'$, due to the self loop of $y$, starting from round $t'+1$, the output keeps firing as long as $a_{\log \hat{t},2}$ did not fire. By Claim~\ref{clm:layers}, $a_{\log \hat{t},2}$ fires in round $t' + \hat{t}+\log \hat{t}-1 =  t' + t -1 $, and therefore $y$ will be inhibited in round $ t' + t$. Note that $a_{\log \hat{t},2}$ also inhibits all other auxiliary neurons, and therefore as long as $x$ will not fire again, $y$ will also not fire. 
Next we consider the case where $x$ also fired in round $t'' \geq t'+1$.
\begin{itemize}
\item{\textbf{Case 1: $t'' \geq t' + t$}.} Because in round $t' + t - 1$ the neuron $a_{\log \hat{t},2}$ inhibits all counting neurons in the network, starting round $t' + t$ no counting neuron fires until $x$ fires again in round $t''$. Thus, after $x$ fires in round $t''$ the network behaves the same as after the first firing event.
\item{\textbf{Case 2: $t'' \leq t' + t - 3$}}. In round $t''+1 \leq t' + t - 2$ the reset neuron $r$ inhibits all counting neurons except for $a_{1,2}$. Hence, in round $t''+2$ only $y$ and $a_{1,2}$ fire, and the neural timer continues to count for additional $t-2$ rounds.
\item{\textbf{Case 3: $t''=t' + t-1$}.} The neuron $a_{\log \hat{t},2}$ fires on the same round as $x$. Since the weights on the edges from $x$ to $y$ and $a_{1,1}$ are greater than the weight of the inhibition from $a_{\log \hat{t},2}$, the timer continues to fire based on the last firing event of $x$.
\item{\textbf{Case 4: $t'' = t' + t - 2$}.}  In this case $x$ fires in round $t''$ and in the next round, $a_{\log \hat{t},2}$ fires and inhibits the output $y$ (at the same round that the reset neuron $r$ fires). Recall that in round $t''+1$ the control neuron $c$ also fires. Hence, in round $t''+2$ neuron $c$ excites $y$ and $a_{1,2}$ canceling the inhibition of $a_{\log \hat{t},2}$. 
\end{itemize} 
\end{proof}

\subsection{Useful Modifications of Deterministic Timers}\label{sec:mod-timer}
We show a slightly modified variant of neural timer denoted by $\DetTimer^*$ which receives as input an additional set of $\log t$ neurons that encode the desired duration of the timer.

\paragraph{(1) Time Parameter as a Soft-Wired Input.} 
The $\DetTimer$ construction is modified to receiving a time parameter $t' \leq t$ as a (soft) input to the network.
That is, we assume that $t$ is the upper limit on the time parameter. The same network can be used as a timer for any $t' \leq t$ rounds, and this $t'$ can be given as an input to the network. In such a case, once the input neuron $x$ fires, the output neuron $y$ will fire for the next $t'$ consecutive rounds. 
The time parameter $t'$ is given in its binary form using $\log t$ input neurons denoted as $z_1 \dots z_{\log t}$. We denote this network as $\DetTimer^*(t)$.
The idea is that given time parameter $t'$, we want to use only $\log (t'')$ layers out of the $\log t$, where $t''=t' +\log (t')$ (we use $t''$ due to the $\log (t'')$ delay in the update of the timer). The modifications are as follows.

\begin{enumerate}
\item  The time input neurons are set to be inhibitors.
\item  The intermediate layer of neurons $c_1 \ldots c_{\log t''}$ determine how many layers we should use. Each $c_i$ has negative edges from $z_1, \ldots , z_{\log t''}$ with weights $w(c_i,z_j)=-2^{j-1}$, and threshold $b(c_i)=-i-1-2^{i-1}$. Hence $c_i$ fires iff $i-1+2^{i-1} \geq \dec(\bar{z})=t'$. 

\item We introduce $\log t''$ inhibitors $r_1, \ldots r_{\log t''}$ in order to inhibit the output $y$ after we count to $t'$ and reached layer $t''$. Each $r_i$ has incoming edges from $c_i$ and $a_{i,1}$, and fires as an AND gate. Hence, each $r_i$ fires only when the timer count reach $2^{i-1}+i-1$ and $i-1 + 2^{i-1} \geq t'$.
\item The output neuron $y$ receives negative incoming edges from the neurons $r_1 \ldots r_{\log t''}$ with weight $w(r_i,y)=-1$, and stops firing if at least one neuron $r_i$ fired in the previous round. 
\item Every neuron $r_i$ also has negative outgoing edges to all counting neurons $a_{j,k}  \ \  k \in \{1,2\}, j=1 \ldots \log t$ with weight $w(r_i, a_{j,k})=-2$ in order to reset the timer when we finish counting to $t'$. 
\end{enumerate}
See Figure~\ref{fig:det-param} for an illustration of $\DetTimer^*(t)$ network.
\begin{figure} 
\begin{center}
\includegraphics[scale=0.35]{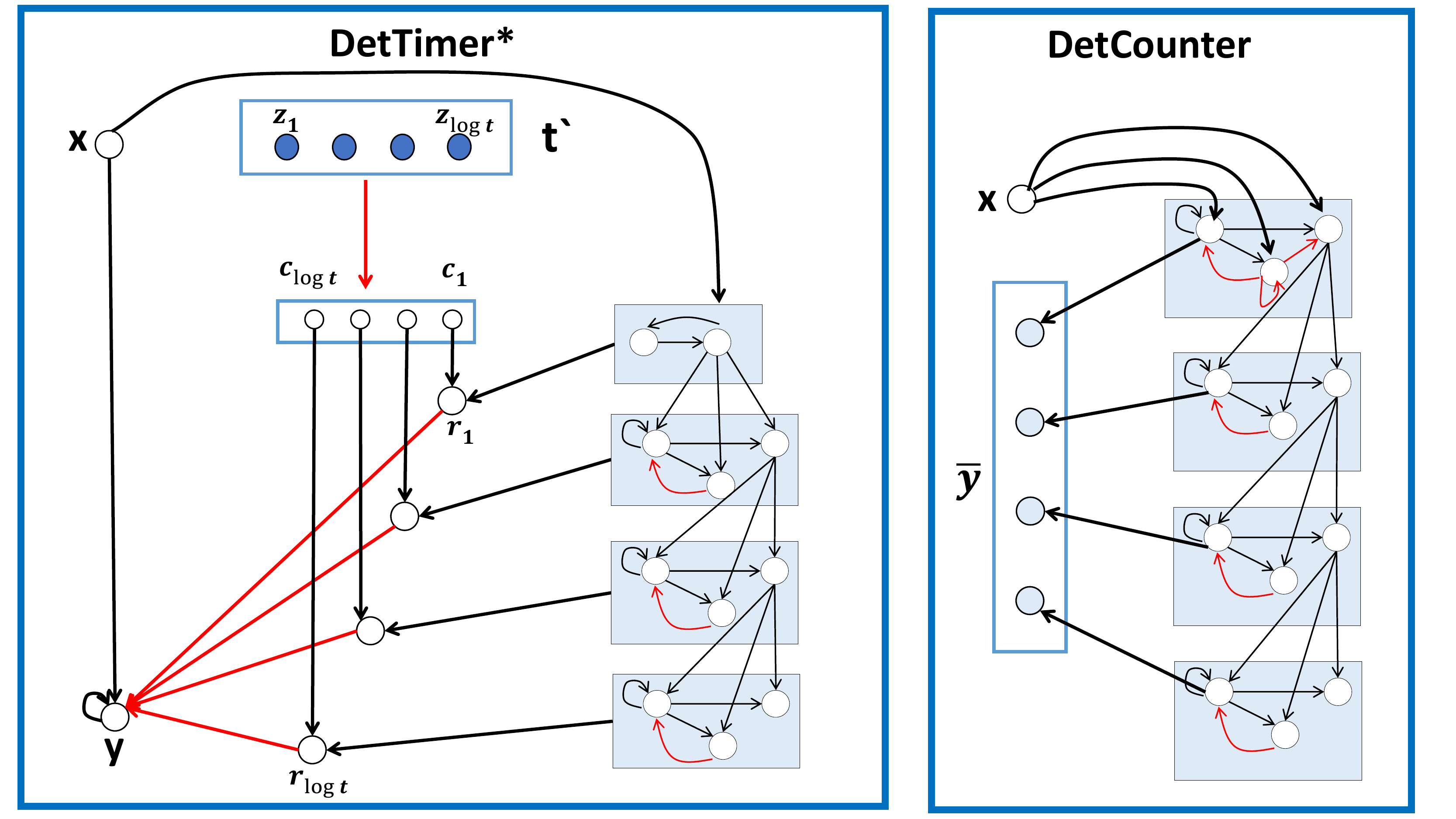} 
\caption{Left: Det. neural timer $\DetTimer^*$ with a soft-wired time parameter. The input neurons $z_1, \ldots, z_{\log t}$ encode the time parameter $t'$. The intermediate neurons $c_1, \ldots ,c_{\log t}$ control how many layers are used depending on the time parameter $t'$. Once the timer reaches layer $i=\Theta(\log (t'))$ for which $c_i$ fires, the inhibitor $r_i$ inhibits the output $y$ and the counting terminates. Right: neural counter $\DetCounter$, the output neurons $\bar{y}$ encode the number of times $x$ fired in a time window of $t$ rounds. \label{fig:det-param}}
\end{center}
\end{figure}

\paragraph{(2) Extension to Neural Counting.}
We next show a modification of the timer into a neural counter network $\DetCounter$ that instead of counting the number of rounds, counts the number of input spikes in a time interval of $t$ rounds. This network also uses $O(\log t)$ auxiliary neurons. To improve upon this bound, we resort to approximation and in Appendix \ref{sec:counting}, we combine the $\DetCounter$ network with the streaming algorithm of \cite{Flajolet85} to provide an approximate counting network with $O(\log\log t+\log(1/\delta))$ neurons where $\delta$ is the error parameter.
We next describe the required adaptation for constructing the network described in Lemma~\ref{lem:det-counter}.
 
The $\DetCounter$ with parameter $t$ contains $\log t$ layers, all layers $i \geq 2$ are the same as in $\DetTimer$ and only the first layer is slightly modified. 
The first counting neuron $a_{1,1}$ has a positive incoming edge from $x$ with weight $w(x,a_{1,1})=4$, and a self loop with weight $w(a_{1,1},a_{1,1})=1$. In addition $a_{1,1}$ has a negative edge from the inhibitor $d_1$ with weight $w(d_1,a_{1,1})=-1$, and threshold $b(a_{1,1})=1$. The second counting neuron $a_{1,2}$ has positive edges from $x$ and $a_{1,1}$ with weights $w(x,a_{1,2})=w(a_{1,1},a_{1,2})=1$, a negative edge from $d_1$ with weight $w(d_1,a_{1,2})=-2$ and threshold $b(a_{1,2})=2$. The reset neuron $d_1$ is an inhibitor copy of $a_{1,2}$ and therefore also has positive edges from $x$ and $a_{1,1}$ with weights $w(x,d_1)=w(a_{1,1},d_1)=1$, a negative self loop with  weight $w(d_1,d_1)=-2$ and threshold $b(d_1)=2$. 
We then connect the counting neurons $a_{1,1}, \cdots a_{\log t,1}$ to the output vector directly, where $y_i$ has an incoming edge from $a_{i,1}$ with weight $w(a_{i,1},y_i)=1$ and threshold $b(y_i)=1$. 
Figure~\ref{fig:det-param} demonstrate the $\DetCounter(t)$ network.

Next we show that once the counter is updated, the number of times that $x$ fired is represented as a binary number where the counting neuron $a_{i,1}$ represents the $i^{th}$ bit in the binary representation ($a_{1,1}$ is the least significant bit).
We note that if the last firing of $x$ occurs is in round $\tau$ then after at most $\log c + 1$ rounds the counter is updated with the new value, where $c$ is the value of the counter before round $\tau$.
 We start by showing the following claim concerning the first layer.
\begin{claim} \label{clm:first-layer}
	If $x$ fired in round $\tau$, neurons $d_1$ and $a_{1,2}$ fire in round $\tau+1$ iff $x$ fired an even number of times by round $\tau$.
\end{claim}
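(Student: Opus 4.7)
The plan is to prove the claim by strong induction on $k$, the index of the firing of $x$ at round $\tau=\tau_k$. Let $\tau_1<\tau_2<\ldots$ denote the rounds in which $x$ fires. I will actually prove the following strengthened invariant, since the claim as stated is not quite self-sustaining without tracking $a_{1,1}$ as well:
\begin{itemize}
\item[(a)] If $k$ is odd, then $a_{1,1}$ fires in every round of $[\tau_k+1,\tau_{k+1}+1]$ (or forever, if no $\tau_{k+1}$), while $d_1$ and $a_{1,2}$ do not fire in any round of $[\tau_k+1,\tau_{k+1}]$.
\item[(b)] If $k$ is even, then $a_{1,1},d_1,a_{1,2}$ all fire in round $\tau_k+1$, and all three neurons are idle in every round of $[\tau_k+2,\tau_{k+1}]$.
\end{itemize}
The desired claim is then immediate from (a) and (b): $d_1$ and $a_{1,2}$ fire in round $\tau_k+1$ iff $k$ is even.

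The base case $k=1$ is a direct computation: before round $\tau_1$ every neuron is idle, so in round $\tau_1+1$ we get $\pot(a_{1,1})=4\geq 1$, $\pot(a_{1,2})=1<2$, and $\pot(d_1)=1<2$, and the self-loop of $a_{1,1}$ (weight $1$, threshold $1$) then sustains it until the next firing of $x$. For the inductive step $k\Rightarrow k+1$, I split on the parity of $k+1$. If $k+1$ is even, then by (a) the neuron $a_{1,1}$ is firing in round $\tau_{k+1}$, so at round $\tau_{k+1}+1$ we get $\pot(a_{1,2})=1+1=2$ and $\pot(d_1)=1+1=2$, hence both fire; meanwhile $\pot(a_{1,1})=4+1-0=5$ so $a_{1,1}$ fires as well. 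In the following round (assuming $\tau_{k+2}>\tau_{k+1}+1$), the inhibition from $d_1$ and its negative self-loop kick in: $\pot(a_{1,1})=1-1=0$, $\pot(a_{1,2})=1-2=-1$, $\pot(d_1)=1-2=-1$, so all three become idle, and remain so until the next firing of $x$. If $k+1$ is odd, then by (b) all three neurons are idle in round $\tau_{k+1}$, and the same computation as in the base case yields (a) for $k+1$.

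The main obstacle is the boundary case where $x$ fires on two consecutive rounds, i.e., $\tau_{k+1}=\tau_k+1$, which must be checked separately since the invariant statements describe the state up to $\tau_{k+1}$. The crucial sub-case is when $k$ is even and $\tau_{k+1}=\tau_k+1$: here in round $\tau_{k+1}+1=\tau_k+2$ the inhibitor $d_1$ is firing (from round $\tau_k+1$), yet one needs $a_{1,1}$ to fire to maintain (a); this is exactly why the weight $w(x,a_{1,1})=4$ is chosen so large -- the potential is $4+1-1=4\geq 1$, so $a_{1,1}$ fires while $\pot(a_{1,2})=1+1-2=0<2$ and $\pot(d_1)=1+1-2=0<2$, so $a_{1,2}$ and $d_1$ stay silent, matching (a) for the odd firing $k+1$. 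The symmetric sub-case ($k$ odd, $\tau_{k+1}=\tau_k+1$) reduces to the generic computation in the preceding paragraph since $a_{1,1}$ is already firing at $\tau_{k+1}$ by (a). Once these boundary cases are verified, all other transitions are routine threshold-gate evaluations, and the claim follows.
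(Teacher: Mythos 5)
Your proof is correct and follows essentially the same strategy as the paper's: induction on the index $k$ of the input spike, with a case split on parity and a separate check for the boundary case $\tau_{k+1}=\tau_k+1$. The only real difference is that you make the auxiliary state of $a_{1,1}$ explicit as part of the inductive invariant (items (a) and (b)), whereas the paper tracks $a_{1,1}$'s behavior implicitly inside the inductive step (relying on the fact that $d_1$ can only fire one round after a spike of $x$, hence $a_{1,1}$ is uninhibited between spikes); your version is a cleaner, self-contained formulation of the same argument.
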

\begin{proof}
	By induction on the number of times $x$ fired, denoted as $n$. Since $d_1$ and $a_{1,2}$ have identical potential functions it is sufficient to prove the claim for the neuron $d_1$. For $n=1$, if $x$ fired once in round $\tau$, then $a_{1,1}$ fires for the first time in round $\tau+1$, and since $d_1$ fires only if $a_{1,1}$ fired in the previous round, in round $\tau+1$ both neuron $d_1$ and $a_{1,2}$ are idle. For $n=2$, since $x$ fired for the first time in some round $\tau'\leq \tau-1$, starting round $\tau'+1$ neuron $a_{1,1}$ fires on every round until $d_1$ fires. Hence, in round $\tau+1$ the neuron $d_1$ receives spikes from both $x$ and $a_{1,1}$ and therefore fires. Assume the claim holds for every $k \leq n-1$ and we will show correctness for $n$. Denote the round in which $x$ fired for the $(n-1)^{th}$ time by $\tau'\leq  \tau-1$.
\begin{itemize}
\item (Case $1$: $n$ is even.) Since $n-1$ is odd, by the induction assumption $d_1$ did not fire in round $\tau'+1$. Hence $a_{1,1}$ is not inhibited until round $\tau+1$, and due to the self loop $a_{1,1}$ also fires in round $\tau$. Therefore $d_1$ and $a_{1,2}$ fire in round $\tau+1$. 
\item (Case $2$: $n$ is odd.) If $\tau'=\tau-1$, by the induction assumption $d_1$ fires in round $\tau'+1=\tau$, and due to the negative edges from $d_1$, both $d_1$ and $a_{1,2}$ are idle in round $\tau+1$. Otherwise, $\tau'\leq\tau-2$. By the induction assumption, $d_1$ fires in round $\tau'+1$. Since $x$ did not fire in round $\tau'+1$ (as it fires again only in round $\tau$), in round $\tau'+2\leq \tau$ the neuron $a_{1,1}$ is inhibited by $d_1$ and therefore in round $\tau$ the neurons $d_1$ and $a_{1,2}$ receives a signal only from $x$ and does not fire. 
\end{itemize}
\end{proof}
Next, we show that if $x$ fired in round $\tau$ for the last time, for each layer $i \in [1, \log n]$, neuron $a_{i,2}$ fires in round $\tau+i$ only if $x$ fired $\ell \cdot 2^{i-1}$ times by round $\tau$ for some integer $\ell \geq 1$.

\begin{claim} \label{obs:all-layers}
For every layer $i \in [2,\log t]$ if $a_{i-1,2}$ fired in round $\tau$ for the $n^{th}$ time, the neurons $d_i$ and $a_{i,2}$ fire in round $\tau+1$ iff $n$ is even.
\end{claim}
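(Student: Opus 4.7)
Plan: The claim is the natural extension of Claim~\ref{clm:first-layer} to a general layer $i \geq 2$, with $a_{i-1,2}$ playing the role that $x$ played for the first layer. The first step is to observe that in every layer $i \geq 2$ the neurons $d_i$ and $a_{i,2}$ share identical fan-in structure (both receive weight-$1$ excitatory edges from $a_{i-1,2}$ and from $a_{i,1}$ and both have threshold $2$), so their potentials, hence their firing patterns, coincide in every round. Thus it suffices to prove the claim for $d_i$.

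I would then induct on $n$, essentially copying the case analysis of Claim~\ref{clm:first-layer} verbatim. For $n = 1$: before round $\tau$, the layer has been idle (either from initialization or because the previous reset wave has already cleared it), so $a_{i,1}^\tau = 0$ and $d_i$'s potential in round $\tau+1$ is $1 \cdot a_{i-1,2}^{\tau} + 1 \cdot a_{i,1}^{\tau} - 2 = -1 < 0$, so $d_i$ does not fire. For $n = 2$: by the base case, $d_i$ did not fire in round $\tau' + 1$ where $\tau'$ is the previous firing of $a_{i-1,2}$, so the self-loop of $a_{i,1}$ keeps it firing uninterrupted through every round in $[\tau'+1, \tau]$; hence both $a_{i-1,2}$ and $a_{i,1}$ fire in round $\tau$ and $d_i$ fires in round $\tau + 1$.

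For the inductive step I would mirror the two cases of Claim~\ref{clm:first-layer}. When $n$ is even, $n - 1$ is odd so by the hypothesis $d_i$ did not fire at $\tau'+1$, hence $a_{i,1}$ propagates via its self-loop up to round $\tau$ without inhibition and $d_i$ fires at $\tau + 1$. When $n$ is odd ($n \geq 3$), $d_i$ fired at $\tau' + 1$; I would split on whether $\tau' = \tau - 1$ (in which case the inhibition from $d_i$ on $a_{i,1}$ in round $\tau$ combined with the absence of a new excitation keeps $d_i$ below threshold in round $\tau+1$), or $\tau' \leq \tau - 2$ (in which case $a_{i,1}$ is inhibited at round $\tau' + 2$ and stays idle until $a_{i-1,2}$ fires again, so only one of the two excitatory inputs to $d_i$ is present at round $\tau + 1$).

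The main obstacle will be a bookkeeping invariant not explicit in Claim~\ref{clm:first-layer}: unlike $x$, the signal $a_{i-1,2}$ is itself generated by lower layers, so I must verify that $a_{i,1}$ does not spuriously fire between consecutive firings of $a_{i-1,2}$. The key fact to isolate is that $a_{i,1}$'s only excitatory inputs are $a_{i-1,2}$ and its self-loop (both weight $1$, threshold $1$), so once $a_{i,1}$ is inhibited by $d_i$ it can only re-ignite when $a_{i-1,2}$ fires again; combined with the observation that $d_i$ requires \emph{both} $a_{i-1,2}$ and $a_{i,1}$ in the previous round, this guarantees $d_i$ fires exclusively in the rounds specified by the claim. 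Once this invariant is stated and checked, the casework transfers directly from the layer-$1$ argument, completing the induction.
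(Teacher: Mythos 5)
Your overall plan (induct on $n$, prove it for $d_i$ since $a_{i,2}$ has the identical fan-in, and reuse the layer-$1$ bookkeeping) is sound, and your base cases and the even inductive step are fine. But the odd case has a real gap, and it is not a cosmetic one.

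You propose to ``mirror'' the two sub-cases from Claim~\ref{clm:first-layer}, in particular to handle $\tau' = \tau-1$ separately. For layer $1$ that case is genuinely possible (the input $x$ can fire on consecutive rounds), and the construction handles it precisely because $d_1$ has a negative self-loop and $d_1$ has a negative edge to $a_{1,2}$. Neither of those edges exists for $d_i$, $a_{i,2}$ with $i\ge 2$. Your handling of $\tau' = \tau-1$ for $i\ge 2$ says ``the inhibition from $d_i$ on $a_{i,1}$ in round $\tau$ \ldots keeps $d_i$ below threshold in round $\tau+1$'' --- but $d_i$ fires at $\tau'+1 = \tau$, so its inhibitory spike reaches $a_{i,1}$ only at round $\tau+1$, which is one round too late: $d_i$'s potential at round $\tau+1$ depends on $a_{i,1}^{\tau}$, not $a_{i,1}^{\tau+1}$. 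Carrying the calculation through, if $a_{i-1,2}$ did fire in two consecutive rounds $\tau-1$ and $\tau$, one would get $a_{i,1}^\tau=1$ (since $a_{i-1,2}^{\tau-1}=1$ and $d_i^{\tau-1}=0$), hence $a_{i-1,2}^\tau = a_{i,1}^\tau = 1$ and so $d_i^{\tau+1}=1$, contradicting the claim. So the claim is actually \emph{false} in that hypothetical sub-case, and your argument for it cannot be repaired.

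What saves the claim --- and what the paper's proof isolates --- is that for $i \ge 2$ the sub-case $\tau' = \tau-1$ never arises: by Claim~\ref{clm:first-layer}, consecutive firings of $a_{1,2}$ are at least two rounds apart, and since firings only propagate upward this spacing is inherited by every $a_{i-1,2}$; hence $\tau \ge \tau'+2$ always. Once you have that observation, the odd case reduces entirely to your (correct) $\tau' \le \tau-2$ sub-case, using exactly the invariant you identified at the end: after being inhibited at round $\tau'+2$, $a_{i,1}$ stays idle until $a_{i-1,2}$ fires again, so $a_{i,1}^\tau=0$ and $d_i$ cannot fire at $\tau+1$. In short, you found the right invariant about $a_{i,1}$ but missed the companion spacing fact about $a_{i-1,2}$, and without it your case split contains an unprovable (and in fact false) branch.
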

\begin{proof}
By induction on $n$. For $n=1$, one round after the first time neuron $a_{i-1,2}$ fires, the neuron $a_{i,1}$ fires for the first time, and therefore $a_{i,2}$, $d_i$ do not fire. For $n=2$, the second time $a_{i-1,2}$ fires, due to the self loop on $a_{i,1}$, it fires as well and therefore after one round $a_{i,2}$ and $d_i$ fire. Assume that $a_{i-1,2}$ fired in round $\tau'$ for the $(n-1)^{th}$ time.
If $n$ is even, then by the induction assumption $d_i$ does not fire in round $\tau'+1 \leq \tau$. Hence, due to the self loop of $a_{i,1}$, in round $\tau$ also $a_{i,1}$ fires and therefore $d_i$ and $a_{i,2}$ fire in round $\tau+1$. If $n$ is odd, by the induction assumption $d_i$ fires in round $\tau'+1$.
By Claim~\ref{clm:first-layer} there is at least one round distance between every two firing events of $a_{1,2}$. Thus, there is at least one round distance between every two firing events of $a_{i-1,2}$, and therefore $\tau \geq \tau'+2$. Hence, because $a_{i,1}$ was inhibited by $d_i$ in round $\tau'+1<\tau$, it is idle in round $\tau$ and the neurons $d_i$ and $a_{i,2}$ do not fire in round $\tau+1$. 
\end{proof}
\begin{corollary}
If $x$ fired for the $n^{th}$ time in round $\tau$, for every layer $i \in [1,\log t]$ the neurons $d_i$ and $a_{i,2}$ fire in round $\tau+i$ iff $(n \bmod 2^i) =0$.
\end{corollary}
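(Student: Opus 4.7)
The plan is to prove the corollary by induction on the layer index $i$, invoking Claim~\ref{clm:first-layer} as the base case and Claim~\ref{obs:all-layers} for the inductive step. The bridge between the two claims is a secondary counting argument: I will need to show that by induction, the $n$-th firing of $x$ coincides (after appropriate delay) with the $\lfloor n/2^{i-1} \rfloor$-th firing of $a_{i-1,2}$, so that the parity condition of Claim~\ref{obs:all-layers} translates into the congruence $n \bmod 2^i = 0$.

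For the base case $i=1$, the statement $(n \bmod 2) = 0$ is exactly ``$n$ is even'', and Claim~\ref{clm:first-layer} directly asserts that $d_1$ and $a_{1,2}$ fire in round $\tau+1$ iff $n$ is even. So the base case requires no further work.

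For the inductive step, assume the claim holds for layer $i-1$. I will split into two cases. If $(n \bmod 2^{i-1}) \neq 0$, then by the induction hypothesis $a_{i-1,2}$ does \emph{not} fire in round $\tau+i-1$, and since both $d_i$ and $a_{i,2}$ require a spike from $a_{i-1,2}$ in the previous round (the potential structure from the construction forces $b(a_{i,2}) = b(d_i) = 2$ with inputs only from $a_{i-1,2}$ and $a_{i,1}$), they cannot fire in round $\tau+i$; this matches the conclusion because $2^{i-1} \nmid n$ implies $2^i \nmid n$. If $(n \bmod 2^{i-1}) = 0$, then by the induction hypothesis (applied to each firing $x^{(k)}$ of $x$ with $k \leq n$ and $2^{i-1} \mid k$) the neuron $a_{i-1,2}$ has fired exactly $n/2^{i-1}$ times up through round $\tau+i-1$, with its $n/2^{i-1}$-th firing occurring precisely in round $\tau+i-1$. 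Now invoking Claim~\ref{obs:all-layers} with this count, $d_i$ and $a_{i,2}$ fire in round $\tau+i$ iff $n/2^{i-1}$ is even, which is equivalent to $2^i \mid n$, i.e., $(n \bmod 2^i) = 0$.

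The main obstacle (though a mild one) will be verifying the counting step cleanly: specifically, that the induction hypothesis applied across \emph{all} past firings of $x$ yields the exact bijection between the $k$-th firing of $x$ (with $2^{i-1} \mid k$) and the $(k/2^{i-1})$-th firing of $a_{i-1,2}$, including the timing offset of $i-1$ rounds. This requires noting that by Claim~\ref{clm:first-layer} consecutive firings of $a_{1,2}$ are at least two rounds apart, and by Claim~\ref{obs:all-layers} this gap only grows with $i$, so the firings of $a_{i-1,2}$ are well-separated and we may apply the induction hypothesis independently to each one without overlap, making the count $\lfloor n/2^{i-1} \rfloor$ rigorous.
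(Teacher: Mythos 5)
Your proof is correct and follows essentially the same route as the paper: induction on the layer index, with Claim~\ref{clm:first-layer} as the base case and Claim~\ref{obs:all-layers} driving the inductive step, turning the parity of the number of $a_{i-1,2}$ firings into the congruence $n \bmod 2^i = 0$. You are in fact slightly more careful than the paper's write-up in spelling out the counting argument (that the IH applied to every past firing of $x$ yields exactly $n/2^{i-1}$ firings of $a_{i-1,2}$ through round $\tau+i-1$), a step the paper invokes implicitly.
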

\begin{proof}
By induction on $i$. The base cases for $i=1$ follows from Claim~\ref{clm:first-layer}. Assume that the claim holds for layer $i$ and we will show it also holds for layer $i+1$. If $(n \bmod 2^i) = 0$, then $n = q\cdot 2 \cdot 2^{i-1}$ for some integer $q$. Therefore by the induction assumption, $a_{i,2}$ fires in round $\tau+i$, and moreover it fired an even number of times by that round. Hence, by Claim~\ref{obs:all-layers} the neurons $d_{i+1}$ and $a_{i+1,2}$ fire in round $\tau+i+1$.
Otherwise, if $(n \bmod 2^i) \neq 0$, by the induction assumption $a_{i,2}$ does not fire in round $\tau+i$ and therefore $d_{i+1}$ and $a_{i+1,2}$ do not fire in round $\tau+i+1$. If $(n \bmod 2^i) = 0$ but $(n \bmod 2^{i+1}) \neq 0$, then by the induction assumption $a_{i,2}$ fired an odd number of times by round $\tau+i$ and by Claim~\ref{obs:all-layers} neurons $d_i$ and $a_{i,2}$ do not fire in round $\tau+i+1$.
\end{proof}
The first counting neuron $a_{i,1}$ fires one round after $a_{i-1,2}$ fires, and as long as $d_i$ and $a_{i,2}$ did not fire. Hence, we can conclude that if $x$ fired for the last time in round $\tau$, by round $\tau+ \log r_\tau+1$, the neurons $a_{1,1}, \ldots , a_{\log t,1}$ hold a binary representation of the number of times $r_{\tau}$ that $x$ fired by round $\tau$.

\section{Approximate Counting}\label{sec:counting}
In this section, we provide improved constructions for neural counters by allowing approximation and randomness. 
Our construction is inspired by the \emph{approximate counting} algorithm of Morris as presented in \cite{Morris78, Flajolet85} for the setting of dynamic streaming.  The idea is to implement a counter which holds the logarithm of the number of spikes with respect to base $\alpha=1 + \Theta(\delta)$. 
The approximate neural counter problem is defined as follows.
\begin{definition}[(Approximate) Neural Counter]
Given a time parameter $t$ and an error probability $\delta$, an approximate neural counter has an input neuron $x$, a collection of $\log t$ output neurons represented by a vector $\bar{y}$, and additional auxiliary neurons. The network satisfies that in a time window of $t$ rounds, in every given round, the output $\bar{y}$ encodes a constant approximation of the number of times $x$ fired up to that round, with probability at least $1 -\delta$.
\end{definition}
Throughout, we assume that $1/\delta < t $. For smaller values of $\delta$, it is preferable to use the deterministic network construction of $\DetCounter$ with $O(\log t)$ neurons described in Lemma~\ref{lem:det-counter}. 
For the sake of simplicity, we first describe the construction under the following promises:
\begin{itemize}
\item{(S1)}
The firing events of $x$ are sufficiently spaced in time, that is there are $\Omega(\log t)$ rounds between two consecutive firing events. 
\item{(S2)}
The state of $\bar{y}$ encodes the right approximation in every round $\tau$ such that the last firing of $x$ occurred before round $\tau - \log r_\tau$ where $r_\tau$ is the number of $x$'s spikes (firing events) up to round $\tau$.
\end{itemize}

\paragraph{High Level Description.} 
The network $\ApproxCounter(t,\delta)$ consists of two parts, one for handling small number of spikes by the input $x$ and one for handling the large counts. The first part that handle the small number of spikes is deterministic. 
Specifically, as long as the number of spikes by $x$ is smaller than $s=\Theta(1/\delta^2)$, we count them using the exact neural counter network (presented in Appendix~\ref{sec:mod-timer}), using $O(\log 1/\delta)$ neurons. We call this module 
\emph{Small Counter} ($SC$) and it is implemented by the $\DetCounter$ network with time parameter $\Theta(1/\delta^2)$. 

To handle the large number of spikes, we introduce the \emph{Approximate Counter} ($AC$) implemented by $\DetCounter$ with time parameter $\log_{\alpha} t$. The $AC$ module approximates the logarithm of the number of rounds $x$ fired with respect to base  $\alpha = 1 + \Theta(\delta)$. This module is randomized, and provides a good estimate for the spikes count given that it is sufficiently large.
The idea is to update the $AC$ module (by adding +1) upon every firing event of $x$ with probability $\frac{1}{1+\alpha^c}$ where $c$ is the current value stored in the counter
To do so, we have a spiking neuron $a^*$ that has incoming edges from the output of the $AC$ module, and fires with the desired probability. The reason we use probability $\frac{1}{1+\alpha^c}$ instead of $\frac{1}{\alpha^c}$ as suggested in Morris algorithm, is due to the sigmoid probability function of spiking neurons (see Eq. (\ref{eq:potentialOut})).
Once the count is large enough (more than $s$), we start using the $AC$ module. This is done by introducing an indicator neuron $v_I$, indicating that the small-counter is full. 
The neuron $v_I$ starts firing after $SC$ is full (finished the count), and keeps on firing due to a self loop.

The input to $AC$, denoted as $x_{ac}$ computes an AND of the input $x$, the spiking neuron $a^*$ and the indicator neuron $v_I$.
In addition, $v_I$ initiates a reset of the small counter $SC$ to make sure that the output $\bar{y}$ receives only information from the large-count module $AC$. 
Figure~\ref{fig:approx_counting} provides a schematic description of the construction.
\begin{figure} 
\begin{center}
\includegraphics[scale=0.3]{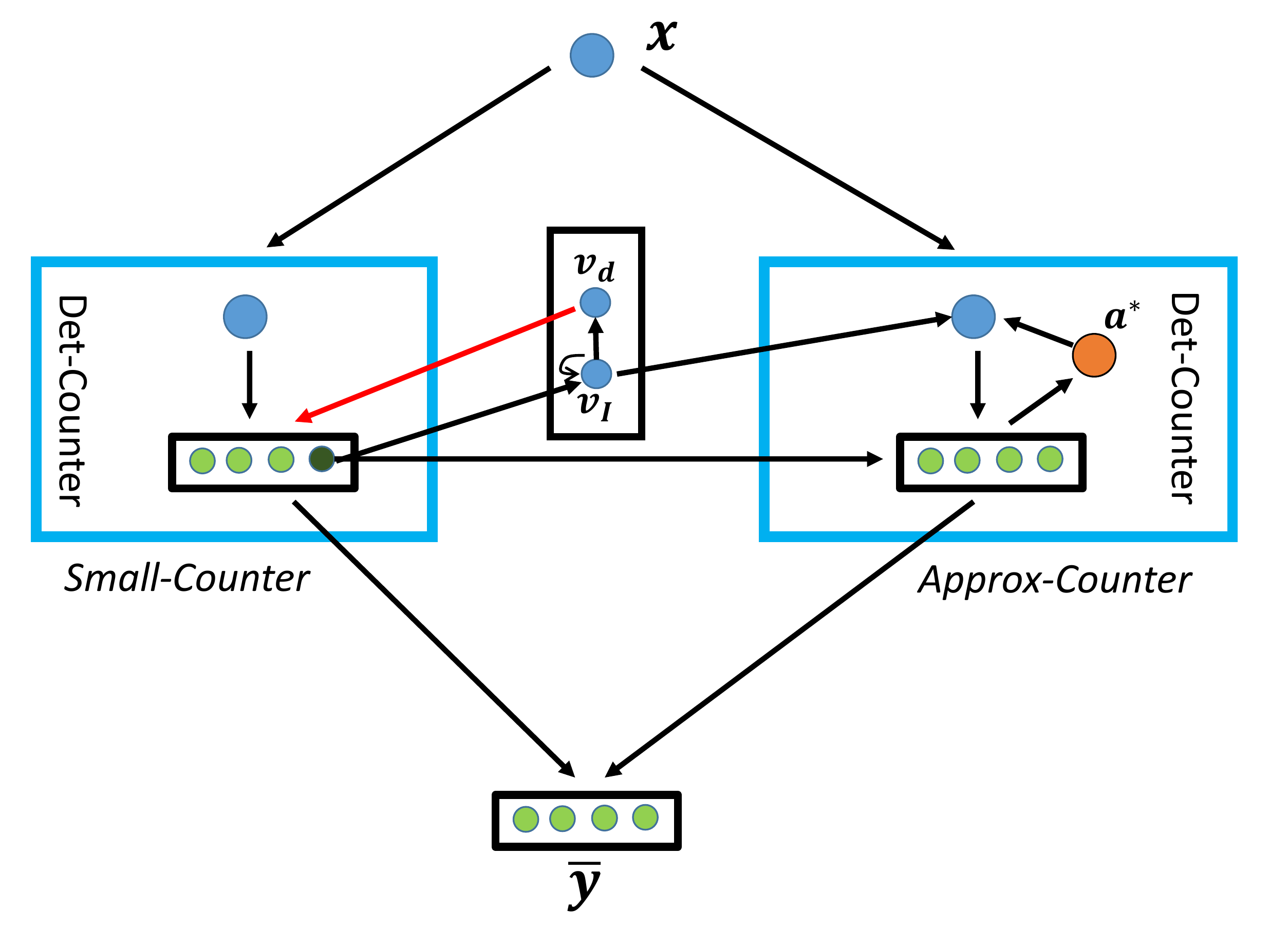}
\caption{Schematic description of the network $\ApproxCounter$. In each module only the input and output layers are shown. The Small-Counter module $SC$ is responsible for counting up to $\Theta(\log 1/\delta)$ spikes, and is implemented by the $\DetCounter$ module with time parameter $\Theta(1/\delta)$. For handling large counts, we use the Approx-Counter  $AC$ module implemented by the $\DetCounter$ module with time parameter $\Theta(\log t/\delta)$. The Approx-Counter module simulates Morris' algorithm and maintain an estimate for the logarithm of the spikes count. The neurons $v_I$ and $v_r$ switch between the two stages (small and large counts) during the execution. \label{fig:approx_counting}}
\end{center}	
\end{figure} 

\paragraph{Detailed Description.}
Let $r_n$ be the number of times $x$ fired in the first $n$ rounds, and let $\alpha= 1+\Theta(\delta)$ be the base of the counting in the approximate counting module. 
\begin{itemize}
\item \textbf{Handling Small Counts.} 
The module \emph{Small-Counter} ($SC$) is implemented by the $\DetCounter$ module with time parameter $s$ and input from $x$, where $s = \frac{1}{\delta(\alpha-1)}$. Since $\alpha = 1+ \Theta(\delta)$, it holds that $s = \Theta(1/\delta^2)$. 
In addition, we introduce an excitatory \emph{indicator} neuron $v_I$ that has an incoming edge from the last layer of $SC$ (i.e. neuron $a_{\log s,2}$) as well as a self loop, each with weight $1$ and threshold $b(v_I)=1$. The indicator neuron $v_I$ has an outgoing edge to an inhibitory \emph{reset} neuron $v_r$ with weight $w(v_I, v_r)=1$, which is connected to all neurons in $SC$ with negative weight $-5$. The reset neuron $v_r$ also has an incoming edge from $a_{\log s,2}$ with weight $1$ and threshold $b(v_r)=1$.  As a result, one round after $SC$ reaches the value $s$, it is inhibited.

\item \textbf{Handling Large Counts.} The \emph{Approximate-Counter} ($AC$) is implemented by a $\DetCounter$ module with time parameter $\log_{\alpha} t$, and its input neuron is denoted by $x_{ac}$. Denote by $\ell = \log_2 \log_{\alpha} t$ the number of layers in the $AC$ module, and for every $1 \leq  i \leq \ell$, denote the counting neuron $a_{i,1}$ by $c_i$.
To initialize the counter we connect the last output neuron of $SC$ to the counting neurons $c_{i_1} \ldots c_{i_k}$ in $AC$ which correspond to the binary representation of $\log_{\alpha}(1/\delta+1)$ with weights $5$.
We introduce a probabilistic spiking neuron $a^*$ that is used to increase the counter with the desired probability. In order for $a^*$ to receive negative weights from $AC$, we connect each counting neuron $c_i$ to an inhibitor copy $c_{i,2}$ with weight $w(c_i,c_{i,2})=1$ and threshold $1$. We then connect the inhibitors $c_{1,2}, \dots ,c_{\ell,2}$ to $a^*$ with weights $w(c_{i,2},a^*)=-2^{i-1}\cdot  \ln \alpha$, and set $b(a^*)=0$.
Hence, $a^*$ fires in round $\tau$ with probability $\frac{1}{1+\alpha^c}$, where $c$ is the value of $AC$ in round $\tau-1$.
Finally, the input neuron $x_{ac}$ has incoming edges from $a^*$, $x$ and $v_I$ each with weight $1$ and threshold $b(x_{ac})=3$. As a result, $x_{ac}$ fires only if $v_I$, $x$ and $a^*$ fired in the previous round. 

\item \textbf{The Output Neurons.} The counter modules $SC$ and $AC$ are connected to the output vector $\bar{y}$ as follows. Each $y_i$ has incoming edges from neurons $c_1, \ldots ,c_{\ell}$ with weight $w(c_i,y)= \log \alpha \cdot 2^{i-1}$, and threshold $b(y_i)=i + \log(\alpha-1)$. In addition, each output neuron $y_i$ has an incoming edge from the $i^{th}$ output of $SC$ with weight $b(y_i)$.  
Hence, $y_i$ fires in round $\tau$ if either
$\log \alpha \cdot (\sum_{j=1}^{_{\ell}}c_j\cdot 2^{j-1})-\log(\alpha-1) \leq i$, or the  $i^{th}$ output of $SC$ fired in the previous round. 
\end{itemize}

\paragraph{Size Complexity.}
All neurons except the spiking neuron $a^*$ are threshold gates. 
Recall that $\alpha= 1+\Theta(\delta)$. Hence the size of the counter $AC$ is $O(\log_2 \log_{\alpha} t) = O(\log\log t+ \log(1/\delta))$. Since the size of the counter $SC$ is $O(\log 1/\delta)$, overall we have $O(\log\log t+ \log(1/\delta))$ auxiliary neurons.

\paragraph{Analysis (under the simplifying assumptions).}
We first show the correctness of the $\ApproxCounter$  construction under the two promises. 
At the end of the section we will show correctness for the general case as well. 
Let $r_{\tau}$ be the number of times $x$ fired up to round $\tau$. If $r_{\tau} \leq s$ the correctness of $\ApproxCounter$ follows from the correctness of the $\DetCounter$ construction (see Lemma~\ref{lem:det-counter}). 
From now on, we assume $r_{\tau} \geq s+1$. 
Let $z_n$ be a random variable holding the value of $AC$ after $x$ fired $n$ times (i.e when $r_\tau= n$). 
We start with bounding the expectation of $\alpha^{z_n}$.

\begin{claim} \label{clm:mean}
$\mathbb{E}[\alpha^{z_n}] \in [n(\alpha-1)(1-\delta)+1, n(\alpha-1)+1]$. 
\end{claim}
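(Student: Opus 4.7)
The plan is to mimic the classical analysis of Morris' approximate counting algorithm, with careful adjustments accounting for two features of the neural implementation: (i) the sigmoid-based firing probability of the spiking neuron $a^*$ is $1/(1+\alpha^{c})$ rather than the idealized $1/\alpha^{c}$, and (ii) the counter $AC$ is ``pre-loaded'' with the value $\log_{\alpha}(1/\delta+1)$ once $SC$ has filled up. I will first establish a one-step recurrence for $\mathbb{E}[\alpha^{z_{n+1}} \mid z_n]$, then show that the initialization lets us absorb the error term introduced by the $+1$ in the denominator, and finally iterate via induction on $n$.

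First I would observe that, conditioned on $z_n$, the neuron $a^{*}$ fires with probability exactly $p_n := 1/(1+\alpha^{z_n})$, so with this probability $z_{n+1} = z_n + 1$, and otherwise $z_{n+1} = z_n$. A short calculation then gives
\begin{equation*}
\mathbb{E}[\alpha^{z_{n+1}} \mid z_n] \;=\; (1-p_n)\,\alpha^{z_n} + p_n\,\alpha^{z_n+1} \;=\; \alpha^{z_n} + (\alpha-1)\cdot\frac{\alpha^{z_n}}{1+\alpha^{z_n}} \;=\; \alpha^{z_n} + (\alpha-1)\bigl(1 - p_n\bigr).
\end{equation*}
Note that $p_n$ is the sole discrepancy from the idealized Morris recurrence; the latter would have $p_n=0$ as a residual term.

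The next step is to control $p_n$. By construction, the counter $AC$ is initialized at the moment $SC$ completes (i.e.\ when $r_\tau = s$) to the deterministic value $z_s = \log_{\alpha}(1/\delta + 1)$. Since $z_n$ is non-decreasing in $n$, for every $n \geq s$ we have $\alpha^{z_n} \geq \alpha^{z_s} = 1/\delta + 1$, and therefore $p_n \leq 1/(2+1/\delta) \leq \delta$. Plugging this into the recurrence and taking expectations via the tower property, I would obtain the two-sided bound
\begin{equation*}
\mathbb{E}[\alpha^{z_n}] + (\alpha-1)(1-\delta) \;\leq\; \mathbb{E}[\alpha^{z_{n+1}}] \;\leq\; \mathbb{E}[\alpha^{z_n}] + (\alpha-1),
\end{equation*}
valid for every $n \geq s$.

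Finally, I would iterate this recurrence starting from the deterministic base value $\alpha^{z_s} = 1/\delta + 1 = s(\alpha-1) + 1$ (using the definition $s = 1/(\delta(\alpha-1))$). Telescoping gives
\begin{equation*}
s(\alpha-1) + 1 + (n-s)(\alpha-1)(1-\delta) \;\leq\; \mathbb{E}[\alpha^{z_n}] \;\leq\; s(\alpha-1) + 1 + (n-s)(\alpha-1),
\end{equation*}
and a short rearrangement (using $s(\alpha-1) \geq s(\alpha-1)(1-\delta)$ on the lower side and cancellation on the upper side) collapses this to the desired range $[n(\alpha-1)(1-\delta) + 1,\; n(\alpha-1) + 1]$.

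The only nontrivial step is the second one: verifying that the initialization value $\log_{\alpha}(1/\delta+1)$ is precisely tuned so that both the ``sigmoid correction'' $p_n \leq \delta$ is enforced \emph{and} the telescoped constants match the target bounds. Once the bookkeeping $s(\alpha-1)\delta \geq 0$ is used to discard a harmless positive residual on the lower side, the rest is a routine induction.
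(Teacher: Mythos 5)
Your proof is correct and follows essentially the same approach as the paper's: establish the one-step recurrence $\mathbb{E}[\alpha^{z_{n+1}}\mid z_n]=\alpha^{z_n}+(\alpha-1)\frac{\alpha^{z_n}}{1+\alpha^{z_n}}$, use the initialization $\alpha^{z_s}=1/\delta+1$ to bound the residual term $p_n\le\delta$, and iterate from the base case. In fact your telescoping bookkeeping is slightly more careful than the paper's concluding line, which drops the additive $+1$ from the lower end of the range even though the claim (and your argument) retains it.
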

\begin{proof}
The $AC$ counter starts to operate after $x$ fired $s = \frac{1}{\delta(\alpha-1)}$ spikes, and we initiate the counter with value $c =  \log_{\alpha}(1/\delta+1)$. Hence, for $n=s$ we get  $\alpha^{z_n} = n(\alpha-1)+1$ and the claim holds.
For $n \geq s+1$ we get
\begin{eqnarray}
\mathbb{E}[\alpha^{z_n}]&=&\sum_{j=c}^{n-1} \mathbb{E}[\alpha^{z_n} ~\mid~ z_{n-1} = j] \cdot \Pr[z_{n-1} = j] \nonumber
\\&=&  \sum_{j=c}^{n-1} \Pr[z_{n-1} = j]\cdot (\alpha^{j+1}\cdot \frac{1}{\alpha^j+1}+ \alpha^j \cdot (1 - \frac{1}{\alpha^j+1})) \nonumber
\\&=& \mathbb{E}[\alpha^{z_{n-1}}]+(\alpha-1)\cdot\sum_{j=c}^{n-1} \Pr[z_{n-1} = j]\cdot (\frac{\alpha^j}{1+\alpha^j})~. \label{eq:mean}
\end{eqnarray}
Note that for $j \geq c$, it holds that $1 > \frac{\alpha^j}{1+\alpha^j} > 1-\delta$. Therefore $$\sum_{j=c}^{n-1} \Pr[z_{n-1} = j]\cdot (\frac{\alpha^j}{1+\alpha^j}) \in [1-\delta,1]~.$$ By combining this with Eq.~\eqref{eq:mean} we conclude that $\mathbb{E}[\alpha^{z_n}] \in [n(\alpha-1)(1-\delta), n(\alpha-1)+1]$.
\end{proof}
\begin{claim} \label{clm:cheb}
$\Pr[|\alpha^{z_n}-\mu|> 1/2\cdot \mu] \leq \delta$, where $\mu=\mathbb{E}[\alpha^{z_n}]$.
\end{claim}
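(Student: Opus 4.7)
\medskip

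\noindent\textbf{Proof plan.} The natural tool is Chebyshev's inequality:
$$\Pr[|\alpha^{z_n}-\mu|> \mu/2] \;\leq\; \frac{4\Var[\alpha^{z_n}]}{\mu^2},$$
so it suffices to show $\Var[\alpha^{z_n}] = O(\delta \mu^2)$ and then pick the hidden constant in $\alpha - 1 = \Theta(\delta)$ small enough to absorb the factor of $4$. The plan is to derive the second moment $\mathbb{E}[\alpha^{2z_n}]$ by a recursion analogous to the one used in Claim~\ref{clm:mean}, then subtract $\mu^2$.

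\smallskip

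\noindent\textbf{Second moment recursion.} Conditioning on $z_{n-1}=j$, with probability $\tfrac{1}{1+\alpha^j}$ the counter increments (so $\alpha^{2z_n} = \alpha^{2(j+1)}$), and otherwise stays (so $\alpha^{2z_n}=\alpha^{2j}$). Computing as in Claim~\ref{clm:mean},
\begin{align*}
\mathbb{E}[\alpha^{2z_n}]
&= \sum_{j \geq c} \Pr[z_{n-1}=j]\cdot \alpha^{2j}\Bigl(1 + (\alpha^2-1)\cdot\tfrac{1}{1+\alpha^j}\Bigr) \\
&= \mathbb{E}[\alpha^{2z_{n-1}}] + (\alpha^2-1)\sum_{j\geq c}\Pr[z_{n-1}=j]\cdot \tfrac{\alpha^{2j}}{1+\alpha^j}.
\end{align*}
Since $\tfrac{\alpha^{2j}}{1+\alpha^j} \leq \alpha^{j}$, the correction term is at most $(\alpha^2-1)\,\mathbb{E}[\alpha^{z_{n-1}}]$. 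Plugging in the bound of Claim~\ref{clm:mean}, $\mathbb{E}[\alpha^{z_{n-1}}]\leq (n-1)(\alpha-1)+1$, and unrolling the recursion from $n=s$ (where $z_s=c$ is deterministic so $\mathbb{E}[\alpha^{2z_s}] = (s(\alpha-1)+1)^2$) will give a closed-form upper bound of roughly
$$\mathbb{E}[\alpha^{2z_n}] \;\leq\; n^2(\alpha-1)^2 + O\bigl((\alpha-1)\cdot n^2(\alpha-1)^2\bigr) + (\text{lower order}).$$

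\smallskip

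\noindent\textbf{From second moment to variance, then Chebyshev.} Subtracting $\mu^2\geq (n(\alpha-1)(1-\delta))^2$ and collecting like terms, the leading $n^2(\alpha-1)^2$ terms cancel up to an $O(\delta)$ slack, leaving
$$\Var[\alpha^{z_n}]\;=\;\mathbb{E}[\alpha^{2z_n}]-\mu^2 \;\leq\; C\cdot(\alpha-1)\cdot \mu^2$$
for an explicit constant $C$. Choosing the constant hidden in $\alpha - 1 = \Theta(\delta)$ to satisfy $C(\alpha-1)\leq \delta/4$, Chebyshev's inequality yields
$$\Pr[|\alpha^{z_n}-\mu|>\tfrac{1}{2}\mu] \;\leq\; \frac{4\Var[\alpha^{z_n}]}{\mu^2} \;\leq\; \delta,$$
as claimed.

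\smallskip

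\noindent\textbf{Main obstacle.} The only technically delicate step is bookkeeping the recursion cleanly enough to show that, after cancelation with $\mu^2$, the variance truly comes out proportional to $(\alpha-1)\mu^2$ (rather than to $\mu^2$ itself), and to verify that the initialization term at $n=s$ does not dominate. Morally this is the standard Flajolet calculation for Morris's algorithm, but one must be careful because our increment probability is $\tfrac{1}{1+\alpha^j}$ rather than $\alpha^{-j}$ (a consequence of the sigmoid firing rule), so the symbolic bookkeeping differs slightly from the textbook derivation and must be redone directly.
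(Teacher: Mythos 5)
Your plan matches the paper's proof essentially step for step: the same second-moment recursion $\mathbb{E}[\alpha^{2z_n}] = \mathbb{E}[\alpha^{2z_{n-1}}] + (\alpha^2-1)\sum_j \Pr[z_{n-1}=j]\,\frac{\alpha^{2j}}{1+\alpha^j}$, the same bound $\frac{\alpha^{2j}}{1+\alpha^j}\leq \alpha^j$ to reduce the correction to the first moment, the same appeal to Claim~\ref{clm:mean}, the same unrolling from the deterministic initial value at $n=s$, and the same Chebyshev-plus-constant-tuning finish. The one place your ``lower-order'' hand-wave hides real content is the linear-in-$n$ term that accumulates through the recursion: in the variance this contributes roughly $n(\alpha-1)$, and after dividing by $\mu^2 \approx n^2(\alpha-1)^2$ in Chebyshev it only becomes $O(\delta)$ because $n\geq s = 1/(\delta(\alpha-1))$ forces $\frac{1}{n(\alpha-1)}\leq \delta$ --- the paper invokes this explicitly, and you should too rather than folding it into the initialization discussion.
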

\begin{proof}
We will use Chebyshev's inequality, and start by computing $\mathbb{E}[\alpha^{2z_n}]$ in order to bound the variance of $\alpha^{z_n}$.
\begin{eqnarray}
\mathbb{E}[\alpha^{2z_n}] &=& \sum_{j=c}^{n-1} \mathbb{E}[\alpha^{2z_n} ~\mid~ z_{n-1} = j] \cdot \Pr[z_{n-1} = j] \nonumber
\\&=& 
\sum_{j=c}^{n-1} \Pr[z_{n-1}= j]\cdot \left(\alpha^{2j+2}\cdot \frac{1}{\alpha^j+1}+ \alpha^{2j} \cdot (1 - \frac{1}{\alpha^j+1})\right) \nonumber
\\&=& \mathbb{E}[\alpha^{2z_{n-1}}]+\sum_{j=c}^{n-1} \Pr[z_{n-1} = j]\cdot (\frac{\alpha^{2j}(\alpha^{2}-1)}{\alpha^j+1}) \nonumber
\leq
\mathbb{E}[\alpha^{2z_{n-1}}] + (\alpha^{2}-1)\mathbb{E}[\alpha^{z_{n-1}}] 
\\&\leq& 	\mathbb{E}[\alpha^{2z_{n-1}}] + (\alpha^{2}-1)\cdot ((n-1)(\alpha-1)+1)~, \label{eq:var1}
\end{eqnarray}
where Ineq. \eqref{eq:var1} is due to Claim~\ref{clm:mean}.
For $n=s$, it holds that $$\mathbb{E}[\alpha^{2z_{s}}] = s^2(\alpha-1)^2+2s(\alpha-1)+1 \leq (\alpha+1)(\alpha-1)\sum_{i=1}^{s} i + (\alpha-1)(\alpha+1)s~,$$
and combined with Eq. \eqref{eq:var1} we get
$$\mathbb{E}[\alpha^{2z_n}] \leq \frac{1}{2}\left( n(3\alpha^2-\alpha^3+\alpha-3)+n^2(\alpha+1)(\alpha-1)^2\right)~.$$

Therefore the variance is bounded by 
\begin{eqnarray}	
	Var[\alpha^{z_n}] &=& \mathbb{E}[\alpha^{2z_n}]- (\mathbb{E}[\alpha^{z_n}])^2 \nonumber
	\\&\leq&
	 \frac{1}{2}n^2(\alpha-1)^2\left((\alpha-1)+\delta^2\right)+
	n\left((\alpha-1)(2\alpha+1-a^2)+2\alpha\delta\right)~.\nonumber
\end{eqnarray} 
Using Chebyshev's inequality and Claim~\ref{clm:mean} we can now conclude the following:
\begin{eqnarray}
\Pr[|\alpha^{z_n}-\mu|\geq 1/2\cdot \mu] &\leq& \frac{Var[\alpha^{z_n}]}{((1/2)\cdot \mu)^2} \leq 
\frac{4Var[\alpha^{z_n}]}{n^2(\alpha-1)^2(1-\delta)^2} \nonumber
\\&\leq& 4\left((\alpha-1)+\delta^2\right)  + \frac{8n\left(\alpha-1+2\delta\right)}{n^2(\alpha-1)^2(1-\delta)^2}~, \label{eq:chebi}
\end{eqnarray} 
since we assume $n \geq s = 1/\delta(\alpha-1)$ it holds that $n \leq n^2(\alpha-1)\delta$. As a result, by Eq.~\eqref{eq:chebi} we get: 
\begin{eqnarray}	
\Pr[|\alpha^{z_n}-\mu|\geq 1/2\cdot \mu] &\leq& 
 4\left((\alpha-1)+\delta^2\right)+ 10\delta\left( 1 + 2\delta/(\alpha-1)  \right)~. \nonumber
\end{eqnarray} 
Since $\alpha = 1 + \Theta(\delta)$, we have that $Var[\alpha^{z_n}] \leq \Theta(\delta)$.
We can use $\delta' = \Theta(\delta)$ in our construction and set parameter $\alpha$ accordingly in order to achieve $$\Pr[|\alpha^{z_n}-\mu|\geq 1/2\cdot \mu] \leq \delta~.$$
\end{proof}

Combining Claim~\ref{clm:mean} and Claim~\ref{clm:cheb} we conclude that  $\alpha^{z_n} \in [n(\alpha-1)/4, 2n(\alpha-1)]$ with probability at least $1- \delta$. Let $S= \log \alpha \cdot z_n- \log(\alpha-1)$. 	
Thus, $S \in [\log(n/4),\log(2n)]$. Recall that after $SC$ gets reset, each $y_i$ fires only if $\log \alpha \cdot z_n - \log(\alpha-1) \leq i$.
As a result, the value of the output $\bar{y}$ is given by $$\dec(\bar{y})=\sum_{i=1}^{S} 2^i = 2^{S+1}-2 \in [n/2-2, 4n-1] ~,$$ which is a constant approximation of $n$ as desired.  

\paragraph{Adaptation to the General Case.} We now explain the modifications needed to handle the general case without the two simplifying assumptions. In order to fire with the correct probability without the spacing guarantee, every time we increase $AC$, we wait until its value gets updated before we attempt to increase it again. In order for the output $\bar{y}$ to output the correct value also during the update of the counter $AC$, we introduce an intermediate layer of neurons $c''_1, \ldots , c''_{\ell}$ that will hold the previous state of $AC$ during the update.

\begin{itemize}
\item{\textbf{Removing Assumption (S1):}} In the $\DetCounter$ construction, we say that there are $k$ \emph{active layers} in round $\tau$ if the value of the counter in round $\tau$ is at most $2^k$ and no neuron in layer $j \geq k+1$ fired. Once we increase the counter, after at most $k+1$ rounds the value is updated. 
During this update operation, the network waits and ignores spikes from $x$ that might occur during this time window.
To implement this waiting mechanism, we introduce a \emph{Wait-Timer} ($WT$) module which uses the $\DetTimer^*$ module\footnote{Recall that $\DetTimer^*$ is a variant of the neural timer in which the time parameter is given as a soft-wired input and the upper bound on this input time is hard coded in the network.}. This $\DetTimer^*$ gets an input from $x_{ac}$ and the time parameter input $\bar{q}$ with $\log \ell$ neurons where $\ell = \log_2 \log_{\alpha} t$ is the number of layers in the module $AC$. The counting neurons $c_1, \ldots, c_{\ell}$ of $AC$ are connected to $\bar{q}$ as follows. Each $q_i$ has an incoming edge from $c_{2^{i-1}}$ with weight $w(c_{2^{i-1}},q_i)=1$ and threshold $b(q_i)=1$. Hence, the value of $\bar{q}$ is at least $k+1$ and at most $4k$ where $k$ is the number of active layers in $AC$. In order for the time parameter to stay stable throughout the update, for each $q_i$ we add a self loop with weight $w(q_i,q_i)=1$. The $WT$ module has two outputs, an inhibitor $g_r$ which fires as long as the timer did not finish the count, and an excitatory $g$ which fires after the count is over. We connect $r_r$ to $x_{ac}$ with weight $w(g_r,x_{ac})=-5$, preventing it from firing while the counter is not updated. We connect $g$ to an additional inhibitor neuron $q_r$ which inhibits the time parameter neurons $q_1, \ldots, q_\ell$ one round after we finished the count. The size of $WT$ is $O(\log \ell) = O(\log \log 1/\delta+ \log \log \log t )$.
\item{\textbf{Removing Assumption (S2):}} Two copies of the counting neurons $c_1, \ldots, c_\ell$ are introduced. The first copy $c'_1, \ldots c'_\ell$ allows us to copy the state of the counter $AC$ once its update proceess is complete. Each $c'_i$ has incoming edges from $c_i$ and the excitatory output of the $WT$ module, each with weight $w(g,c'_i)=w(c_i,c'_i)=1$ and threshold $b(c'_i)=2$. Thus, $c'_i$ fires iff in the previous round both $c_i$ and $g$ fired (implying that neuron $c_i$ was active when the counter finished the update). The second copy $c''_1, \ldots, c''_{\ell}$ holds the previous state of $AC$ during the update of the module. Each $c''_i$ has an incoming edge from $c'_i$ with weight $2$, a self loop with weight $1$, a negative edge from the inhibitor $g_r$ with the weight $(-1)$ and threshold $1$. Note that the inhibition of $c''_i$ occurs on the same round it receives the updated state from neuron $c'_i$.
Finally, the output layer $\bar{y}$ has incoming edges from neurons $c''_1, \ldots , c''_{\ell}$ instead of $c_1, \ldots , c_{\ell}$ with the same weights. 
\end{itemize}
Figure~\ref{fig:wait-timer} illustrates the modifications made to handle the general case. 

\begin{figure} 
	\begin{center}
\includegraphics[scale=0.3]{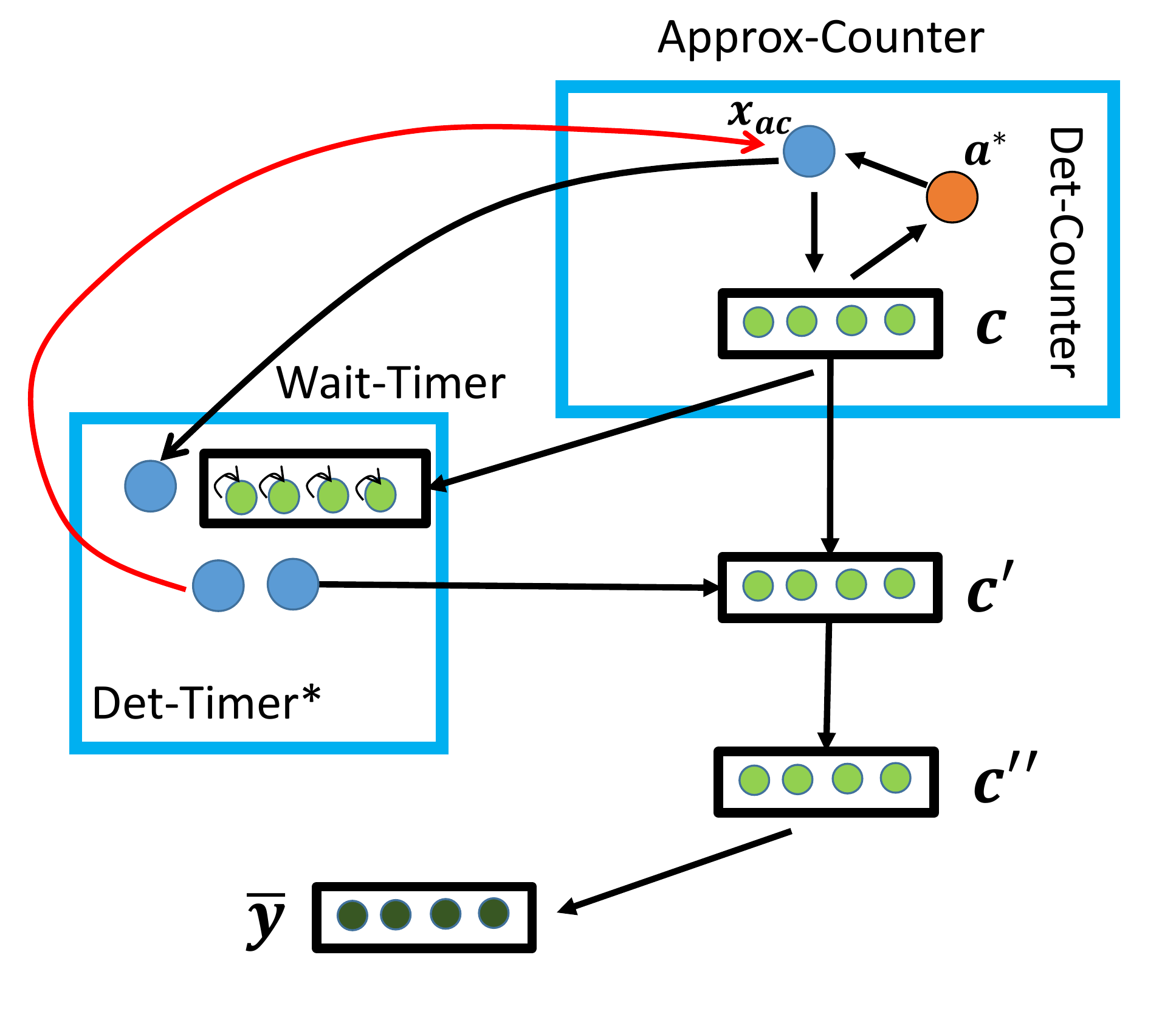}
\caption{A description of the modifications in $\ApproxCounter$ (to handle the general case). The \emph{Wait-Timer} ($WT$) module is implemented as a $\DetTimer^*$ with input from $x_{ac}$ and time input from the counting neurons of $AC$. 
The inhibitor output of $WT$ inhibits the input neuron $x_{ac}$, preventing it from firing during the update process of the $AC$ counter. We have two copies of the counting neurons of $AC$ denoted as $c'$ and $c''$. These copies are used for the output vector $\bar{y}$ to receive a correct input from $AC$ at all times, even during the update process of the $AC$ counter. Once the $WT$ module finishes its count, in order to copy the information from $c$ to $c''$, we use $c'$ as that OR gates between $c$ and the excitatory output of the module $WT$.	
 \label{fig:wait-timer}}
	\end{center}	
\end{figure} 
 
\paragraph{Proof of Thm.~\ref{thm:approx-counting} (for the general case).}
Assume $x$ fired $n$ times up to round $\tau$. If $n \leq s$ we count the number of times $x$ fired explicitly via the $SC$ module.
We note that in round $\tau$ the counter might be still updating the last $O(\log n)$ spikes of $x$. By the $\DetCounter$ construction, the value of the counter is at least $\frac{n-\log n}{2}= \Theta(n)$, and therefore we indeed output a constant approximation of $n$ with probability $1$.
 
Otherwise, $n \geq s$. First note that when we switch from the $SC$ to the $AC$ counter, we might omit at most $\Theta(\log 1/\delta)$ spikes due to the delay in the $\DetCounter$ module. Since $n \geq s = \Theta(1/\delta^2)$ this is negligible, as we want a constant approximation.
Next, we bound the number of times $x$ might have fired during the rounds in which the wait module $WT$ was active. 
As we only omit attempts to increase the counter, by Claim~\ref{clm:cheb} with probability at least $1 - \delta$, the value of counter has been increased for at most $\log_{\alpha} (2n(\alpha-1))$ times.

Each time that the counter value is increased, the waiting module $WT$ is active for at most $4\log_2 \log_{\alpha} 2n(\alpha-1) \leq 4 \log n$ rounds. Thus, in total we omit at most $4 \log n \cdot \log_{\alpha} (2n(\alpha-1)) < 4\sqrt{n}\log^2 n$ spiking events. In addition, since the copy neurons $c''_1, \dots, c''_{\ell}$ might hold the previous value of the counter in round $\tau$, we might lose another factor of two in the output layer. All together, in round $\tau$ the output $\bar{y}$ holds a constant approximation of $n$ and Theorem~\ref{thm:approx-counting} holds for the general case as well.

\section{Missing Details for Synchronizers}\label{sec:append-sync}

\subsection{Missing Details for the Asynchronous Analog of $\DetTimer$}\label{app:det-timer-async}

\paragraph{Proof of Lemma \ref{lem:det-timer-async}.}
The construction starts with $t'=t/2L$ layers of the $\DetTimer$ network. These layers are modified as follows (see Figure~\ref{fig:timer-async} for comparison with the standard construction).
\begin{itemize}
\item Neurons $a_{1,1}$ and $a_{1,2}$ are connected by a chain of length $4 L$. all neurons in the chain as well as $a_{1,2}$ have an incoming edge from the previous neuron in the chain with weight $1$ and threshold $1$. 
\item For every $i \geq 2$, the inhibitor neuron $d_i$ has an incoming edge only from $a_{i,2}$ with weight $w(a_{i,2}, d_i)=1$ and threshold $b(d_i)=1$.
\item For every $i \geq 1$, the neurons $a_{i-1,2}$ and $a_{i,1}$ are connected by a chain of length $L$, instead of a direct edge, where the weight of the edge from the end of the chain to $a_{i,1}$, is $1$.
\item The neuron $a_{\log t',2}$ is an excitatory (rather than  an inhibitory) neuron, and the output neuron $y$ has one incoming edge from $a_{\log t',2}$ with weight $w(a_{\log t',2}, y)=1$ and threshold $b(y)=1$.
\item A newly introduced inhibitor neuron $r$ that has an incoming edge from $a_{\log t',2}$ with weight $w(a_{\log t',2},r)=1$, threshold $b(r)=1$, and negative outgoing edges to all neurons in the timer with weight $-2$ for clean-up purpose. 
\end{itemize}
%
%
\begin{figure} 
\begin{center}
\includegraphics[width=0.5\textwidth]{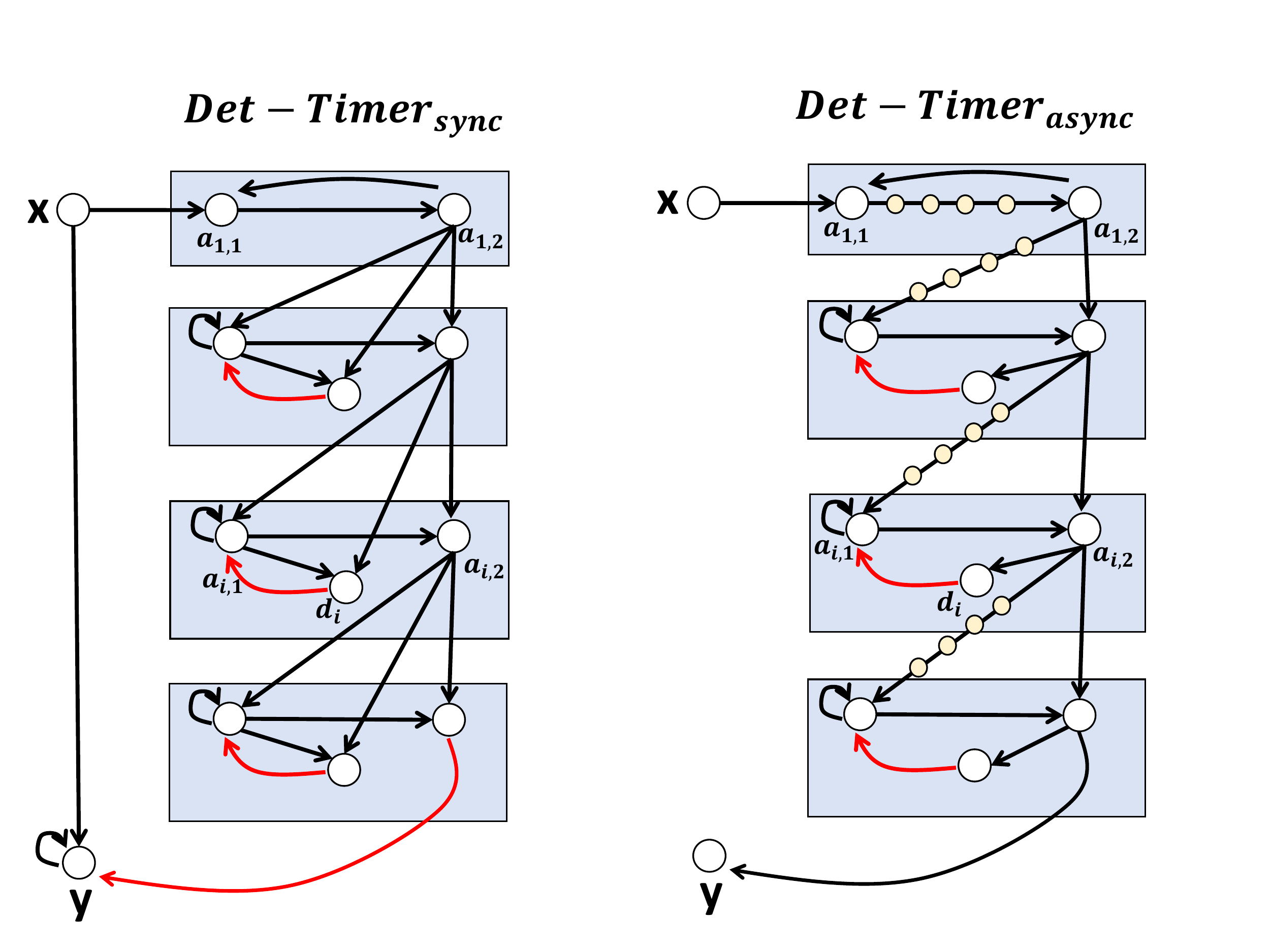}
\caption{$\DetTimer_{\async}$ versus $\DetTimer_{\sync}$. Left: The deterministic timer $\DetTimer_{\sync}$ network. Right: The modified $\DetTimer_{\async}$ network which works in the asynchronous setting. We add a chain of $L$ neurons in the first layer and between neurons $a_{i-1,2}$ and $a_{i,1}$, where $L$ is the upper bound on the response latency of a single edge in  the asynchronous setting.  \label{fig:timer-async}}
\end{center}
\vspace{-20pt}
\vspace{1pt}
\end{figure} 
The correctness is based on the following auxiliary claim.
\begin{claim} \label{clm: timer-async}
Fix a layer $i \geq 2$. Assume that (1) $a_{i-1,2}$ fired for the first time in round $f_{i-1}$, and that (2) it fires every $\tau_{i-1}$ rounds. It then holds that (1a) $a_{i,2}$ fires for the first time in round $f_i$ for $f_i \in [f_{i-1}+ \tau_{i-1}+1, f_{i-1}+ \tau_{i-1}+ L^2+L]$, and that (1b) $a_{i,2}$ fires from that point on for every $\tau_i \in [2\cdot \tau_{i-1}, 2\cdot \tau_{i-1}+(L^2+ L)]$ rounds.
\end{claim}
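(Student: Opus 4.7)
The plan is to trace the dynamics of layer $i$ in three phases---activation of $a_{i,1}$, first firing of $a_{i,2}$, and the reset via $d_i$---tracking the asymmetric edge latencies throughout. I will write $e_1=\ell(a_{i-1,2},a_{i,2})$, $e_2=\ell(a_{i,1},a_{i,2})$, $e_3=\ell(a_{i,2},d_i)$, and $e_4=\ell(d_i,a_{i,1})$ for the ``direct'' edge latencies inside layer $i$; each lies in $[1,L]$ by the model, while self-loops have latency exactly $1$.

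For Phase 1, by hypothesis $a_{i-1,2}$ fires at round $f_{i-1}$; each neuron on the $L$-chain between $a_{i-1,2}$ and $a_{i,1}$ is an OR-gate with threshold $1$, so the propagation delay equals the sum of chain-edge latencies, which lies in $[L,L^2]$. Hence $a_{i,1}$ fires for the first time at some round $r_1\in[f_{i-1}+L,\, f_{i-1}+L^2]$, and because of its self-loop of latency $1$, it keeps firing in every round until $d_i$ inhibits it.

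For Phase 2, the neuron $a_{i,2}$ acts as an AND of $a_{i-1,2}$ and $a_{i,1}$: it fires at round $\tau$ iff $a_{i-1,2}^{\tau-e_1}=1$ and $a_{i,1}^{\tau-e_2}=1$. I will first observe that the first spike of $a_{i-1,2}$ cannot trigger $a_{i,2}$, because that would require $r_1\leq f_{i-1}+e_1-e_2\leq f_{i-1}+L-1$, contradicting $r_1\geq f_{i-1}+L$. Let $k\geq 1$ be the smallest integer with $k\tau_{i-1}+e_1-e_2\geq r_1-f_{i-1}$; then the AND condition is met at round $f_i=f_{i-1}+k\tau_{i-1}+e_1$. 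A short case analysis on whether $k=1$ or $k\geq 2$ (handling $\tau_{i-1}$ larger or smaller than $L^2+L$) shows that $f_i\in[f_{i-1}+\tau_{i-1}+1,\, f_{i-1}+\tau_{i-1}+L^2+L]$, the upper bound absorbing the worst case $r_1-f_{i-1}=L^2$ together with $e_2-e_1=L-1$. After $a_{i,2}$ fires at $f_i$, the inhibitor $d_i$ fires at $f_i+e_3$ and shuts down $a_{i,1}$ by round $f_i+e_3+e_4\leq f_i+2L$; the layer is then re-armed, and applying Phases 1--2 to the next spike of $a_{i-1,2}$ at round $f_{i-1}+2\tau_{i-1}$ places the second firing of $a_{i,2}$ at $f_i'=f_{i-1}+3\tau_{i-1}+e_1'$ for some (possibly different) $e_1'\in[1,L]$, so $\tau_i=f_i'-f_i\in[2\tau_{i-1},\, 2\tau_{i-1}+L^2+L]$.

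The main obstacle will be Phase 2: one must simultaneously rule out spurious early firings of $a_{i,2}$ (inputs aligning before $a_{i,1}$ has stabilized) and verify that the legitimate alignment actually falls inside the claimed window, uniformly over all latency assignments. Since $e_1-e_2$ swings over $[-(L-1),L-1]$ and $r_1-f_{i-1}$ over $[L,L^2]$, this is exactly what forces the $L^2+L$ slack term in the bounds for both $f_i$ and $\tau_i$.
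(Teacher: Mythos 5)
Your approach mirrors the paper's: track the signal through the inter-layer chain, treat $a_{i,2}$ as an AND of the direct $a_{i-1,2}$ signal and the delayed, self-sustained $a_{i,1}$ signal, then argue periodicity via the $d_i$ reset. Phases~1--2 are correct, and characterizing $f_i$ via the smallest $k$ with $k\tau_{i-1}+e_1-e_2\geq r_1-f_{i-1}$ is clean; the ``case analysis on $k=1$ vs.\ $k\geq 2$'' is unnecessary, since minimality of $k$ gives $(k-1)\tau_{i-1}<r_1-f_{i-1}+e_2-e_1\leq L^2+L-e_1$ directly, which already yields the stated upper bound on $f_i$.

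The gap is in Phase~3. First, ``$e_1'$'' is ill-defined: $e_1=\ell(a_{i-1,2},a_{i,2})$ is a fixed latency and cannot differ across phases; with $e_1'=e_1$ and $f_i=f_{i-1}+\tau_{i-1}+e_1$ your formula would force $\tau_i=2\tau_{i-1}$ exactly, so the claimed $L^2+L$ slack in $\tau_i$ does not follow from your derivation. More substantively, re-applying Phases~1--2 ``to the next spike of $a_{i-1,2}$ at round $f_{i-1}+2\tau_{i-1}$'' silently assumes that after the reset $a_{i,1}$ is re-armed by the chain spike originating from that $a_{i-1,2}$ spike. But the chain spike from the $a_{i-1,2}$ firing at $f_{i-1}+k\tau_{i-1}$ --- the one whose direct signal just triggered $a_{i,2}$ --- reaches $a_{i,1}$ only at round $f_{i-1}+k\tau_{i-1}+d$, which lies \emph{after} the reset round $f_i+e_3+e_4=f_{i-1}+k\tau_{i-1}+e_1+e_3+e_4$ whenever $d>e_1+e_3+e_4$; this is entirely possible since $d$ can be as large as $L^2$ while $e_1+e_3+e_4\leq 3L$. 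In that regime $a_{i,1}$ re-arms from this in-flight spike, and the next alignment at $a_{i,2}$ occurs only one (not two) periods of $\tau_{i-1}$ later, so the claimed lower bound $\tau_i\geq 2\tau_{i-1}$ is not established by your argument. The paper's own proof of (1b) is also terse and does not spell out how in-flight chain spikes are handled, but it does explicitly invoke $\tau_{i-1}\geq 4L$ to ensure $a_{i,1}$ is shut down at least $L$ rounds before the next $a_{i-1,2}$ spike --- a condition you leave implicit in the phrase ``the layer is then re-armed.''
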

\begin{proof}
Assume that neuron $a_{i-1,2}$ fires every $\tau_{i-1}$ rounds starting round $f_{i-1}$.
It then holds that $a_{i,2}$ gets the spike from $a_{i-1,2}$ strictly \emph{before} the spike of $a_{i,1}$.
Specifically, it gets the spike from $a_{i-1,2}$ by round
$\tau\leq f_{i-1}+L$, and it receives the spike from $a_{i,1}$ in some round $\tau' \geq f_{i-1}+L+1$. 
Note that it is crucial that the spike from $a_{i-1,2}$ arrives earlier to $a_{i,2}$, as otherwise $a_{i,2}$ will fire in round $\tau$. 
As a result, the first time $a_{i,2}$ fires is after round $f_{i-1}+\tau_{i-1}+1$ and therefore $f_i \geq f_{i-1}+\tau_{i-1}+1$. 
Due to the self loop on $a_{i,1}$, neuron $a_{i,2}$ gets a spike from $a_{i,1}$ in every round $\tau''\geq \tau'$. 
Because the latencies are fixed, $a_{i,2}$ gets a signal from $a_{i-1,2}$ every $\tau_{i-1}$ rounds, and therefore $a_{i,2}$ fires by round $\tau'+\tau_{i-1}$. Since each edge has latency of at most $L$, it holds that $\tau' \leq f_{i-1} + L^2+L$, hence $f_i \leq f_{i-1} + L^2+L + \tau_{i-1}$ and (1a) follows. 

We now show (1b). We first observe that $a_{i,1}$ stops firing at least $L$ rounds \emph{before} the next firing of $a_{i-1,2}$.
This holds since once $a_{i,2}$ fires in round $f_i$, after at most $L$ rounds the inhibitor $d_i$ fires, and after at most $2L$ rounds neuron $a_{i,1}$ is inhibited. 
Since $\tau_j \geq 4L$ (due to the chain in the first layer), it indeed holds that in the next round when $a_{i-1,2}$ fires, no neuron in layer $i$ fires.  
Since the latency of each edge is fixed and $a_{i-1,2}$ fires every $\tau_{i-1}$ rounds by our assumption, we conclude that $a_{i,2}$ fires every $\tau_i$ rounds where $\tau_i \in [2 \tau_{i-1}, 2\tau_{i-1}+L^2+L]$.
\end{proof}

\begin{claim}
Assume that $x$ fired in round $\tau_0$. Then for every $i \geq 1$ it holds that:
(1) the neuron $a_{i,2}$ fires for the first time during the interval $[\tau_0+2^{i}\cdot 2L, \tau_0+2^{i}\cdot8L^2]$ and (2) it fires every $\tau_i$ rounds for $\tau_i \in [2^{i}\cdot 2L, 2^{i}\cdot(4L^2)]$. 
\end{claim}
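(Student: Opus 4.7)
The plan is to prove the claim by induction on $i$, invoking Claim~\ref{clm: timer-async} for the inductive step and handling the first layer directly as a base case. The overall structure mirrors the synchronous analysis of Claim~\ref{clm:layers}, but each transition time gets inflated by up to a factor of $L$ to account for edge latencies in $[1,L]$.

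For the base case $i=1$, once $x$ fires in round $\tau_0$, the signal reaches $a_{1,1}$ in some round in $[\tau_0+1,\tau_0+L]$, then propagates along the inserted chain of $4L$ auxiliary neurons to $a_{1,2}$. Since each edge on that chain has latency in $[1,L]$, the traversal takes between $4L$ and $4L^2$ rounds, placing $f_1$ inside $[\tau_0 + 4L,\ \tau_0 + O(L^2)]$. The feedback cycle from $a_{1,2}$ back to $a_{1,1}$ then imposes a fixed period $\tau_1$ in the same range. Crucially, this period satisfies $\tau_1 \geq 4L$, which is exactly the hypothesis required in the proof of Claim~\ref{clm: timer-async} to ensure the inhibitor $d_i$ has time to clear $a_{i,1}$ before the next signal from $a_{i-1,2}$ arrives.

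For the inductive step, the hypothesis gives $f_{i-1}$ and $\tau_{i-1}$ in the claimed ranges. Applying Claim~\ref{clm: timer-async},
\[
f_i \in [f_{i-1}+\tau_{i-1}+1,\ f_{i-1}+\tau_{i-1}+L^2+L], \qquad \tau_i \in [2\tau_{i-1},\ 2\tau_{i-1}+L^2+L].
\]
The lower bounds substitute cleanly to yield $f_i \geq \tau_0 + 2^i \cdot 2L$ and $\tau_i \geq 2^i\cdot 2L$. For the upper bounds, the additive slack $L^2 + L$ accrued at each level of the recursion is dominated by the geometric growth $2^i$, so after tightening the inductive hypothesis by a fixed additive margin (e.g.\ $\tau_{i-1} \leq 2^{i-1}\cdot 4L^2 - 2L^2$), the asserted $O(2^i L^2)$ upper bounds carry through the induction.

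The main obstacle is this slack bookkeeping: a literal direct induction produces an accumulated error of $\sum_{j=1}^{i}(L^2+L) = O(i L^2)$ in the upper bounds, which is not obviously inside $2^i \cdot 8L^2$ and $2^i \cdot 4L^2$ respectively. The fix---strengthening the inductive hypothesis by a constant additive margin so that $2\tau_{i-1}+L^2+L$ still fits inside the tightened bound---works for every $L\geq 1$ by a routine arithmetic check. A secondary subtlety, implicit in the proof of Claim~\ref{clm: timer-async}, is verifying that the spike from $a_{i-1,2}$ always arrives at $a_{i,2}$ strictly before the spike from $a_{i,1}$ does; this is what allows $a_{i,2}$ to act as a clean AND-gate in the asynchronous setting, and relies on the inserted chain of length $L$ between $a_{i-1,2}$ and $a_{i,1}$.
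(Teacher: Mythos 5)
Your proposal follows the same route as the paper: a base case at layer $1$ giving $f_1,\tau_1$ in the stated ranges, followed by an induction that invokes Claim~\ref{clm: timer-async} at each layer. The paper's proof merely asserts the inductive step ``in an inductive manner,'' so your explicit treatment of the slack accumulation and the strengthened inductive hypothesis $\tau_{i-1}\leq 2^{i-1}\cdot 4L^2-(L^2+L)$ (which is maintained exactly, since $\tau_i\leq 2\tau_{i-1}+L^2+L$ preserves the margin) is a genuine tightening of the argument; one small imprecision is that a literal unrolling of $\tau_i\leq 2\tau_{i-1}+(L^2+L)$ gives a geometrically growing error $(2^{i-1}-1)(L^2+L)$ rather than $\sum_j(L^2+L)=O(iL^2)$, but this does not affect the conclusion since the margin argument (or equivalently the unrolled geometric sum) still fits inside the claimed $2^i\cdot 4L^2$ for all $L\geq 1$.
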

%
\begin{proof}
Once the input neuron $x$ fired in round $\tau_0$, the neuron $a_{1,2}$ fires for the first time in round $f_1 \in [\tau_0 + 4L,\tau_0 + 4L^2 + L]$ and continue to fire every $\tau_1$ rounds for $\tau_1 \in [4L , 4L^2 +L]$. This is due to the chain between $a_{1,1}$ and $a_{1,2}$ and the fact that the latency $\ell(e)$ is fixed for every $e$.
Using Claim~\ref{clm: timer-async} in an inductive manner, we conclude that for every $i \geq 1$:
(1) $a_{i,2}$ fires every $\tau_i \in [2^i\cdot 2L, 2^i \cdot 4L^2]$ rounds,
(2) $a_{i,2}$ fires for the first time in round $f_i \in [\tau_0 + 2^{i}\cdot 2L, \tau_0 + 2^{i}\cdot 8L^2 ] $. 	
\end{proof}
Since the edge between neuron $a_{\log t',2}$ and the output neuron has latency at most $L$, we conclude that if the input neuron $x$ fires in round $\tau_0$, the output neuron fires in round $\tau \in [\tau_0+2Lt', \tau_0+9L^2t']$. Because $t'=t/2L$, given that the input $x$ fired in round $\tau$, the output neuron fires between round $\tau+t$ and round $\tau+5Lt$ and Lemma \ref{lem:det-timer-async} follows.

	%
\end{document}